\DeclareMathOperator*{\argmax}{arg\,max}
\newtheorem{assumption}[theorem]{Assumption}
\newtheorem{tab}{Table}
\newcommand{\eps}{\varepsilon}
\newcommand{\epsBayes}{\mathcal{E}^{\textrm{Bayes}}}
\newcommand{\CBayes}{C^{\textrm{Bayes}}}
\newcommand{\bp}{\bm{\mathbf{p}}}
\newcommand{\bw}{\bm{\mathbf{w}}}
\newcommand{\bS}{\bm{\mathbf{S}}}
\newcommand{\bX}{\bm{\mathbf{X}}}
\newcommand{\bY}{\bm{\mathbf{Y}}}
\newcommand{\bZ}{\bm{\mathbf{Z}}}
\newcommand{\hatp}{\hat{p}}
\newcommand{\Best}{\widehat{\mathcal{E}}_\eps(\bX_1,\bX_2)}
\begin{document}

\title{Learning to Benchmark: Determining Best Achievable Misclassification Error from Training Data}

\author{\name Morteza Noshad \email noshad@umich.edu \newline
       \addr Department of Electrical Engineering and Computer Science\newline
       University of Michigan\newline
       Ann Arbor, MI 48109, USA
       \AND
       \name Li Xu \email lixu@ict.ac.cn \newline
       \addr
       (Corresponding Author)\newline
       Institute of Computing Technology\newline
       Chinese Academy of Sciences\newline
       Beijing 100190, China
       \AND
       \name Alfred Hero \email hero@eecs.umich.edu \newline
       \addr Department of Electrical Engineering and Computer Science\newline
       University of Michigan\newline
       Ann Arbor, MI 48109, USA}

%\editor{To be determined}

\maketitle

\begin{abstract}
We address the problem of learning to benchmark the best achievable classifier performance. In this problem the objective is to establish statistically consistent estimates of the Bayes misclassification error rate without having to learn a Bayes-optimal classifier. Our learning to benchmark framework improves on previous work on learning bounds on Bayes misclassification rate since it learns the {\it exact} Bayes error rate instead of a bound on error rate.  We propose a benchmark learner based on an ensemble of $\eps$-ball estimators and Chebyshev approximation. Under a smoothness assumption on the class densities we show that our estimator achieves an optimal (parametric) mean squared error (MSE) rate of $O(N^{-1})$, where $N$ is the number of samples.

%We illustrate the proposed benchmark learning procedure for monitor the performance of classifiers.
Experiments on both simulated and real datasets establish that our proposed benchmark learning algorithm produces estimates of the Bayes error that are more accurate than previous approaches for learning bounds on Bayes error probability.
\end{abstract}

\begin{keywords}
Divergence estimation, Bayes error rate, $\eps$-ball estimator, classification, ensemble estimator, Chebyshev polynomials.
\end{keywords}

\section{Introduction}
This paper proposes a framework for empirical estimation  of minimal achievable classification error, i.e., Bayes error rate, directly from training data, a framework we call {\em learning to benchmark}. Consider an observation-label pair $(X,T)$ takes values in $\mathbb{R}^d \times \{1,2,\ldots,\lambda \}$. For class $i$, the prior probability is $\Pr\{T=i\}=p_i$ and $f_i$ is the conditional distribution function of $X$ given that $T=i$. Let $\bp=(p_1,p_2,\ldots,p_\lambda )$. A classifier $C:\mathbb{R}^d \to \{1,2,\ldots,\lambda \}$ maps each $d$-dimensional observation vector $X$ into one of $\lambda $ classes. The misclassification error rate of $C$ is defined as
%\vspace{-0.2cm}
\begin{equation}
\mathcal{E}_C=\Pr(C(X)\ne T),
\end{equation}
which is the probability of classification associated with classifier function $C$. Among all possible classifiers, the Bayes classifier achieves minimal misclassification rate and has the form of a maximum {a posteriori} (MAP) classifier:
\begin{equation}
\CBayes(x)=\argmax_{1\le i\le \lambda } \Pr(T=i|X=x),
\end{equation}
The Bayes misclassification error rate is
\begin{equation}
\epsBayes_{\bp}(f_1,f_2,\ldots,f_\lambda )=\Pr(\CBayes(X)\ne T).
\label{eq:Bayes_error}
\end{equation}
%%\vspace{-2}

The problem of learning to bound the Bayes error probability (\ref{eq:Bayes_error}) has generated much recent interest \citep{wang2005divergence}, \citep{poczos2012nonparametric}, \citep{berisha2016empirically},\citep{noshad2018hash}, \citep{moon2018ensemble}.
%estimation of Bayes error rate or its bounds have been made based on some distance divergence measures \citep{ben1978f,kailath1967divergence,henze1999multivariate,sricharan2013ensemble}.
Approaches to this problem have proceeded in two stages: 1) specification of lower and upper bounds that are functions of the class probabilities (priors) and the class-conditioned distributions (likelihoods); and 2) specification of good empirical estimators of these bounds given a data sample.  The class of $f$-divergences \citep{ali1966general}, which are measures of dissimilarity between a pair of distributions, has been a fruitful source of bounds on the Bayes error probability and include: %\citep{berisha2016empirically,noshad2018rate,moon2018ensemble} The $f$-divergence or Ali-Silvey distance, first introduced in \citep{ali1966general}, is a useful measure of the distribution distance, which is a key notion in the field of information theory and machine learning \citep{csiszar2004information}.
the Kullback-Leibler (KL) divergence \citep{kullback1951information}, the R\'enyi divergence \citep{renyi1961measures} the Bhattacharyya (BC) divergence \citep{bhattacharyya1946measure},  Lin's divergences \citep{lin1991divergence}, and the Henze-Penrose (HP) divergence \citep{henze1999multivariate}. For example, the HP divergence

%The Bhattacharyya divergence is a special case of the Chernoff $\alpha$-divergence \citep{chernoff1952measure} with $\alpha=1/2$, which is defined as
%\begin{equation}
%I_{\frac{1}{2}}(f_1,f_2)=\int \!\sqrt{p_1p_2f_1(x)f_2(x)} dx,
%\end{equation}
%This divergence provides upper and lower bounds on Bayes error, as shown by  \citep{kailath1967divergence}:
%\begin{equation}\label{eq:BCbound}
% \frac{1}{2}-\sqrt{\frac{1}{4}-I_{\frac{1}{2}}(f_1,f_2)^2}\le\epsBayes_{\bp}(f_1,f_2)\le I_{\frac{1}{2}}(f_1,f_2).
% \end{equation}
%A more recent $f$-divergence, the Henze-Penrose (HP) divergence of \citep{henze1999multivariate}, provides another bound on the Bayes error probability
%\begin{align}\label{BC_bound}
%&D_{\bp}(f_1,f_2):=\nonumber\\
%&\qquad \frac{1}{4p_1p_2}\left[\int \frac{(p_1 f_1(x)-p_2 f_2(x))^2}{p_1 f_1(x)+p_2 f_2(x)}dx-(p_1-p_2)^2\right].
%\end{align}
\begin{align}\label{HP_divergence}
&D_{\bp}(f_1,f_2):=\frac{1}{4p_1p_2}\left[\int \frac{(p_1 f_1(x)-p_2 f_2(x))^2}{p_1 f_1(x)+p_2 f_2(x)}dx-(p_1-p_2)^2\right].
\end{align}
provides the bounds \citep{berisha2016empirically}:
%Berisha \emph{et. al. } provided bounds on the Bayes misclassification error probability based on the HP divergence:
\begin{align}\label{HP_bound}
&\frac{1}{2}-\sqrt{4p_1p_2D_{\bp}(f_1,f_2)+(p_1-p_2)^2}\le\epsBayes_{\bp}(f_1,f_2)\le 2p_1p_2(1-D_{\bp}(f_1,f_2)).
\end{align}
%\begin{align}\label{HP_bound}
%&\frac{1}{2}-\sqrt{4p_1p_2D_{\bp}(f_1,f_2)+(p_1-p_2)^2}\nonumber\\
%&\quad \le\epsBayes_{\bp}(f_1,f_2)\le 2p_1p_2(1-D_{\bp}(f_1,f_2)).
%\end{align}
%The bound \eqref{HP_bound} was is tighter than \eqref{eq:BCbound} when $p_1=p_2=1/2$ and a consistent non-parametric estimator of the bound was specified.
A consistent empirical estimator of the HP divergence (\ref{HP_divergence}) was given in \citep{friedman}, and this was used to learn the bounds (\ref{HP_bound}) in \citep{berisha2016empirically}.  Many alternatives to the HP  divergence have been used to solve the learning to bound problem including the Fisher Information \citep{berisha2014empirical}, the Bhattacharrya divergence \citep{berisha2016empirically}, the R\'enyi divergence \citep{noshad2018hash}, and the Kullback-Liebler divergence \citep{poczos2012nonparametric, moon2014multivariate}.

%The search for tighter and tighter learnable bounds on Bayes error    \citep{berisha2016empirically,noshad2018rate,moon2018ensemble}.
This paper addresses the ultimate learning to bound problem, which is to learn the tightest possible bound: the {\em exact} the Bayes error rate.  We call this the {\em learning to benchmark} problem.  Specifically, the contributions of this paper are as follows:
\begin{itemize}
\item A simple base learner of the Bayes error is proposed for general binary classification, its MSE convergence rate is derived, and it is shown to converge to the exact Bayes error probability (see Theorem~\ref{binaryestimatortheorem}). Furthermore, expressions for the rate of convergence are specified and we prove a central limit theorem for the proposed estimator (Theorem~\ref{thm:clt}).

\item An ensemble estimation technique based on Chebyshev nodes is proposed. Using this method a weighted ensemble of benchmark base learners is proposed having optimal (parametric) MSE convergence rates (see Theorem~\ref{thm:ensemble_theorem}). As contrasted to the ensemble estimation technique discussed in \citep{moon2018ensemble}, our method provides closed form solutions for the optimal weights based on Chebyshev polynomials (Theorem~\ref{thm:chebyshev_optimization}).

%Using the form of these expressions, a weighted ensemble of benchmark base learners is proposed having optimal (parametric) MSE convergence rates (see Theorem~\ref{ensemble_theorem}). The optimal weights can be determined offline independently of the data.
\item An extension of the ensemble benchmark learner is obtained for estimating the multiclass Bayes classification error rate and its MSE convergence rate is shown to achieve the optimal rate (see Theorem~\ref{multi-crlassestimatortheorem}).

%\item A forward selection Bayes error based feature selection algorithm is proposed. The advantage of this feature selection method is that it does not depend on any specific model and it is generic (see Algorithm~\ref{algo:feature_selection}).
\end{itemize}

The rest of the paper is organized as follows. In Section \ref{sec:binary}, we introduce our proposed Bayes error rate estimators for the binary classification problem. In Section \ref{sec:ensemble} we use the ensemble estimation method to improve the convergence rate of the base estimator.
We then address the multi-class classification problem in Section \ref{sec:multi-class}. In Section \ref{sec:experiment}, we conduct numerical experiments to illustrate the performance of the estimators. Finally, we discuss the future work in Section \ref{sec:conclusion}.
%In Section \ref{sec:BEFS} we propose a feature selection based on Bayes error estimation. 

\section{Benchmark learning for Binary Classification}\label{sec:binary}
Our proposed learning to benchmark framework is based on an exact $f$-divergence representation (not a bound) for the minimum achievable binary misclassification error probability. First, in section \ref{sec:ball_graph_estimator} we propose an accurate estimator of the density ratio ($\eps$-ball estimator), and then in section \ref{section:base_estimator}, based on the optimal estimation for the density ratio, we propose a base estimator of Bayes error rate. 

\subsection{Density Ratio Estimator }\label{sec:ball_graph_estimator}

Consider the independent and identically distributed (i.i.d) sample realizations  $\bX_1=\big\{X_{1,1},X_{1,2},$ $\ldots,X_{1,N_1}\big\}\in \mathbb{R}^{N_1\times d}$ from $f_1$ and $\bX_2=\big\{X_{2,1},X_{2,2},\ldots,X_{2,N_2}\big\}\in \mathbb{R}^{N_2\times d}$  from $f_2$. Let $\eta:=N_2/N_1$ be the ratio of two sample sizes.
The problem is to estimate the density ratio $U(x):=\frac{f_1(x)}{f_2(x)}$ at each of the points of the set $\bX_2$. In this paper similar to the method of \citep{noshad2017direct} we use the ratio of counts of nearest neighbor samples from different classes to estimate the density ratio at each point. However, instead of considering the $k$-nearest neighbor points, we use the $\epsilon$-neighborhood (in terms of euclidean distance) of the points. This allows us to remove the extra bias due to the discontinuity of the parameter $k$ when using an ensemble estimation technique. As shown in Figure. \ref{fig:eps_ball_estimator}, $\eps$-ball density ratio estimator for each point $Y_i$ in $\bY$ (shown by blue points) is constructed by the ratio of the counts of samples in $\bX$ and $\bY$ which fall within $\eps$-distance of $Y_i$.

\begin{figure}[!h]
	\centering
	\includegraphics[width=0.7\textwidth]{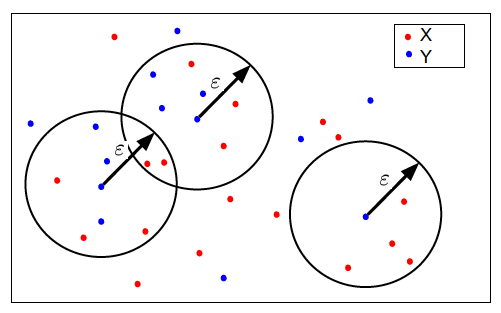}
	%\vspace{-0.4cm}
	\caption{ $\eps$-ball density ratio estimator for each point $Y_i$ in $\bY$ (shown by blue points) is constructed by the ratio of the counts of samples in $\bX$ and $\bY$ which fall within $\eps$-distance of $Y_i$.
	\label{fig:eps_ball_estimator}}%%\vspace{-0.5cm}
\end{figure}

\begin{definition}
For each point $X_{2,i}\in\bX_2$, let $N_{1,i}^{(\eps)}$ (\emph{resp.} $N_{2,i}^{(\eps)}$) be the number of points belonging to $\bX_1$ (\emph{resp.} $\bX_2$) within the $\eps$-neighborhood ($\eps$-ball) of $X_{2,i}$. Then the density ratio estimate is given by
\begin{align}\label{eq:ratio_estimator}
    \widehat{U}^{(\eps)}(X_{2,i}):= \eta N_{1,i}^{(\eps)}\left/N_{2,i}^{(\eps)}\right..
\end{align}
\end{definition}

Sometimes in this paper we abbreviate $\widehat{U}(X_{2,i})$ as $\widehat{U}_i^{(\eps)}$. %The running time of NNR is $O(kN\log N)$, where $N:=\max(N_1,N_2)$.

\subsection{Base learner of Bayes error}\label{section:base_estimator}
The Bayes error rate corresponding to class densities $f_1,f_2$, and the class probabilities vector $\bp=(p_1,p_2)$ is
\begin{align}
&\epsBayes_{\bp}(f_1,f_2)=\Pr(\CBayes(X)\ne T)\nonumber\\
=&\,\int_{p_1f_1(x)\le p_2f_2(x)} p_1f_1(x) dx+\hspace{-0.1cm}\int_{p_1f_1(x)\ge p_2f_2(x)} p_2f_2(x) dx,\label{eq:bayeserror8}
\end{align}
where $\CBayes(X)$ is the classifier mapping $\CBayes: \mathcal X \rightarrow \{1,2\}$. The Bayes error \eqref{eq:bayeserror8} can be expressed as
\begin{align}
\epsBayes_{\bp}(f_1,f_2)&\,=\frac{1}{2}\int p_1f_1(x)+p_2f_2(x) -|p_1f_1(x)-p_2f_2(x)|dx\nonumber\\
&\,=p_2+\frac{1}{2}\int (p_1f_1(x)-p_2f_2(x))-|p_1f_1(x)-p_2f_2(x)|dx\nonumber\\
&\,=\min(p_1,p_2)-\int f_2(x) t\bigg(\frac{f_1(x)}{f_2(x)}\bigg)dx\nonumber\\
&\,=\min(p_1,p_2)-\mathbb{E}_{f_2}\left[t\bigg(\frac{f_1(X)}{f_2(X)}\bigg)\right],\label{eq:BinaryBayesError}
\end{align}
%\begin{align}
%&\epsBayes_{\bp}(f_1,f_2)\nonumber\\
%&\,=\frac{1}{2}\int p_1f_1(x)+p_2f_2(x) -|p_1f_1(x)-p_2f_2(x)|dx\nonumber\\
%&\,=p_2+\frac{1}{2}\int (p_1f_1(x)-p_2f_2(x))-|p_1f_1(x)-p_2f_2(x)|]dx\nonumber\\
%&\,=\min(p_1,p_2)-\int f_2(x) t\bigg(\frac{f_1(x)}{f_2(x)}\bigg)dx\nonumber\\
%&\,=\min(p_1,p_2)-\mathbb{E}_{f_2}\left[t\bigg(\frac{f_1(X)}{f_2(X)}\bigg)\right],\label{eq:BinaryBayesError}
%\end{align}
%%\vspace{-0.3cm}
where $$t(x):=\max(p_2-p_1x,0)-\max(p_2-p_1,0)$$
is a convex function. The expectation $\mathbb{E}_{f_2}\left[t\bigg(\frac{f_1(X)}{f_2(X)}\bigg)\right]$ is an $f$-divergence between density functions $f_1$ and $f_2$.
The $f$-divergence or Ali-Silvey distance, introduced in \citep{ali1966general}, is a measure of the dissimilarity between a pair of distributions.
Several estimators of $f$-divergences have been introduced \citep{berisha2016empirically,wang2005divergence,noshad2018hash,poczos2012nonparametric}.
Expressions for the bias and variance of these estimators are derived under assumptions that the function $t$ is differentiable, which is not true here. In what follows we will only need to assume that the divergence function $t$ is Lipschitz continuous.

We make the following assumption on the densities. Note that these are similar to the assumptions made in the previous work \citep{singh2014exponential,noshad2018rate,moon2018ensemble}.

{\bf Assumptions:}

{\bf A.1. }  The densities functions $f_1$ and $f_2$ are both lower bounded by $C_L$ and upper bounded by $C_U$ with $C_U\ge C_L>0$;

{\bf A.2. } The densities $f_1$ and $f_2$ are H\"older continuous with parameter $0< \gamma\le 1$, that is there exists constants $H_1,H_2>0$ such that
\begin{equation}
|f_i(x_1)-f_i(x_2)|\le H_i||x_1-x_2||^\gamma,
\end{equation}
for $i=1,2$ and $x_1,x_2\in \mathbb{R}$.

Explicit upper and lower bounds $C_U$ and $C_L$ must be specified for the implementation of the base estimator below. However, the lower and upper bounds do not need to be tight and only affect the convergence rate of the estimator. We conjecture that this assumption can be relaxed, but this is left for future work.

Define the base estimator of the Bayes error
 \begin{equation}\label{BayesEstimator}
\Best:=\min(\hat{p}_1,\hat{p}_2)-\frac{1}{N_2}\sum_{i=1}^{N_2}\tilde{t}\left(\widehat{U_i}\right),
\end{equation}
where $\tilde{t}(x):=\max(t(x),t(C_L/C_U))$, and empirical estimates vector $\hat{\mathbf{p}}=(\hat{p}_1,\hat{p}_2)$ is obtained from the relative frequencies of the class labels in the training set. $\widehat{U_i}$ is the estimation of the density ratio at point $\bX_{2,i}$, which can be computed based on $\eps$-ball estimates.

\begin{remark}
The definition of Bayes error in \eqref{eq:bayeserror8} is symmetric, however, the definition of Bayes error estimator in \eqref{BayesEstimator} is asymmetric with respect to $\bX_1$ and $\bX_2$. Therefore, we might get different estimations from $\Best$ and $\widehat{\mathcal{E}}_\epsilon(\bX_2,\bX_1)$, while both of these estimations asymptotically converge to the true Bayes error. It is obvious that any convex combination of $\Best$ and $\widehat{\mathcal{E}}_\epsilon(\bX_2,\bX_1)$ defined is also an estimator of the Bayes error (with the same convergence rate). In particular, we define the following symmetrized Bayes error estimator:
\begin{align}\label{eq:Bayes_est_symmetric}
    \mathcal{E}^*_\epsilon(\bX_2,\bX_1) &:= \frac{N_2}{N} \Best + \frac{N_1}{N} \widehat{\mathcal{E}}_\epsilon(\bX_2,\bX_1)\nonumber\\
    &= \min(\hat{p}_1,\hat{p}_2)-\frac{1}{N}\sum_{i=1}^{N}\tilde{t}\left(\widehat{U_i}\right),
\end{align}
where consistent with the definition in \eqref{eq:ratio_estimator}, for the points in $\bX_1$, $\widehat{U}^{(\eps)}_i$ is defined as the ratio of the $\eps$-neighbor points in $\bX_2$ to the number of points in $\bX_1$, while for the points in $\bX_2$ is defined as the ratio of the points in $\bX_1$ to the number of points in $\bX_2$:
\begin{align}
    \widehat{U}^{(\eps)}_i:= 
    \begin{cases}
       \eta N_{1,i}^{(\eps)}\left/N_{2,i}^{(\eps)}\right. &\quad 1\le i\le N_2\\
        N_{2,i}^{(\eps)}\left/\eta N_{1,i}^{(\eps)}\right. &\quad N_2\le i\le N.\\
     \end{cases}
\end{align}
\end{remark}

%%%%%%%%%%%%%%%%%%%%%%%%%%%%%
\begin{algorithm} \label{algo:Base_learner}
\DontPrintSemicolon
\SetKwInOut{Input}{Input}\SetKwInOut{Output}{Output}
% Input:
\Input{Data sets $\bX=\{X_1,...,X_{N_1}\}$, $\bY=\{Y_1,...,Y_{N_2}\}$}

\BlankLine
 
$\bZ\leftarrow \bX \cup \bY$    \;
\For {each point $Y_i$ in $Y$}{
		$\bS_i$: Set of $\eps$-ball points of $Y_i$ in $\bZ$\\
		 $\widehat{U}_i \leftarrow |\bS_i\cap \bX|/|\bS_i \cap \bY|$}
$\mathcal{E}^*_\epsilon(\bX_2,\bX_1) \leftarrow\min(N_1,N_2)/(N_1+N_2)-\frac{1}{N}\sum_{i=1}^{N}\tilde{t}\left(\widehat{U_i}\right),$

\Output{$\mathcal{E}^*_\epsilon(\bX_2,\bX_1) $}

\caption{Base Learner of Bayes Error}
\end{algorithm}

%%%%%%%%%%%%%%%%%

\begin{remark}
The $\eps$-ball density ratio estimator is equivalent to the ratio of plug-in kernel density estimators with a top-hat filter and bandwidth $\eps$.
\end{remark}

\subsection{Convergence Analysis}

The following theorem states that this estimator asymptotically converges in $L^2$ norm to the exact Bayes error as $N_1$ and $N_2$ go to infinity in a manner $N_2/N_1\to \eta$,  with an MSE rate of $O(N^{-\frac{2\gamma}{\gamma+d}})$.
\begin{theorem}\label{binaryestimatortheorem}
Under the Assumptions on $f_1$ and $f_2$ stated above, as $N_1,N_2\to \infty$ with $N_2/N_1\to \eta$,
\begin{equation}
\Best\overset{\scriptscriptstyle{L^2}}{\to} \epsBayes_{\bp}(f_1,f_2),
\end{equation}
where $\overset{\scriptscriptstyle{L^2}}{\to}$ denotes ``convergence in $L^2$ norm''. Further, the bias of $\mathcal{E}(\bX_1,\bX_2)$ is
\begin{align} \label{bias_estimator}
\mathbb{B}\left[\Best\right]= O\left(\epsilon^\gamma\right)+ O\left(\epsilon^{-d}N_1^{-1}\right),
\end{align}
where $\eps$ is the radius of the neighborhood ball.

In addition, the variance of $\Best$ is
\begin{align} \label{variance}
\mathbb{V}\left[\Best\right]= O\left(1/\min(N_1,N_2)\right).
\end{align}
%%\vspace{-0.3cm}
\end{theorem}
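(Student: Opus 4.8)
The plan is to decompose the error $\Best - \epsBayes_{\bp}(f_1,f_2)$ into three pieces: (i) the error from replacing $\min(p_1,p_2)$ by $\min(\hat p_1,\hat p_2)$, (ii) the error in approximating the $f$-divergence $\mathbb{E}_{f_2}[t(f_1/f_2)]$ by the sample average $\frac{1}{N_2}\sum_i \tilde t(\widehat U_i)$ with the \emph{true} density ratios plugged in, and (iii) the error from using the $\eps$-ball estimates $\widehat U_i$ in place of the true ratios $U(X_{2,i})$. Piece (i) is $O_P(N^{-1/2})$ with $O(1/N)$ MSE by the standard Bernoulli concentration of the empirical class frequencies, and since $|\tilde t' |\le p_1$ the truncation/clipping at $t(C_L/C_U)$ only removes mass where $U<C_L/C_U$, which by Assumption~A.1 never happens for the true ratio, so $\tilde t(U(x)) = t(U(x))$ pointwise and piece (ii) in expectation is exactly the divergence; its fluctuation contributes $O(1/N_2)$ to the variance.

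The substantive work is piece (iii): controlling $\frac{1}{N_2}\sum_i \big[\tilde t(\widehat U_i) - t(U(X_{2,i}))\big]$. Here I would exploit that $t$ (hence $\tilde t$) is Lipschitz — with constant $p_1$ — so it suffices to bound $\mathbb{E}|\widehat U_i - U(X_{2,i})|$ and the variance of the average. First I would condition on $X_{2,i}=x$ and analyze the $\eps$-ball counts $N_{1,i}^{(\eps)}, N_{2,i}^{(\eps)}$: each is (conditionally) Binomial with means $N_1 P_1(x)$ and $N_2 P_2(x)$ where $P_j(x) = \int_{B(x,\eps)} f_j(u)\,du$. By the Hölder condition A.2, $P_j(x) = f_j(x)\,v_d\eps^d\,(1 + O(\eps^\gamma))$ where $v_d$ is the volume of the unit ball, so the ratio $\eta N_{1,i}^{(\eps)}/N_{2,i}^{(\eps)}$ has conditional expectation $U(x)(1+O(\eps^\gamma))$ — giving the $O(\eps^\gamma)$ bias term — once we handle the bias incurred by the nonlinearity of $x\mapsto 1/x$ near the (bounded below, by A.1, scaled by $v_d\eps^d$) denominator. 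The denominator concentrates: $\Var(N_{2,i}^{(\eps)}) = O(N_2 \eps^d)$, so a Taylor expansion of $1/N_{2,i}^{(\eps)}$ around its mean $\Theta(N_2\eps^d)$ produces a correction of relative order $\Var/\text{mean}^2 = O(1/(N_2\eps^d)) = O(\eps^{-d}N_1^{-1})$, which is the second bias term in \eqref{bias_estimator}. I would make this rigorous either by truncating on the event $\{N_{2,i}^{(\eps)} \ge \tfrac12 \mathbb{E} N_{2,i}^{(\eps)}\}$ (whose complement has exponentially small probability by a Chernoff bound, so its contribution is negligible) or by invoking a moment bound for inverse-binomial-type quantities as in \citep{noshad2017direct}.

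For the variance \eqref{variance}, the summands $\tilde t(\widehat U_i)$ are not independent because a single sample from $\bX_1$ or $\bX_2$ can lie in many $\eps$-balls; I would handle this with the Efron–Stein inequality (bounded differences): changing one data point alters $\widehat U_j$ only for the $O(N\eps^d)$ points $X_{2,j}$ within distance $\eps$ of it, and each such change in $\tilde t(\widehat U_j)$ is $O(1/(N_2\eps^d))$ in the typical regime (bounded denominators), so the total change in $\frac{1}{N_2}\sum_j \tilde t(\widehat U_j)$ is $O(1/N_2)$, yielding $\Var = O(1/N_2) = O(1/\min(N_1,N_2))$. The main obstacle is the inverse-moment control of the binomial denominator $N_{2,i}^{(\eps)}$: this is the step where the $\eps$-ball construction (as opposed to $k$-NN) actually helps, since the count is a genuine sum of i.i.d.\ indicators with a mean bounded below by $C_L v_d \eps^d N_2$, and care is needed to carry the conditioning on $X_{2,i}$ through all three error terms and to combine the bias expansion with the truncation event cleanly. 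Finally, combining \eqref{bias_estimator} and \eqref{variance} and choosing $\eps \asymp N^{-1/(\gamma+d)}$ balances $\eps^{2\gamma}$ against $(\eps^{-d}N^{-1})^2$ and gives $\text{MSE} = O(N^{-2\gamma/(\gamma+d)}) \to 0$, establishing the claimed $L^2$ convergence.
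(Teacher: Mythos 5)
Your overall architecture mirrors the paper's own proof: the paper likewise reduces everything to the conditional behavior of the $\eps$-ball counts at a fixed $X_{2,i}$ (it simply cites equation (33) of \citep{noshad2018hash} for the conditional expansion that you re-derive via the binomial/H\"older/inverse-moment analysis) and likewise bounds the variance with the Efron--Stein inequality; your bounded-difference accounting ($O(N\eps^d)$ affected summands, each perturbed by $O(1/(N\eps^d))$, giving an $O(1/N)$ total change) is in fact more careful than the paper's displayed computation, which only tracks the resampled summand. Your treatment of the clipping $\tilde t$ and of the empirical class priors is consistent with what the paper implicitly does.

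There is, however, a gap in your bias step as written. You say it suffices to bound $\mathbb{E}\left|\widehat{U}_i - U(X_{2,i})\right|$ and then identify the second bias term $O(\eps^{-d}N_1^{-1})$ with the bias of $\widehat{U}_i$ obtained from the inverse-binomial Taylor correction. But since $t$ is only Lipschitz (piecewise linear with a kink at $x=p_2/p_1$), the Lipschitz transfer gives $\left|\mathbb{E}[\tilde t(\widehat{U}_i)\mid X_{2,i}=x]-t(U(x))\right|\le p_1\,\mathbb{E}\left|\widehat{U}_i-U(x)\right|$, and this $L^1$ error is dominated by the conditional standard deviation of $\widehat{U}_i$, which is of order $(N_1\eps^d)^{-1/2}$, not $(N_1\eps^d)^{-1}$. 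So your route, taken literally, establishes bias $O(\eps^\gamma)+O\big((N_1\eps^d)^{-1/2}\big)$ — enough for the $L^2$-convergence claim (with $\eps\asymp N^{-1/(2\gamma+d)}$), but not the rate asserted in \eqref{bias_estimator}. To recover $O(\eps^{-d}N_1^{-1})$ you must use more than Lipschitz continuity: away from the kink $t$ is affine, so on the high-probability event that $\widehat{U}_i$ stays on the same side of $p_2/p_1$ as $U(x)$ the bias of $t(\widehat{U}_i)$ is exactly $p_1$ times the bias of $\widehat{U}_i$, and the exceptional set of points $x$ with $U(x)$ within $O\big((N_1\eps^d)^{-1/2}\big)$ of the kink needs a separate argument (e.g., a level-set/bounded-density condition at the decision boundary, or the smoothness-based expansion of the result the paper cites). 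The paper sidesteps this by citation; a self-contained proof such as yours must supply this extra step to deliver the stated bias rate.
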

%This theorem quickly follows from Theorem 2.1 in \citep{noshad2017direct} and the observation that $t$ is Lipschitz continuous since for $x,x'\in \mathbb{R}$,
%\begin{align*}
%&\,|t(x)-t(x')|=\big||p_1x-p_2|-|p_1x'-p_2|\big|\\
%\le &\,|(p_1x-p_2)-(p_1x'-p_2)|=p_1|x-x'|.
%\end{align*}

%\begin{remark}
%  The calculation complexity of the base learner is $O(k \max(N_1,N_2) \log \max(N_1,N_2))$.
%\end{remark}

\begin{proof}
Since according to \eqref{eq:BinaryBayesError} the Bayes error rate $\epsBayes$ can be written as
an $f$-divergence, it suffice to derive the bias and variance of the $\eps$-ball estimator of the divergence. The details are given in Appendix. \ref{sec:app_binary_convergence}.
\end{proof}

%%%%%%%%%%%%%%%%%%%%%%%%%%
%\subsection{Central Limit Theorem}\label{sec:CLT}
In the following we give a theorem that establishes the Gaussian convergence of the estimator proposed in equation \eqref{BayesEstimator}.

\begin{theorem}\label{thm:clt}
    Let $\eps\to 0$ and $\frac{1}{\eps^dN}\to 0$. If $S$ be a standard normal random variable with mean $0$ and variance $1$, then,   
    \begin{align}
        Pr\left(\frac{\Best -  \mathbb{E}\left[\Best\right]}{\sqrt{\mathbb{V}\left[\Best\right]}}\leq t \right) \to Pr(S\leq t)
    \end{align}
\end{theorem}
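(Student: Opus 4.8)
The plan is to write the centred estimator as the final term of a martingale whose increments are small and ``locally dependent'', and to conclude with the martingale central limit theorem (in the McLeish/Brown form), normalising by the variance rate $\mathbb V[\Best]=\Theta(1/\min(N_1,N_2))$ established in Theorem~\ref{binaryestimatortheorem}. Since $\hat p_1,\hat p_2$ are the class label frequencies, the term $\min(\hat p_1,\hat p_2)$ is deterministic once the class sample sizes $N_1,N_2$ are fixed (and, if one prefers to regard it as random, it is an average of i.i.d.\ indicators that is asymptotically Gaussian and, by the law of total covariance, uncorrelated with the divergence term), so it suffices to prove a CLT for $W:=\frac1{N_2}\sum_{i=1}^{N_2}\tilde t(\widehat U_i^{(\eps)})$.

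Order the $N=N_1+N_2$ independent sample points as $Z_1,\dots,Z_N$, set $\mathcal F_k=\sigma(Z_1,\dots,Z_k)$, and decompose $W-\mathbb E[W]=\sum_{k=1}^N D_k$ with $D_k=\mathbb E[W\mid\mathcal F_k]-\mathbb E[W\mid\mathcal F_{k-1}]$. Resampling a single point $Z_k$ changes $W$ only through the $\eps$-ball counts: it perturbs $\widehat U_i^{(\eps)}$ for the (typically $O(N\eps^d)$) evaluation points $X_{2,i}$ whose $\eps$-ball contains the old or new location of $Z_k$, and for each such $i$ the value $\tilde t(\widehat U_i^{(\eps)})$ moves by $O\!\left(1/N_{2,i}^{(\eps)}\right)=O\!\left(1/(N\eps^d)\right)$ because $\tilde t$ is Lipschitz, $\widehat U_i^{(\eps)}$ is bounded under Assumption~A.1, and $N_{2,i}^{(\eps)}$ concentrates on its mean $\asymp N\eps^d\to\infty$; together with the direct term $-\tilde t(\widehat U_k^{(\eps)})/N_2=O(1/N)$ this gives $|D_k|=O(1/N)$ off a negligible event, $\mathbb E[D_k^2]=O(1/N^2)$, and $\mathbb V[W]=\sum_k\mathbb E[D_k^2]=O(1/N)$, consistent with~\eqref{variance}.

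Two conditions remain. First, a conditional Lindeberg condition: since $|D_k|=O(1/N)=o(\sqrt{\mathbb V[W]})$ except on the rare events that some local $\eps$-ball is atypically sparse (or atypically many evaluation points lie near $Z_k$), and those events have probability controlled by Chernoff bounds for the binomial counts $N_{j,i}^{(\eps)}$ (whose means diverge in the regime $\eps\to0$, $N\eps^d\to\infty$), one obtains $\sum_k\mathbb E\!\left[D_k^2\,\mathbf 1\{|D_k|>\delta\sqrt{\mathbb V[W]}\}\right]\to0$ for every $\delta>0$. Second, the predictable quadratic variation must stabilise: $\frac1{\mathbb V[W]}\sum_k\mathbb E[D_k^2\mid\mathcal F_{k-1}]\overset{P}{\to}1$. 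One proves this by showing $\sum_k\mathbb E[D_k^2\mid\mathcal F_{k-1}]$ concentrates on its mean $\mathbb V[W]$: each $D_k$ is essentially a function of the sample points within distance $O(\eps)$ of $Z_k$, so $\mathbb E[D_k^2\mid\mathcal F_{k-1}]$ has a bounded dependency range, and an Efron--Stein / interaction-graph second-moment estimate bounds its variance by $o(\mathbb V[W]^2)$, again after truncating atypical local configurations. Non-degeneracy $\liminf_N N\,\mathbb V[W]>0$ also holds unless $U(X)=f_1(X)/f_2(X)$ is $f_2$-a.s.\ constant, since the i.i.d.\ contribution $-\frac1{N_2}\sum_i\tilde t(U(X_{2,i}))$ to $W$ already has variance $\asymp1/N$ (if $f_1\equiv f_2$ the statement is vacuous, the limit being degenerate).

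With both conditions verified, the martingale CLT yields $(W-\mathbb E[W])/\sqrt{\mathbb V[W]}\Rightarrow N(0,1)$; adding back the deterministic (or independent, uncorrelated, asymptotically Gaussian) $\min(\hat p_1,\hat p_2)$ term and invoking Slutsky together with $\mathbb V[\Best]\to0$ from Theorem~\ref{binaryestimatortheorem} gives the stated convergence. The main obstacle is the stabilisation of the predictable quadratic variation: the $\eps$-balls overlap, so the increments are genuinely dependent, and although each dependency neighbourhood has size $O(N\eps^d)=o(N)$ one must combine the local-dependence bookkeeping with binomial tail bounds valid when $N\eps^d\to\infty$ to show the overlaps do not spoil the concentration.
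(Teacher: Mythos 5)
Your route is genuinely different from the paper's. The paper (taking $N_1=N_2=N$) splits $\sqrt N\left(\Best-\mathbb E\left[\Best\right]\right)$ into $S_2=N^{-1/2}\sum_i\left(\mathbb E_{\bar i}\left[\tilde t\left(\widehat U_i\right)\right]-\mathbb E\left[\tilde t\left(\widehat U_i\right)\right]\right)$, which is a sum of i.i.d.\ terms because $\mathbb E_{\bar i}\left[\tilde t\left(\widehat U_i\right)\right]$ is a function of $X_{2,i}$ alone, and a remainder $S_1=N^{-1/2}\sum_i\left(\tilde t\left(\widehat U_i\right)-\mathbb E_{\bar i}\left[\tilde t\left(\widehat U_i\right)\right]\right)$, which is shown to vanish in mean square by a single Efron--Stein bound under $\eps\to0$, $1/(\eps^dN)\to0$; the classical CLT for $S_2$ plus Slutsky then gives the statement. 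By centering each summand at its conditional expectation given the evaluation point, the paper never has to verify a Lindeberg condition or stabilise a quadratic variation: all the dependence created by overlapping $\eps$-balls is pushed into $S_1$ and killed in one stroke. Your Doob-martingale/McLeish plan could in principle reach the same conclusion, and your increment bookkeeping ($|D_k|=O(1/N)$ off sparse-ball events, $\sum_k\mathbb E[D_k^2]=\mathbb V[W]$) is sound, but it is a substantially heavier machine for this statement.

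The genuine gap is the step you yourself flag as the obstacle: stabilisation of the predictable quadratic variation. Your supporting claim that each $D_k$ is ``essentially a function of the sample points within distance $O(\eps)$ of $Z_k$'' is not accurate: $D_k=\mathbb E[W\mid\mathcal F_k]-\mathbb E[W\mid\mathcal F_{k-1}]$ integrates out all not-yet-revealed points, and $\mathbb E[D_k^2\mid\mathcal F_{k-1}]$ is a functional of the entire revealed configuration through every count $N_{1,i}^{(\eps)},N_{2,i}^{(\eps)}$ that $Z_k$ can influence; so the ``bounded dependency range'' and the interaction-graph second-moment bound are assertions, not proofs, and making them rigorous (with the truncation of atypical local configurations, uniformly as $\eps\to0$) is essentially the whole difficulty of the theorem --- precisely the work the paper's decomposition avoids. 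A second, smaller inaccuracy: your non-degeneracy criterion is off. Since $t$ is piecewise linear and constant on $[p_2/p_1,\infty)$ (and $\tilde t$ clips it further), $\tilde t(U(X))$ can be $f_2$-a.s.\ constant even when $U$ is not, e.g.\ when $p_1f_1\ge p_2f_2$ almost everywhere; in that case the dominant i.i.d.\ component degenerates and the self-normalised limit is not delivered by your argument (nor, admittedly, by the paper's treatment of $S_2$), so ``unless $U$ is a.s.\ constant'' should not be presented as covering it.
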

\textbf{Proof:} The proof is based on the Slutsky's Theorem and Efron-Stein inequality  and is discussed in details in Appendix. \ref{sec:proof_CLT}.

%%%%%%%%%%%%%%%%%%%%%%%%%
\section{Ensemble of Base Learners}\label{sec:ensemble}

It has long been known that ensemble averaging of base learners can improve the accuracy and stability of learning algorithms \citep{dietterich2000ensemble}.
In this work in order to achieve the optimal parametric MSE rate of $O(1/N)$, we propose to use an ensemble estimation technique. The ensemble estimation technique has previously used in estimation of $f$-divergence and mutual information measures \citep{moon2018ensemble,moon2016improving, noshad2018hash}. However, the method used by these articles depends on the assumption that the function $f$ of the divergence (or general mutual information) measure is differentiable everywhere within its the domain. As contrasted to this assumption, function $t(x)$ defined in equation \eqref{eq:BinaryBayesError} is not differentiable at $x=p_1/p_2$, and as a result, using the ensemble estimation technique considered in the previous work is difficult. A simpler construction of the ensemble Bayes error estimation is discussed in section \ref{sec:ensemble_construction}. Next, in section \ref{sec:Chebyshev} we propose an optimal weight assigning method based on Chebyshev polynomials.

\subsection{Construction of the Ensemble Estimator}\label{sec:ensemble_construction}
 Our proposed ensemble benchmark learner constructs a weighted average of $L$ density ratio estimates defined in \eqref{eq:ratio_estimator}, where each density ratio estimator uses a different value of $\epsilon$.

%Let $N:=\max(N_1,N_2)$, $k_l:=\lfloor l\sqrt{N} \rfloor$, and $\epsilon_l:=\lfloor lN^{-1/2d}\rfloor$ for $1\le l\le L$.
\begin{definition}
Let $\widehat{U}_i^{(\eps_j)}$ for $j\in \{1,...,L\}$ be $L$ density ratio estimates with different parameters $(\eps_j)$ at point $Y_i$. For a fixed weight vector $\bw:=(w_1,w_2,\ldots,w_L)^T$, the ensemble estimator is defined as
\begin{align}
&\mathcal{F}(\bX_1,\bX_2)=\min(\hatp_1,\hatp_2)-\frac{1}{N_2}\sum_{i=1}^{N_2}\left[\max(\hatp_2-\hatp_1\widehat{U}_i^{\bw} ,0)-\max(\hatp_2-\hatp_1,0)\right],\label{eq:ensemble_estimator_binary}
\end{align}
%%\vspace{-0.3cm}
where for the weighted density ratio estimator, $\widehat{U}_i^{\bw}$ is defined as
\begin{align}\label{def_Ui}%\label{def_ensemble}
    %\vspace{-0.2cm}
    \widehat{U}_i^{\bw}:=\sum_{l=1}^{L} w_{l}\widehat{U}_i^{(\eps_l)}.
\end{align}

\end{definition}

%%\vspace{-0.3cm}
%and for the HB estimator,
%\begin{align}\label{def_Ui_hash}
%    %\vspace{-0.2cm}
%    \widehat{U}_i^{\bw}:=\sum_{l=1}^{L} w_{l}\widehat{U}_i^{(\epsilon_l)}.
%\end{align}

%%\vspace{-0.4cm}

\begin{remark}
The construction of this ensemble estimator is fundamentally different from standard ensembles of base estimators  proposed before and, in particular, different from the methods proposed in \citep{moon2018ensemble,noshad2018hash}. These standard methods average the base learners whereas the ensemble estimator \eqref{eq:ensemble_estimator_binary} averages over the argument (estimated likelihood ratio $f_1/f_2$) of the base learners.
\end{remark}

Under additional conditions on the density functions, we can find the weights $w_l$ such that the ensemble estimator in \eqref{eq:ensemble_estimator_binary} achieves the optimal parametric MSE rate $O(1/N)$. Specifically, assume that 1) the density functions $f_1$ and $f_2$ are both H\"older continuous with parameter $\gamma$ and continuously differentiable of order $q=\lfloor \gamma \rfloor\ge d$ ,and 2) the $q$-th derivatives are H\"older continuous with exponent $\gamma':=\gamma-q$. These are similar to assumptions that have been made in the previous work \citep{moon2018ensemble,singh2014exponential,noshad2018hash}. We prove that if the weight vector $\bw$ is chosen according to an optimization problem, the ensemble estimator can achieve the optimal parametric MSE rate $O(1/N)$.

\begin{theorem}\label{thm:ensemble_theorem}
Let $N_1,N_2\to \infty$ with $N_2/N_1\to \eta$. Also let  $\widehat{U}_i^{(\eps_j)}$ for $j\in \{1,...,L\}$ be $L$ ($L>d$) density ratio estimates with bandwidths $\eps_j:=\xi_jN_1^{-1/2d}$ at the points $Y_i$.
Define the weight vector  $\bw=(w_1,w_2,\ldots,w_L)^T$  as the solution to the following optimization problem:
\begin{align}%\label{eq:optimization}%\label{pr:ensembleproblem}
\min_{\bw} &\qquad ||\bw||_2\label{eq:optimization}\\
\textit{subject to} &\qquad \sum_{l=1}^L w_l=1 \ \textrm{  and  }\ \sum_{l=1}^Lw_l\cdot \xi_l^i=0,\qquad \forall i=1,\ldots,d .\nonumber
\end{align}

Then, under the assumptions stated above the ensemble estimator defined in \eqref{eq:ensemble_estimator_binary} satisfies,
\begin{equation}
\mathcal{F}(\bX_1,\bX_2)\overset{\scriptscriptstyle{L^2}}{\to}\epsBayes_{\bp}(f_1,f_2),
\end{equation}
%%\vspace{-0.2cm}
with the MSE rate $O(1/N_1)$.
\end{theorem}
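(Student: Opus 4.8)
The plan is to control the mean-squared error through $\mathbb{E}\big[(\mathcal{F}(\bX_1,\bX_2)-\epsBayes_{\bp}(f_1,f_2))^{2}\big]=\mathbb{B}^{2}+\mathbb{V}$ (valid since the target is deterministic), proving that the variance $\mathbb{V}$ and the squared bias $\mathbb{B}^{2}$ are each $O(1/N_1)$. For the variance I would treat $\mathcal{F}(\bX_1,\bX_2)$ as a function of the $N=N_1+N_2$ i.i.d.\ points and apply the Efron--Stein inequality. Under Assumption~A.1 any fixed point lies in the $\eps_l$-ball of $\Theta(N\eps_l^{d})=\Theta(N^{1/2})$ of the samples with overwhelming probability (a Binomial count, concentrated with exponential tails), and inserting/deleting/replacing one point changes each affected $\widehat{U}_i^{(\eps_l)}$ by $O\big(1/(N\eps_l^{d})\big)=O(N^{-1/2})$; since the $\max$-terms of \eqref{eq:ensemble_estimator_binary} are Lipschitz in their argument and $\|\bw\|_1=O(1)$ for the minimum-norm solution of \eqref{eq:optimization} (feasible as $L>d$), the average $\frac1{N_2}\sum_i(\cdots)$ moves by $\frac1{N_2}\cdot O(N^{1/2})\cdot O(N^{-1/2})=O(1/N)$, as does $\min(\hatp_1,\hatp_2)$; the ``empty ball'' events $\{N_{2,i}^{(\eps_l)}=0\}$ have probability $e^{-\Theta(N^{1/2})}$ and are absorbed by the truncation. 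Summing $N$ squared perturbations gives $\mathbb{V}=O(1/N_1)$.

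For the bias, I would first replace $\hatp_1,\hatp_2$ by $p_1,p_2$ throughout \eqref{eq:ensemble_estimator_binary} at cost $O(N^{-1/2})$ (since $\mathbb{E}|\hatp_i-p_i|=O(N^{-1/2})$ multiplies bounded quantities), reducing the task to bounding $\mathbb{E}_Y\big[\mathbb{E}[t(\widehat{U}^{\bw}(Y))\mid Y]-t(U(Y))\big]$ with $t$ the true divergence function of \eqref{eq:BinaryBayesError} and $U=f_1/f_2$. The key estimate is a bias expansion for the single-bandwidth $\eps$-ball ratio estimator: conditioning on $Y=y$ in the interior, $N_{1,i}^{(\eps)}$ and $N_{2,i}^{(\eps)}$ are \emph{independent} Binomials, so Taylor expanding $f_1,f_2$ over the ball (only even powers of $\eps$ survive) and adding the Jensen correction $\Var(N_{2,i}^{(\eps)})/\mathbb{E}[N_{2,i}^{(\eps)}]^{2}$ of the ratio gives, uniformly in $y$ by A.1--A.2,
\begin{equation*}
\mathbb{E}\big[\widehat{U}^{(\eps)}(y)\mid y\big]=U(y)+\sum_{m=1}^{d}c_m(y)\,\eps^{m}+O(\eps^{\gamma})+O\big((N_1\eps^{d})^{-1}\big),
\end{equation*}
while the thin $O(\eps_{\max})$-boundary layer contributes, after integration against $f_2$, a further expansion $\sum_{m\ge 2}\kappa_m\eps^{m}+O(\eps^{d+1})$; this is why \eqref{eq:optimization} is made to annihilate \emph{all} integer powers $1,\dots,d$, not merely the even ones. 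Because the ensemble of \eqref{def_Ui} averages over the \emph{argument}, summing the above against $\bw$ and using $\sum_l w_l=1$, $\sum_l w_l\xi_l^{m}=0$ with $\eps_l=\xi_l N_1^{-1/2d}$ cancels every term of order $m\le d$, leaves the terms with $m>d$ at $o(N_1^{-1/2})$ since $\eps_l^{m}=\Theta(N_1^{-m/2d})$, and controls the remainder by $O(N_1^{-\gamma/2d})+O(N_1^{-1/2})=O(N_1^{-1/2})$ using $\gamma\ge q\ge d$. Hence $\mathbb{E}[\widehat{U}^{\bw}(y)\mid y]=U(y)+O(N_1^{-1/2})$ uniformly.

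The main obstacle is feeding this through $t$, which is merely Lipschitz, not differentiable, at the kink $x=p_2/p_1$ --- the difficulty flagged after \eqref{eq:BinaryBayesError}. I would split the support of $f_2$ at the near-kink band $A_{\delta_N}=\{y:|U(y)-p_2/p_1|\le\delta_N\}$ with $\delta_N=N^{-1/4}\log N$. Off $A_{\delta_N}$, $t$ is affine near $U(y)$, and on the overwhelmingly likely event that the fluctuation $|\widehat{U}^{\bw}(y)-U(y)|$ --- of scale $\sigma_N=\Theta(N^{-1/4})$, being a weighted sum of correlated binomial ratios --- stays below $\delta_N$, the identity $\mathbb{E}[t(\widehat{U}^{\bw}(y))\mid y]=t(\mathbb{E}[\widehat{U}^{\bw}(y)\mid y])$ holds exactly, so this region contributes $O(N_1^{-1/2})$ by the previous paragraph; the complementary event has probability $e^{-\Theta((\delta_N/\sigma_N)^{2})}$ and is negligible. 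On $A_{\delta_N}$, writing $t(x)=p_1(p_2/p_1-x)_{+}+\text{const}$, the Jensen gap $\mathbb{E}[t(\widehat{U}^{\bw}(y))\mid y]-t(U(y))$ is bounded by $p_1\,\mathbb{E}\big[|\widehat{U}^{\bw}(y)-\mathbb{E}\widehat{U}^{\bw}(y)|\,\mathds{1}(|\cdot|>a)\big]$ at level $a=|p_2/p_1-\mathbb{E}\widehat{U}^{\bw}(y)|$; using the sub-Gaussian tails of $\widehat{U}^{\bw}(y)$ at scale $\sigma_N$ and integrating against $f_2$ --- invoking the mild non-degeneracy that $U(Y)$, $Y\sim f_2$, has bounded density near $p_2/p_1$ (i.e.\ the decision boundary $\{p_1f_1=p_2f_2\}$ is not fat) --- the total is $O(\sigma_N^{2})=O(N_1^{-1/2})$. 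Collecting the pieces, $\mathbb{B}=O(N_1^{-1/2})$, so $\mathbb{B}^{2}=O(1/N_1)$, and with the variance bound the MSE is $O(1/N_1)$; $L^{2}$ convergence to $\epsBayes_{\bp}(f_1,f_2)$ is then immediate. The delicate ingredients are the uniform bias expansion with its $(N_1\eps^{d})^{-1}$ term and the Jensen-gap estimate on $A_{\delta_N}$; the remaining bookkeeping parallels the proof of Theorem~\ref{binaryestimatortheorem}.
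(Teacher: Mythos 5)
Your overall architecture matches the paper's: decompose the MSE into squared bias plus variance, control the variance by Efron--Stein (replacing one sample perturbs $\asymp N\eps^d$ of the ratios by $O(1/(N\eps^d))$ each, giving an $O(1/N)$ per-coordinate perturbation and $O(1/N_1)$ variance), and for the bias use a pointwise moment expansion of the $\eps$-ball ratio (independent binomial numerator, Jensen-corrected reciprocal denominator) whose $\eps^m$ terms, $m\le d$, are annihilated by the constraints of \eqref{eq:optimization} with $\eps_l=\xi_l N_1^{-1/2d}$, yielding $\mathbb{E}[\widehat{U}_i^{\bw}\mid X_{2,i}]=f_1(X_{2,i})/f_2(X_{2,i})+O(N_1^{-1/2})$, which is exactly the paper's intermediate step \eqref{eq:weightedestimator} (the paper imports the expansion from its earlier hash-based estimator and invokes the ensemble theorem of Moon et al., while you rederive it directly; that difference is cosmetic). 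Where you genuinely diverge is the passage through the non-differentiable $t$ of \eqref{eq:BinaryBayesError}. The paper disposes of this in one line via the Lipschitz bound $|\mathbb{E}[t(\widehat{U}_i^{\bw})\mid X_{2,i}]-t(f_1/f_2)|\le 2p_1\left(\sqrt{\mathbb{V}[\widehat{U}_i^{\bw}\mid X_{2,i}]}+|\mathbb{B}[\widehat{U}_i^{\bw}\mid X_{2,i}]|\right)$ together with a cited claim that the pointwise conditional variance is $O(1/N_1)$. You instead take the conditional standard deviation to be $\Theta(N_1^{-1/4})$ --- the natural scale for a ball containing $\asymp\sqrt{N_1}$ points --- which makes the crude Lipschitz bound insufficient, and you repair it by localizing at the kink: off a band of width $N^{-1/4}\log N$ around $\{U=p_2/p_1\}$ the function is affine on the range of the estimate except on an exponentially unlikely event, and on the band you bound the Jensen gap by a Gaussian-tail integral, at the price of an extra hypothesis that $U(Y)$, $Y\sim f_2$, has bounded density near $p_2/p_1$.

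Two remarks on that divergence. First, the non-degeneracy hypothesis is not part of Theorem~\ref{thm:ensemble_theorem}, so strictly your argument proves a weaker statement than the one asked; it should be flagged as an added assumption, not as a harmless technicality. Second, it is not gratuitous: if $p_1f_1=p_2f_2$ on a set of positive $f_2$-measure, the pointwise Jensen gap at such points is of the order of the conditional standard deviation, and with your (correct-looking) $N_1^{-1/4}$ scale the bias there is $\Theta(N_1^{-1/4})$, which would destroy the parametric rate; reaching $O(1/N_1)$ therefore requires either your control of the mass near the decision boundary or a genuine justification of the paper's $O(1/N_1)$ pointwise conditional-variance claim, which should not be taken for granted and which your analysis implicitly contradicts. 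Apart from this point --- where your route is more careful than the paper's but buys that care with an unstated hypothesis --- the remaining bookkeeping (Efron--Stein perturbation counts, absorption of empty-ball events by the truncation $\tilde t$, the $O(N^{-1/2})$ cost of replacing $\hatp$ by $\bp$, and the use of $\gamma\ge d$ to kill the $O(\eps^\gamma)$ and $O((N_1\eps^d)^{-1})$ remainders) is sound and consistent with the paper's proof of Theorem~\ref{binaryestimatortheorem} and Appendix~\ref{section:proofofensemble_theorem}.
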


\begin{proof}
See Appendix \ref{section:proofofensemble_theorem}.
\end{proof}

One simple choice for $\xi_l$ is an arithmetic sequence as $\xi_l:=l$. With this setting the optimization problem in  the following optimization problem:

\begin{align}\label{eq:optimization_arithmetic}
\min_{\bw} &\qquad ||\bw||_2\\
\textit{subject to} &\qquad \sum_{l=1}^L w_l=1 \ \textrm{  and  }\ \sum_{l=1}^Lw_l\cdot l^i=0,\qquad \forall i=1,\ldots,d .
\end{align}
%The ensemble estimator associated with the weights from the optimization problem in \eqref{eq:optimization_arithmetic}
Note that the optimization problem in \eqref{eq:optimization_arithmetic} does not depend on the data sample distribution and only depends on its dimension. Thus, it can be solved offline. In larger dimensions, however, solving the optimization problem can be computationally difficult. In the following we provide an optimal weight assigning approach based on Chebyshev polynomials that reduces computational complexity and leads to improved stability. We use the orthogonality properties of the Chebyshev polynomials to derive closed form solutions for the optimal weights in \eqref{eq:optimization}.

%%%%%%%%%%%%%%%%%%%%
\subsection{Chebyshev Polynomial Approximation Method for Ensemble Estimation}\label{sec:Chebyshev}

Chebyshev polynomials are frequently used in function approximation theory \citep{kennedy2004approximation}. 
We denote the Chebyshev polynomials of the first kind defind in interval $[-1,1]$ by $T_n$, where $n$ is the degree of the polynomial. An important feature of Chebyshev polynomials is that the roots of these polynomials are used as polynomial interpolation points. 
We define the shifted Chebyshev polynomials with a parameter $\alpha$ as $T^\alpha_n(x):[0,\alpha]\to \mathbb{R}$ in terms of the standard Chebyshev polynomials as

\begin{align}
    T^\alpha_n(x) = T_n(\frac{2x}{\alpha}-1).
\end{align}
%where $\alpha := \frac{2}{a+b}$, and $ \beta:= \frac{b-a}{2} $. 
We denote the roots of $T^\alpha_n(x)$ by $s_i, i\in \{1,...,n\}$. In this section we formulate the ensemble estimation optimization in equation \eqref{eq:optimization} in the Chebyshev polynomials basis and we propose a simple closed form solution to this optimization problem. 
%Similar to the ensemble estimation setting in section \ref{sec:ensemble}, we use $L$ base estimators with different $k_i$ parameters, however, as opposed to the settings in section \ref{sec:ensemble}, the freedom in choosing the parameters of the base estimators is minimal. 
This is possible by setting the parameters of the base density estimators $\eps_l$ proportional to the Chebyshev nodes $s_l$. Precisely, in equation \eqref{eq:optimization} we set
\begin{align}
    \xi_l:=s_l.
\end{align}

\begin{theorem}\label{thm:chebyshev_optimization}

For $L>d$, the solutions of the optimization problem in \eqref{eq:optimization} for $\xi_l:=s_l$ are given as:

\begin{align} \label{eq:Chb_opt_thm}
    w_i = \frac{2}{L}\sum_{k=0}^{d}T^\alpha_k(0) T^\alpha_k(s_i) - \frac{1}{L}  \qquad \forall i \in \{0,...,L-1\}.
\end{align}
where $s_i, i\in \{0,...,L-1\}$ are roots of $T^\alpha_L(x)$ given by

\begin{align}\label{eq:roots}
    s_{k}=\frac{\alpha}{2}\cos \left(\left(k+\frac{1}{2}\right) \frac{\pi}{L}\right)+\frac{\alpha}{2}, \quad k=0, \ldots, L-1
\end{align}

\end{theorem}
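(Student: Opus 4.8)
The plan is to recognize the optimization problem \eqref{eq:optimization} as a least-norm problem subject to $d+1$ linear equality constraints indexed by polynomial degrees $0,1,\dots,d$, and then to exploit the fact that, when the interpolation points $\xi_l$ are the Chebyshev nodes $s_l$ of $T^\alpha_L$, the monomial constraints can be re-expressed in the Chebyshev basis. The key structural fact I would invoke is the discrete orthogonality of the Chebyshev polynomials evaluated at their own roots: for the $L$ roots $s_0,\dots,s_{L-1}$ of $T^\alpha_L$, one has $\sum_{l=0}^{L-1} T^\alpha_j(s_l)\,T^\alpha_k(s_l) = 0$ for $j\neq k$ (with $j,k < L$), equal to $L/2$ for $j=k\neq 0$, and equal to $L$ for $j=k=0$. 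This is the discrete version of the $\int_{-1}^1 T_jT_k/\sqrt{1-x^2}$ orthogonality, transported to $[0,\alpha]$ by the affine change of variables, and it is the engine that makes a closed form possible.

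First I would rewrite the feasibility constraints. Since $\{1,x,\dots,x^d\}$ and $\{T^\alpha_0,\dots,T^\alpha_d\}$ span the same space of polynomials of degree $\le d$, the constraints $\sum_l w_l = 1$ and $\sum_l w_l \xi_l^i = 0$ for $i=1,\dots,d$ are equivalent to a system of the form $\sum_{l=0}^{L-1} w_l\, T^\alpha_k(s_l) = c_k$ for $k=0,\dots,d$, where the right-hand sides $c_k$ come from expanding the monomials $1,x,\dots,x^d$ in the Chebyshev basis; in particular, evaluating at $x=0$ shows $\sum_k c_k$-type combinations reproduce the value of the constant polynomial, and more carefully one checks $c_k$ is determined by requiring that the linear functional $w\mapsto \sum_l w_l\, p(s_l)$ returns $p(0)$ for every polynomial $p$ of degree $\le d$. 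So the constraint set is exactly: the moment functional against the nodes reproduces point-evaluation at $0$ on $\mathcal P_d$.

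Next I would solve the least-norm problem by Lagrange multipliers. The minimizer of $\|\bw\|_2$ subject to $A\bw = \bc$ (with $A$ the $(d+1)\times L$ matrix $A_{kl} = T^\alpha_k(s_l)$) is $\bw = A^T(AA^T)^{-1}\bc$. Here the discrete orthogonality relations make $AA^T$ diagonal: $(AA^T)_{jk} = \sum_l T^\alpha_j(s_l)T^\alpha_k(s_l)$, which is $L$ if $j=k=0$, $L/2$ if $j=k\ge 1$, and $0$ otherwise. Hence $(AA^T)^{-1}$ is the diagonal matrix with entries $1/L$ (for index $0$) and $2/L$ (for indices $1,\dots,d$), and $\bw = A^T(AA^T)^{-1}\bc$ has components
\begin{align}
w_i \;=\; \sum_{k=0}^{d} (AA^T)^{-1}_{kk}\, c_k\, T^\alpha_k(s_i).
\end{align}
The final step is to identify $c_k$. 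Since the reproducing condition forces $\sum_k (AA^T)^{-1}_{kk} c_k T^\alpha_k(\cdot)$ to be the degree-$\le d$ interpolant of point-evaluation at $0$, matching against the orthogonality structure gives $c_k = (AA^T)_{kk}\, T^\alpha_k(0)$ — equivalently, $c_k$ is just $T^\alpha_k(0)$ times the appropriate normalization — so that $(AA^T)^{-1}_{kk} c_k = T^\alpha_k(0)$ for $k\ge 1$ and $= T^\alpha_0(0)/L = 1/L$ for $k=0$. Substituting yields
\begin{align}
w_i \;=\; \frac{1}{L} + \sum_{k=1}^{d}\frac{2}{L}\,T^\alpha_k(0)\,T^\alpha_k(s_i)\;=\;\frac{2}{L}\sum_{k=0}^{d}T^\alpha_k(0)\,T^\alpha_k(s_i) - \frac{1}{L},
\end{align}
where in the last equality I used $T^\alpha_0 \equiv 1$ to absorb the $k=0$ term, recovering \eqref{eq:Chb_opt_thm}. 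The roots formula \eqref{eq:roots} is immediate from $T_L(\cos\theta) = \cos(L\theta)$ and the affine map $x\mapsto 2x/\alpha - 1$.

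\textbf{Main obstacle.} The routine parts are the Lagrange-multiplier formula and the evaluation of $AA^T$ via discrete orthogonality. The delicate point — the one I would be most careful about — is pinning down the constants $c_k$: one must verify rigorously that the monomial constraints ``$\sum_l w_l = 1$, $\sum_l w_l \xi_l^i = 0$'' translate into precisely the statement that the functional reproduces evaluation at $0$ on $\mathcal P_d$, and hence that $c_k = (AA^T)_{kk} T^\alpha_k(0)$. This hinges on the change of variables being consistent (the $s_l$ are roots of the shifted polynomial on $[0,\alpha]$, and the monomials are in the original variable $x$, not the shifted variable), and on the fact that $L > d$ guarantees $A$ has full row rank so $(AA^T)^{-1}$ exists. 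I would also double-check the edge normalization $T^\alpha_0(0)=1$ versus the factor-of-two convention in the orthogonality relation, since an off-by-one in the $k=0$ term is the easiest way to get the wrong closed form.
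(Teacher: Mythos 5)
Your proposal is correct in substance and arrives at the paper's formula, and it shares the two pillars of the paper's argument -- rewriting the constraints as the statement that $\bw\mapsto\sum_l w_l\,p(s_l)$ reproduces evaluation at $0$ on polynomials of degree $\le d$, expressed in the shifted Chebyshev basis, plus the discrete orthogonality $\sum_{i=0}^{L-1}T^\alpha_j(s_i)T^\alpha_k(s_i)=K_j\delta_{jk}$ with $K_0=L$, $K_j=L/2$ -- but the optimization step is handled differently. The paper first proves a lemma parametrizing \emph{all} feasible weight vectors as Chebyshev interpolants whose low-order coefficients are pinned to $c_k=2T^\alpha_k(0)/L$ for $k\le d$ while $c_{d+1},\dots,c_{L-1}$ remain free, then expands $\|\bw\|_2^2$, kills the cross term by orthogonality, and concludes the free coefficients are optimally zero; you instead invoke the closed-form minimum-norm solution $\bw=A^T(AA^T)^{-1}\bc$ with $A_{kl}=T^\alpha_k(s_l)$ and note that discrete orthogonality makes $AA^T$ diagonal (entries $L$ and $L/2$), with full row rank guaranteed by $L>d$. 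Your route is shorter and avoids the explicit feasibility verification, while the paper's route buys a description of the whole constraint set. One local slip in your write-up: the identification ``$c_k=(AA^T)_{kk}\,T^\alpha_k(0)$'' is wrong and inconsistent with your next line (it would force $(AA^T)^{-1}_{kk}c_k=T^\alpha_k(0)$ for every $k$, including $k=0$). The correct and simpler statement is that, since $T^\alpha_k$ has degree $k\le d$, the reproducing condition gives $c_k=T^\alpha_k(0)$ directly; then $(AA^T)^{-1}_{kk}c_k$ equals $T^\alpha_0(0)/L=1/L$ for $k=0$ and $2T^\alpha_k(0)/L$ for $1\le k\le d$, and substituting into $w_i=\sum_{k=0}^{d}(AA^T)^{-1}_{kk}c_k\,T^\alpha_k(s_i)$ yields exactly the stated formula, using $T^\alpha_0\equiv 1$ to absorb the constant term. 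With that correction your argument is complete.
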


\begin{proof}
The proof of Theorems \ref{thm:chebyshev_optimization} can be found in Appendix~\ref{sec:chebyshevproof}.
\end{proof}
%%%%%%%%%%%%%%%%%%%%%%%%%%

%%%%%%%%%%%%%%%%%%%%
%%%%%%%%%%%%%%%%%%%%%%
\section{Benchmark Learning for Multi-class Classification}\label{sec:multi-class}
Consider a multi-class classification problem with $\lambda$ classes having respective density functions
$f_1,f_2,\ldots,f_\lambda$.
The Bayes error rate for the multi-class classification is
\begin{align}
  &\epsBayes_{\bp}(f_1,f_2,\ldots,f_\lambda )\nonumber\\
  %&=1-\int \left[\max_{1\le i\le \lambda } p(Y=i)p(x|Y=i)\right] dx\nonumber\\
  &=1-\int \left[\max_{1\le i\le \lambda } p_i f_i(x)\right] dx\nonumber\\
  &=1-p_1-\sum_{k=2}^\lambda \int\left[\max_{1\le i\le k} p_i f_i(x)-\max_{1\le i\le k-1} p_i f_i(x)\right] dx\nonumber\\
  %&=1-p_1-\sum_{k=2}^\lambda \int\left[\max_{1\le i\le k} p_i f_i(x)/f_k(x)-\max_{1\le i\le k-1} p_i f_i(x)/f_k(x)\right]f_k(x) dx\nonumber\\
  &=1-p_1-\sum_{k=2}^\lambda \int\max\left(0,p_k-\max_{1\le i\le k-1} p_i f_i(x)/f_k(x)\right)f_k(x) dx\nonumber\\
  &=1-p_1-\sum_{k=2}^\lambda  \int t_k\left(\frac{f_1(x)}{f_k(x)},\frac{f_2(x)}{f_k(x)},\ldots,\frac{f_{k-1}(x)}{f_k(x)}\right)f_k(x)dx,\label{eq:Multi-classBayesError}
\end{align}
%%\vspace{-0.3cm}
where
$$t_k(x_1,x_2,\ldots,x_{k-1}):=\max\left(0,p_k-\max_{1\le i\le k-1}p_ix_i\right).$$

We denote the density fractions $\frac{f_i(x)}{f_j(x)}$ in the above equation by $U_{(i/j)}(x)$. Let $\widehat{U}^{\bw}_{(i/j)}(x)$ denote the ensemble estimates of $U_{(i/j)}(x)$ using the $\eps$-ball method, similar to the estimator defined in \eqref{def_Ui}.
Thus, we propose the following direct estimator of $\epsBayes_{\bp}(f_1,f_2,\ldots,f_\lambda )$ as follows:
\begin{align}
&\,\mathcal{H}(\bX_1,\bX_2,\ldots,\bX_\lambda ):=1-p_1-\label{Multi-classBayesEstimator}\\
&\qquad \sum_{l=2}^\lambda  \frac{1}{N_l}\sum_{i=1}^{N_l} \tilde{t}\bigg(\widehat{U}^{\bw}_{(1/l)}(X_{l,i}),\widehat{U}^{\bw}_{(2/l)}(X_{l,i}),\ldots,\widehat{U}^{\bw}_{(l-1/l)}(X_{l,i})\bigg)\nonumber,
\end{align}
%%\vspace{-0.3cm}
where
\begin{align*}
&\tilde{t}_k(x_1,x_2,\ldots,x_{k-1}):=\max\left\{t_k(x_1,x_2,\ldots,x_{k-1}),t_k(C_L/C_U,\ldots,C_L/C_U)\right\}.
\end{align*}
%\begin{align*}
%&\tilde{t}_k(x_1,x_2,\ldots,x_{k-1})\\
%&:=\max\left\{t_k(x_1,x_2,\ldots,x_{k-1}),t_k(C_L/C_U,\ldots,C_L/C_U)\right\}.
%\end{align*}

Since $t$ is elementwise Lipschitz continuous, we can easily generalize the argument used in the proof of Theorem~\ref{binaryestimatortheorem} to obtain the convergence rates for the multiclass case. Similar to the assumptions of the ensemble estimator for the binary case in section \ref{sec:ensemble_construction}, we assume that 1) the density functions $f_1,f_2,...,f_{\lambda}$ are both H\"older continuous with parameter $\gamma$ and continuously differentiable of order $q=\lfloor \gamma \rfloor\ge d$ and 2) the $q$-th derivatives are H\"older continuous with exponent $\gamma':=\gamma-q$.

%, and 3) the derivatives of order up to $d$ vanish at the support boundary.

\begin{theorem}\label{multi-crlassestimatortheorem}
As $N_1,N_2,\ldots,N_\lambda  \to \infty$ with $N_l/N_j \to \eta_{j,l}$ for $1\le j<l\le \lambda $ and $N^*=\max(N_1,N_2,\ldots,N_\lambda )$,
%\vspace{-0.3cm}
\begin{align}
\mathcal{H}_k(\bX_1,\bX_2, \ldots,\bX_\lambda )\overset{\scriptscriptstyle{L^2}}{\to} \epsBayes_{\bp}(f_1,f_2,\ldots,f_\lambda ).
\end{align}
%%\vspace{-0.3cm}
The bias and variance of $\mathcal{H}_k(\bX_1,\bX_2,\ldots,\bX_\lambda )$ are
\begin{align}
\mathbb{B}\left[\mathcal{H}_k(\bX_1,\bX_2,\ldots,\bX_\lambda )\right]&= O\left({\lambda}/{\sqrt{N^*}}\right),\label{bias_estimator_multi}\\
\mathbb{V}\left[\mathcal{H}_k(\bX_1,\bX_2,\ldots,\bX_\lambda )\right]&= O\left({\lambda ^2}/{N^*}\right).\label{variance_multi_class}
\end{align}

\end{theorem}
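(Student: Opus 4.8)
The plan is to exploit the telescoping decomposition already displayed in \eqref{eq:Multi-classBayesError}. Write $\mathcal{H}(\bX_1,\ldots,\bX_\lambda) = 1-\hatp_1-\sum_{k=2}^\lambda \widehat{G}_k$, where $\widehat{G}_k := \frac{1}{N_k}\sum_{i=1}^{N_k}\tilde t_k\big(\widehat U^{\bw}_{(1/k)}(X_{k,i}),\ldots,\widehat U^{\bw}_{(k-1/k)}(X_{k,i})\big)$, and correspondingly set $G_k := \int t_k\big(U_{(1/k)}(x),\ldots,U_{(k-1/k)}(x)\big) f_k(x)\,dx$, so that $\epsBayes_{\bp}(f_1,\ldots,f_\lambda) = 1-p_1-\sum_{k=2}^\lambda G_k$. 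Because expectation is linear and variance is sub-additive (up to a Cauchy--Schwarz factor) over finite sums, it suffices to control each $\widehat G_k - G_k$ separately and then combine. First I would replace $\hatp_1$ by $p_1$, at the cost of an $O(N^{*-1/2})$ bias contribution and an $O(1/N^*)$ variance contribution coming from the empirical class frequency; both are absorbed into the stated rates.

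Next, fix $k$ and analyze $\widehat G_k - G_k$ by the same route used for the binary base and ensemble estimators (Theorems~\ref{binaryestimatortheorem} and \ref{thm:ensemble_theorem}), the only structural change being that the scalar Lipschitz function $t$ is replaced by $t_k(x_1,\ldots,x_{k-1}) = \max\big(0,\, p_k - \max_{i<k} p_i x_i\big)$, which is Lipschitz in each coordinate with constant $\max_i p_i\le 1$, and the truncation $\tilde t_k$ only decreases these Lipschitz constants. I would split $\widehat G_k - G_k$ into (i) the empirical-process term $\frac{1}{N_k}\sum_i t_k(U_{(\cdot/k)}(X_{k,i})) - \int t_k(U_{(\cdot/k)})f_k$, a centered average of i.i.d.\ bounded summands, hence contributing zero bias and $O(1/N_k)$ variance; and (ii) the plug-in term $\frac{1}{N_k}\sum_i [\tilde t_k(\widehat U^{\bw}(X_{k,i})) - t_k(U(X_{k,i}))]$, which by elementwise Lipschitzness is bounded in absolute value by $\sum_{j<k}\big|\widehat U^{\bw}_{(j/k)}(X_{k,i}) - U_{(j/k)}(X_{k,i})\big|$ plus the truncation gap (itself $O(\eps^\gamma)$-small near the boundary of the support of $f_k$). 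Taking expectations, the bias of (ii) is at most $\sum_{j<k}$ of the $L^1$ error of the ensemble density-ratio estimator, which under Assumptions A.1--A.2 together with the higher-order smoothness/differentiability hypothesis is $O(N^{*-1/2})$: the Chebyshev-weighted ensemble of Theorem~\ref{thm:ensemble_theorem} cancels the $\xi_l^i$ bias terms for $i=1,\ldots,d$ of $\widehat U^{(\eps_l)}$, the residual Taylor terms of order $>d$ are $o(N^{*-1/2})$ for the bandwidth choice $\eps_l = \xi_l N_1^{-1/2d}$, and the remaining $\eps^{-d}N^{-1}$ contribution is exactly $\Theta(N^{*-1/2})$. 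The variance of (ii) is $O(1/N^*)$ by an Efron--Stein argument identical in spirit to Appendices~\ref{sec:app_binary_convergence}--\ref{sec:proof_CLT}: perturbing one sample point changes only $O(1)$ of the $\eps$-ball counts and hence only $O(1)$ summands, each by $O(1/N^*)$, with the lower bound $C_L$ keeping the random denominators $N_{k,i}^{(\eps)}$ bounded away from $0$ with overwhelming probability.

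Finally I would assemble the pieces. For the bias, $\big|\mathbb{B}[\mathcal{H}]\big| \le |\mathbb{B}[\hatp_1]| + \sum_{k=2}^\lambda \big|\mathbb{B}[\widehat G_k - G_k]\big| = (\lambda-1)\cdot O(N^{*-1/2}) = O\big(\lambda/\sqrt{N^*}\big)$. For the variance, note that the $\widehat G_k$ are \emph{not} independent, since the sample $\bX_j$ enters $\widehat U_{(j/k)}$ for every $k>j$; hence I would use $\mathbb{V}\big[\sum_k \widehat G_k\big] \le \big(\sum_k \sqrt{\mathbb{V}[\widehat G_k]}\big)^2 \le (\lambda-1)\sum_k \mathbb{V}[\widehat G_k] = O(\lambda^2/N^*)$ by Cauchy--Schwarz (equivalently, a direct Efron--Stein bound on $\mathcal{H}$ gives the same rate, since one point of $\bX_j$ affects at most $O(\lambda)$ of the inner estimators). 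Combining, $\mathrm{MSE}[\mathcal{H}] = \mathbb{B}^2 + \mathbb{V} = O(\lambda^2/N^*) \to 0$, which yields the asserted $L^2$ convergence. The main obstacle is step (ii): confirming that the ensemble-of-$\eps$-balls bias cancellation of Theorem~\ref{thm:ensemble_theorem}, designed for a single scalar density ratio inside a one-dimensional Lipschitz function, still delivers an $O(N^{*-1/2})$ $L^1$ rate when several \emph{correlated} ratio estimates $\widehat U_{(j/k)}$ are composed through the multivariate, non-differentiable $t_k$ — in particular, that the kink of $t_k$ along $\{p_k = \max_{i<k} p_i x_i\}$ does not inflate the bias, which is precisely where the truncation $\tilde t_k$ and the use of Lipschitz continuity (rather than differentiability) carry the argument.
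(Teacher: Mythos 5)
Your proposal is correct and takes essentially the same approach as the paper: you reduce the bias, via the elementwise Lipschitz continuity of $\tilde t_k$, to the bias plus standard deviation of the ensemble density-ratio estimates $\widehat U^{\bw}_{(j/k)}$ furnished by Theorem~\ref{thm:ensemble_theorem} (the paper packages this step as a multivariate extension of Lemma 3.2 of \citep{noshad2017direct}), and you control the variance by an Efron--Stein/subadditivity argument over the samples, exactly as in Appendix~\ref{section:proofofmulti-classestimatortheorem}, which merely differs cosmetically by equalizing sample sizes with virtual points and grouping the samples as $Z_i=(X_{1,i},\ldots,X_{\lambda,i})$. The only caveat is $\lambda$-bookkeeping: your intermediate per-class bound (a sum of $k-1$ ratio-estimate errors for class $k$) actually accumulates to bias $O(\lambda^2/\sqrt{N^*})$ rather than the $O(\lambda/\sqrt{N^*})$ you assert in the final assembly, but the paper's own proof shares this looseness (its lemma likewise carries a factor of $k-1$ per term), and it does not affect the claimed $L^2$ convergence for fixed $\lambda$.
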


\begin{proof}
See Appendix \ref{section:proofofmulti-classestimatortheorem}.
\end{proof}

\begin{remark}
Note that the estimator $\mathcal{H}_k$ (\ref{Multi-classBayesEstimator}) depends on the ordering of the classes, which is arbitrary. However the asymptotic MSE rates do not depend on the particular class ordering.
\end{remark}
\begin{remark}
In fact, \eqref{eq:Multi-classBayesError} can be transformed into
\begin{align}
  &\epsBayes_{\bp}(f_1,f_2,\ldots,f_\lambda )=1-p_1-\sum_{k=2}^\lambda p_k\int\max\left(0,1-h_k(x)/f_k(x)\right)f_k(x) dx,
\end{align}
%\begin{align}
%  &\epsBayes_{\bp}(f_1,f_2,\ldots,f_\lambda )=1-p_1\nonumber\\
%  &\quad -\sum_{k=2}^\lambda p_k\int\max\left(0,1-h_k(x)/f_k(x)\right)f_k(x) dx,
%\end{align}
where $h_k(x):=\max_{1\le i\le k-1} p_i f_i(x)/p_k$. That shows that the Bayes error rate is actually a linear combination of $(\lambda -1)$ $f$-divergences.
\end{remark}

\begin{remark}
The function $t_k$ is not a properly defined generalized $f$-divergence \citep{duchi2016multiclass}, since
$t_k\left(\frac{p_k}{p_1},\frac{p_k}{p_2},\ldots,\frac{p_k}{p_{k-1}}\right)=0,$
while $t_k(1,1,\ldots,1)$ is not necessarily equal to $0$.
\end{remark}
%%%%%%%%%%%%%%%%%%%%%%%%%%
%%%%%%%%%%%%%%%%%%%]\

%%%%%%%%%%%%%%%%%%%%%%%%%%

\section{Numerical Results}\label{sec:experiment}
We apply the proposed benchmark learner on several numerical experiments for binary and multi-class classification problems. We perform experiments on different simulated datasets with dimensions of up to $d=100$.
We compare the benchmark learner to previous lower and upper bounds on the Bayes error based on HP-divergence \eqref{HP_bound}, as well as to a few powerful classifiers on different classification problem. The proposed benchmark learner is applied on the MNIST dataset with $70k$ samples and $784$ features, learning theoretically the best achievable classification error rate. This is compared to reported performances of state of the art deep learning models applied on this dataset. Extensive experiments regarding the sensitivity with respect to the estimator parameter, the difference between the arithmetic and Chebyshev optimal weights and comparison of the corresponding ensemble benchmark learner performances, and comparison to the previous bounds on the Bayes error and classifiers on various simulated datasets with Gaussian, beta, Rayleigh and concentric distributions are provided in Appendix~\ref{sec:supp_num_results}. 
%We compare the ensemble benchmark learner with different weight assigning methods such as Chebyshev nodes, arithmetic (or integer) nodes and a non-optimal method of uniform weights with arithmetic nodes. 

%We compare the proposed Bayes error estimate to classifiers including SVM, KNN, AdaBoost, Gradient Boosting, and deep neural networks. The experiments are performed on simulated and high-dimensional real datasets.
%In Fig. \ref{figure1}, we compare the performance of the proposed ensemble benchmark learner \eqref{def_Ui} with the HP lower and upper bounds \eqref{HP_bound} as well as the AdaBoost classifier, for a binary classification problem. For each class, the samples are randomly generated from $10$-dimensional Gaussian distributions with different means and the same identity covariance matrices. Each experiment is repeated $30$ times and the average error is reported.
%\vspace{-0.4cm}

%%%%%%%%%%%%%%
Figure \ref{fig:cmp_bound} compares the optimal benchmark learner with the Bayes error lower and upper bounds using HP-divergence, for
a binary classification problems with $10$-dimensional isotropic normal distributions with identity covariance matrix, where the means are shifted by 5 units in the first dimension. 
While the HP-divergence bounds have a large bias, the proposed benchmark learner converges to the true value by increasing sample size.

%, and (b) a $3$-class classification problem with $10$-dimensional Rayleigh distributions with parameters $a=2,4,6$. Both of these experiments express the advantage of the proposed benchmark learner in convergence to the true value compared to the HP lower and upper bounds stated in \eqref{HP_bound}.

\begin{figure}
  \centering
  %\subfigure[Mean square error]{\centering\qquad\quad\includegraphics[width=0.7\textwidth]{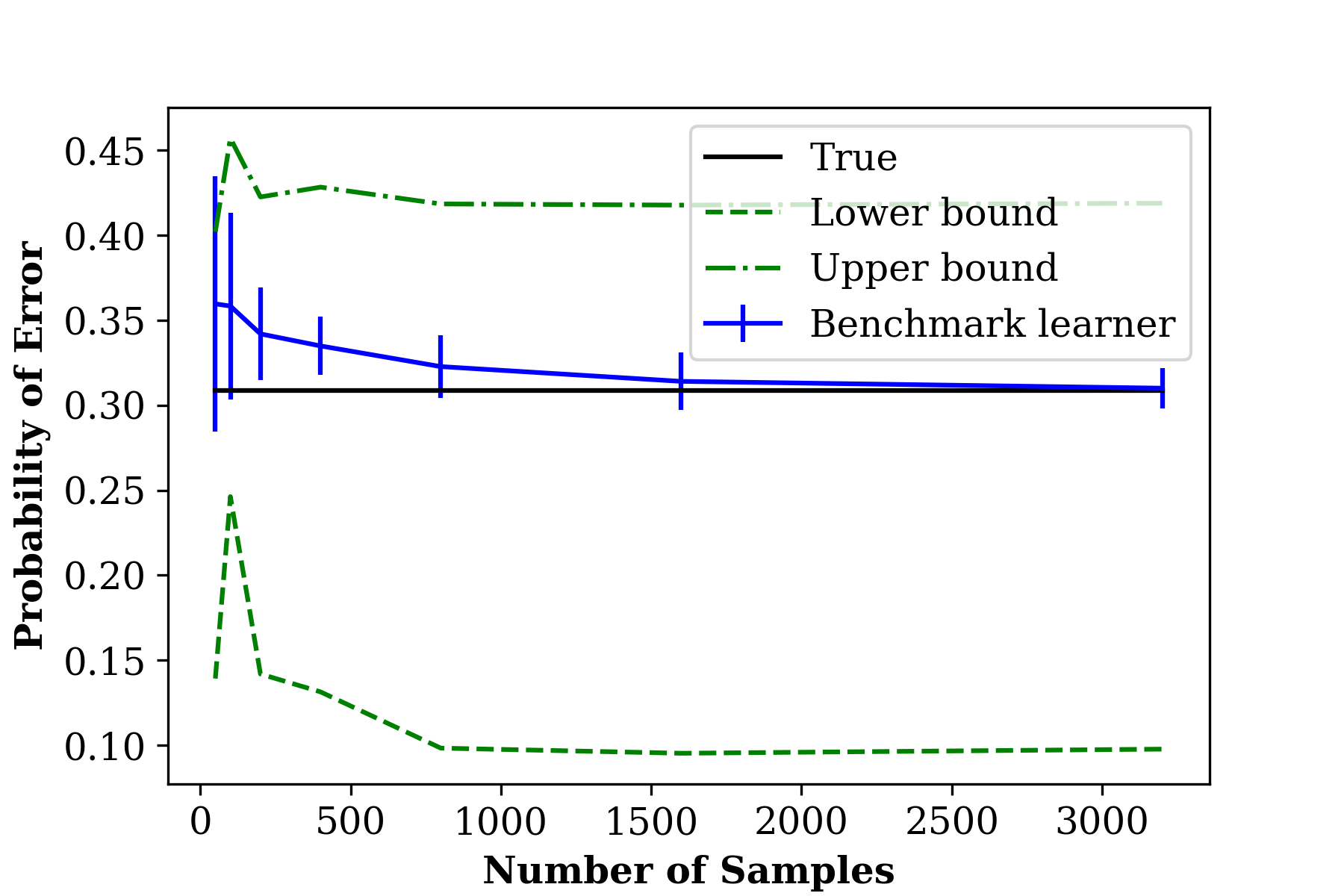}}
  \includegraphics[width=0.7\textwidth]{fig/cmp_bound_normal_d10.png}
  %\newline
  %\subfigure[Mean estimates with $\%95$ confidence intervals]{\centering\includegraphics[width=0.7\textwidth]{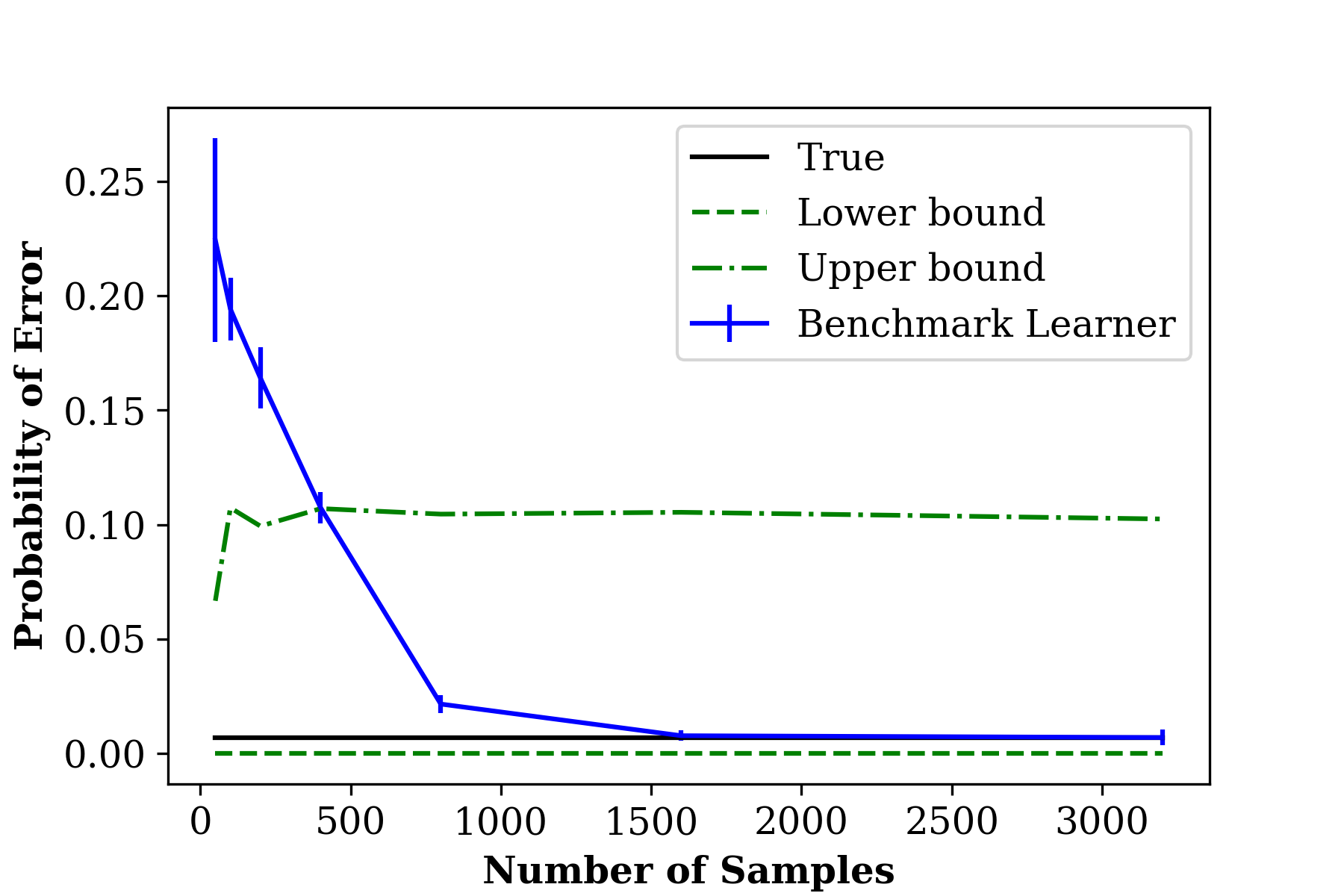}}
\caption{Comparison of the optimal benchmark learner (Chebyshev method) with the Bayes error lower and upper bounds using HP-divergence, for 
 a binary classification problems with $10$-dimensional isotropic normal distributions with identity covariance matrix, where the means are shifted by 5 units in the first dimension. While the HP-divergence bounds have a large bias, the proposed benchmark learner converges to the true value by increasing sample size.}
 %, and (b) a $3$-class classification problem with $10$-dimensional Rayleigh distributions with parameters $a=2,4,6$. While the HP-divergence bounds have a large bias, the proposed benchmark learner converges to the true value by increasing sample size.}
\label{fig:cmp_bound}
\end{figure}

%%%%%%%%%%%%%%%%%%%%%%%%%%%
In Figure~\ref{fig:cmp_clsfr_concentric} we compare the optimal benchmark learner (Chebyshev method) with XGBoost, Random Forest and deep neural network (DNN) classifiers, for a $4$-class classification problem $20$-dimensional concentric distributions. Note that as shown in (b) the concentric distributions are resulted by dividing a Gaussian distribution with identity covariance matrix into four quantiles such that each class has the same number of samples. The DNN classifier consists of 5 hidden layers with $[20,64,64,10,4]$ neurons and ReLU activations. Also in each layer a dropout with rate $0.1$ is applied to diminish the overfitting. The network is trained using Adam optimizer and is trained for $150$ epochs.

\begin{figure}[H]
  \centering
  \subfigure[Four classes with concentric distributions]{\centering\qquad\quad\includegraphics[width=0.65\textwidth]{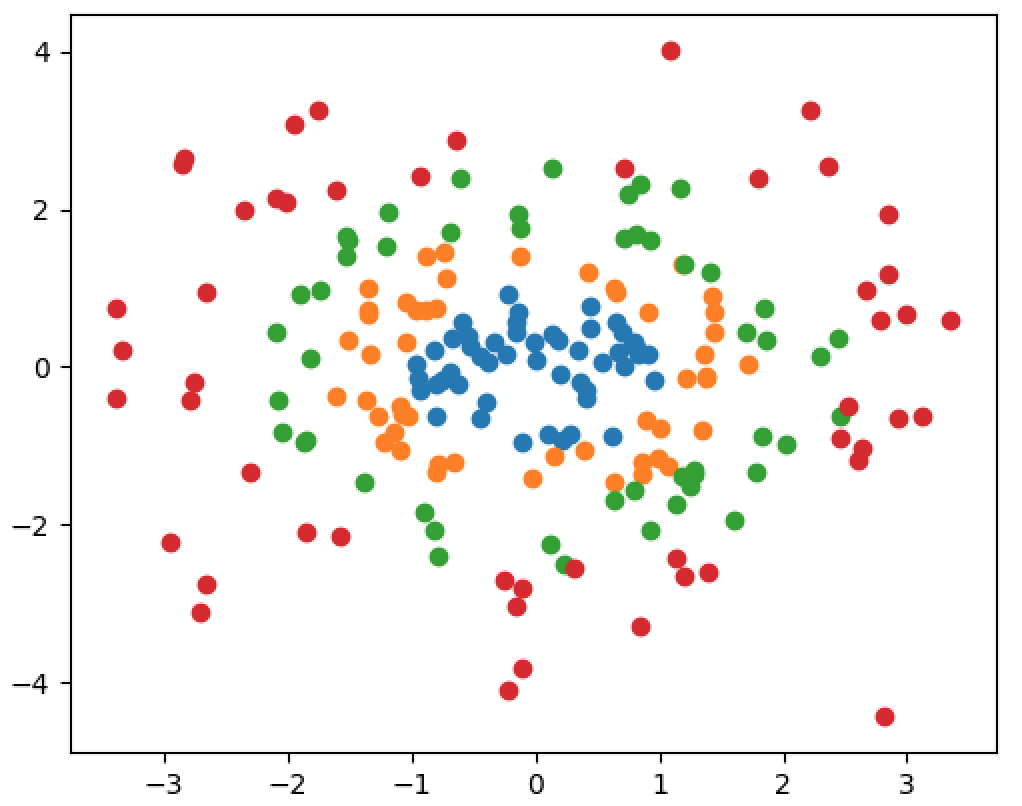}}\newline
  \subfigure[Benchmark learner compared to a 5-layer DNN, XGBoost and Random Forest classifiers for the concentric distributions]{\centering\includegraphics[width=0.7\textwidth]{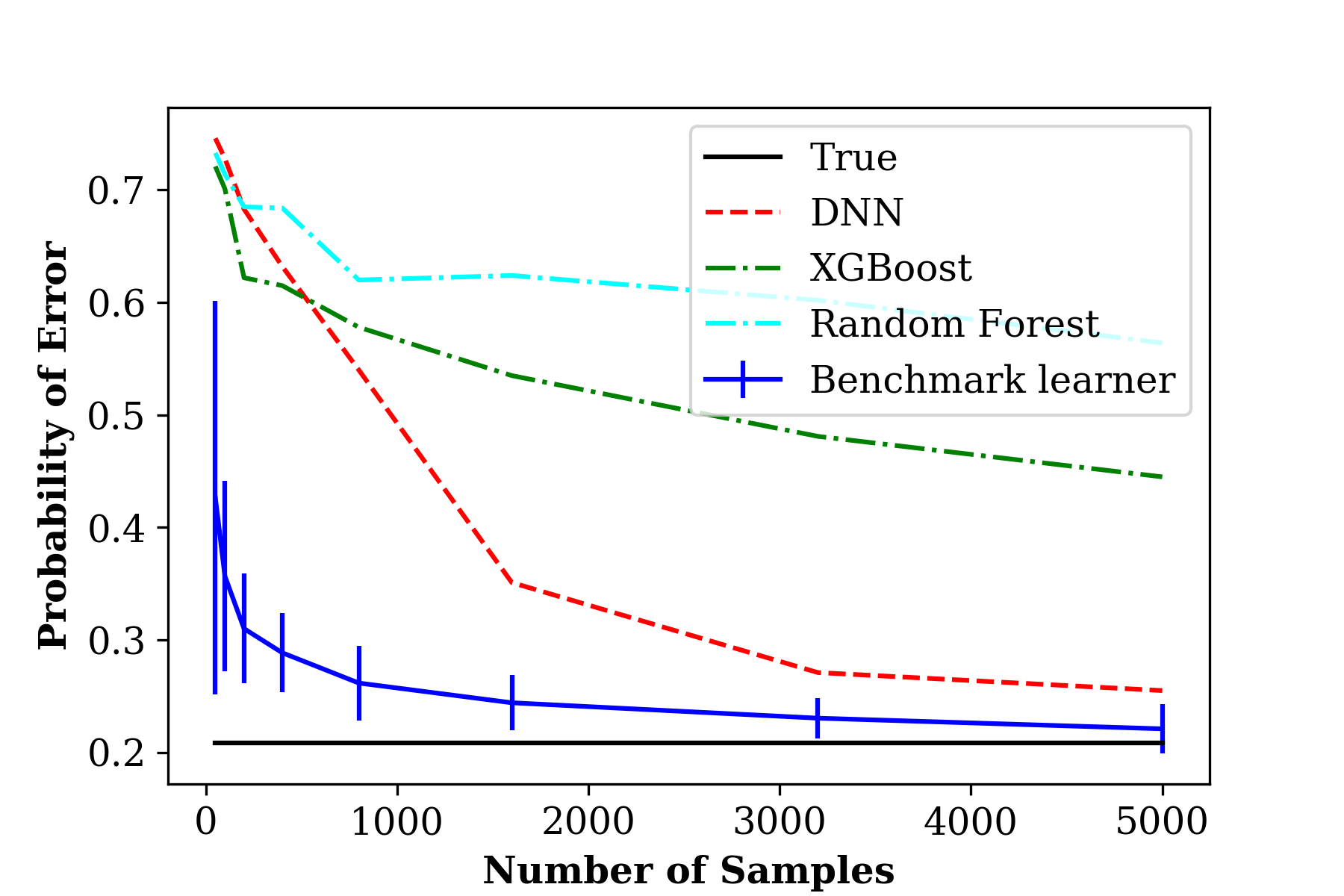}}
\caption{Comparison of the optimal benchmark learner (Chebyshev method) with a 5-layer DNN, XGBoost and Random Forest classifiers, for a $4$-class classification problem $20$-dimensional concentric distributions. Note that as shown in (b), the concentric distributions are resulted by dividing a Gaussian distribution with identity covariance matrix into four quantiles such that each class has the same number of samples. The benchmark learner predicts the Bayes error rate better than the DNN, XGBoost and Random Forest classifiers.}
\label{fig:cmp_clsfr_concentric}
\end{figure}
%%%%%%%%%%%%%%%%%%%%%%%%%%%

%% MNIST experiment

Further, we compute the benchmark learner for the MNIST dataset with 784 dimensions and 60,000 samples. In Table \ref{tab:MNIST_results} we compare the estimated benchmark learner with the reported state of the art convolutional neural network classifiers with 60,000 training samples. Note that according to the online report \citep{MNIST_reported} the listed models achieve the best reported classification performances.

%%\vspace{-0.4cm}
%\begin{figure}[!h]
%	\centering
%	\includegraphics[width=0.7\textwidth]{MNIST_BL.png}
	%\vspace{-0.4cm}
%	\caption{Error probability of the proposed benchmark learner composed of $L=15$ base estimators. \label{fig:MNIST}}%%\vspace{-0.5cm}
%\end{figure}
%%\vspace{-0.4cm}

\begin{table}\label{tab:MNIST}
 \begin{tabular}{|l|l|l|}
 \hline
 Papers & Method  & Error rate \\ [0.5ex]
 \hline\hline
 \citep{cirecsan2010deep} & Single 6-layer DNN  &0.35\%  \\
 \hline
\citep{ciresan2011flexible} & Ensemble of 7 CNNs and training data expansion &0.27\% \\
 \hline
 \citep{cirecsan2012multi} & Ensemble of 35 CNNs &0.23\%  \\
 \hline
 \citep{wan2013regularization} & Ensemble of 5 CNNs and DropConnect regularization &0.21\%  \\
 \hline
 Benchmark learner & Ensemble $\epsilon$-ball estimator &0.14\%  \\ [1ex]
 \hline
\end{tabular}
\caption{Comparison of error probabilities of several the state of the art deep models with the benchmark learner, for the MNIST handwriting image classification dataset}
\label{tab:MNIST_results}
\end{table}

%%%%%%%%%%%%%%%%%%%%
%\section{Benchmark Learner as an Stopping Rule}\label{sec:stopping_rule}

The benchmark learner can also be used as a stopping rule for deep learning models. This is 
demonstrated in figures \ref{fig:DNN_stop_vs_N} and \ref{fig:DNN_stop_vs_iter}. In both of these figures we consider a $3$-class classification problem with $30$-dimensional Rayleigh distributions with parameters $a=0.7,1.0,1.3$. We train a DNN model consisting of 5 layers with $[30,100,64,10,3]$ neurons and RELU activations. Also in each layer a dropout with rate $0.1$ is applied to diminish the overfitting. In Figure. \ref{fig:DNN_stop_vs_N} we feed in different numbers of samples and compare the error rate of the classifier with the proposed benchmark learner. The network is trained using Adam optimizer for 150 epochs. At around $500$ samples, the error rate of the trained DNN is within the confidence interval of the benchmark learner, and one can probably stop increasing the sample number since the error rate of the DNN is close enough to the Bayes error rate. In Figure. \ref{fig:DNN_stop_vs_iter} we feed in $2000$ samples to the network and plot the error rate for different training epochs. At around $80$ epochs, the error rate of the trained DNN is within the confidence interval of the benchmark learner, and we can stop training the network since the error rate of the DNN is close enough to the Bayes error rate. 
%Note that for larger epochs, due to a slight overfitting, the error rate of the DNN classifier increases.

\begin{figure}[!h]
	\centering
	\includegraphics[width=0.7\textwidth]{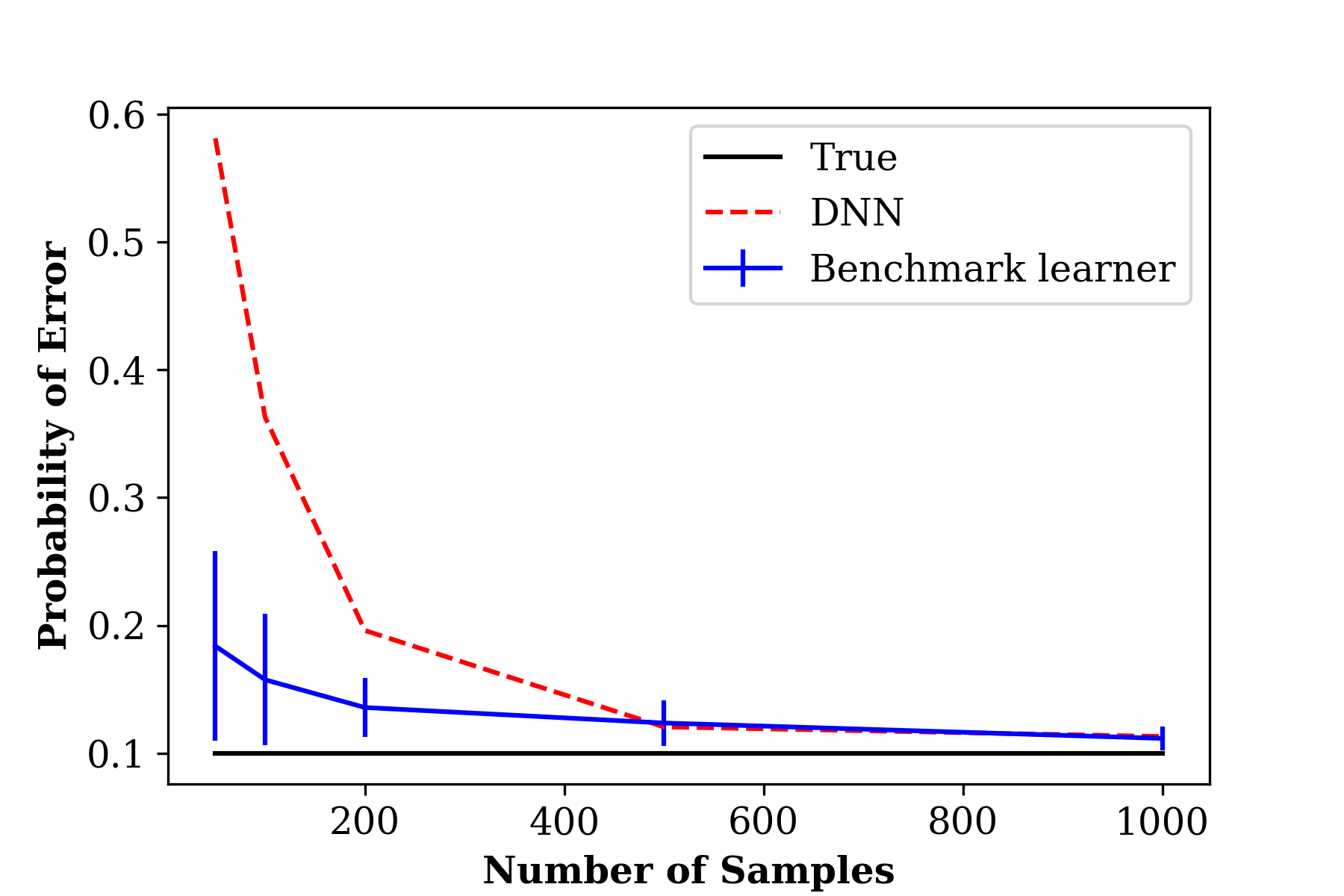}
	%\vspace{-0.4cm}
	\caption{Error rate of a DNN classifier compared to the benchmark learner for a $3$-class classification problem with $30$-dimensional Rayleigh distributions with parameters $a=0.7,1.0,1.3$. We train a DNN model consisting of 5 layers with $[30,100,64,10,3]$ neurons and RELU activations. Also in each layer a dropout with rate $0.1$ is applied to diminish the overfitting. We feed in different numbers of samples and compare the error rate of the classifier with the proposed benchmark learner. The network is trained for about $50$ epochs. At around $500$ samples, the error rate of the trained DNN is within the confidence interval of the benchmark learner, and one can probably stop increasing the sample number since the error rate of the DNN is close enough to the Bayes error rate. \label{fig:DNN_stop_vs_N}}%%\vspace{-0.5cm}
\end{figure}
%%\vspace{-0.4cm}

\begin{figure}[!h]
	\centering
	\includegraphics[width=0.7\textwidth]{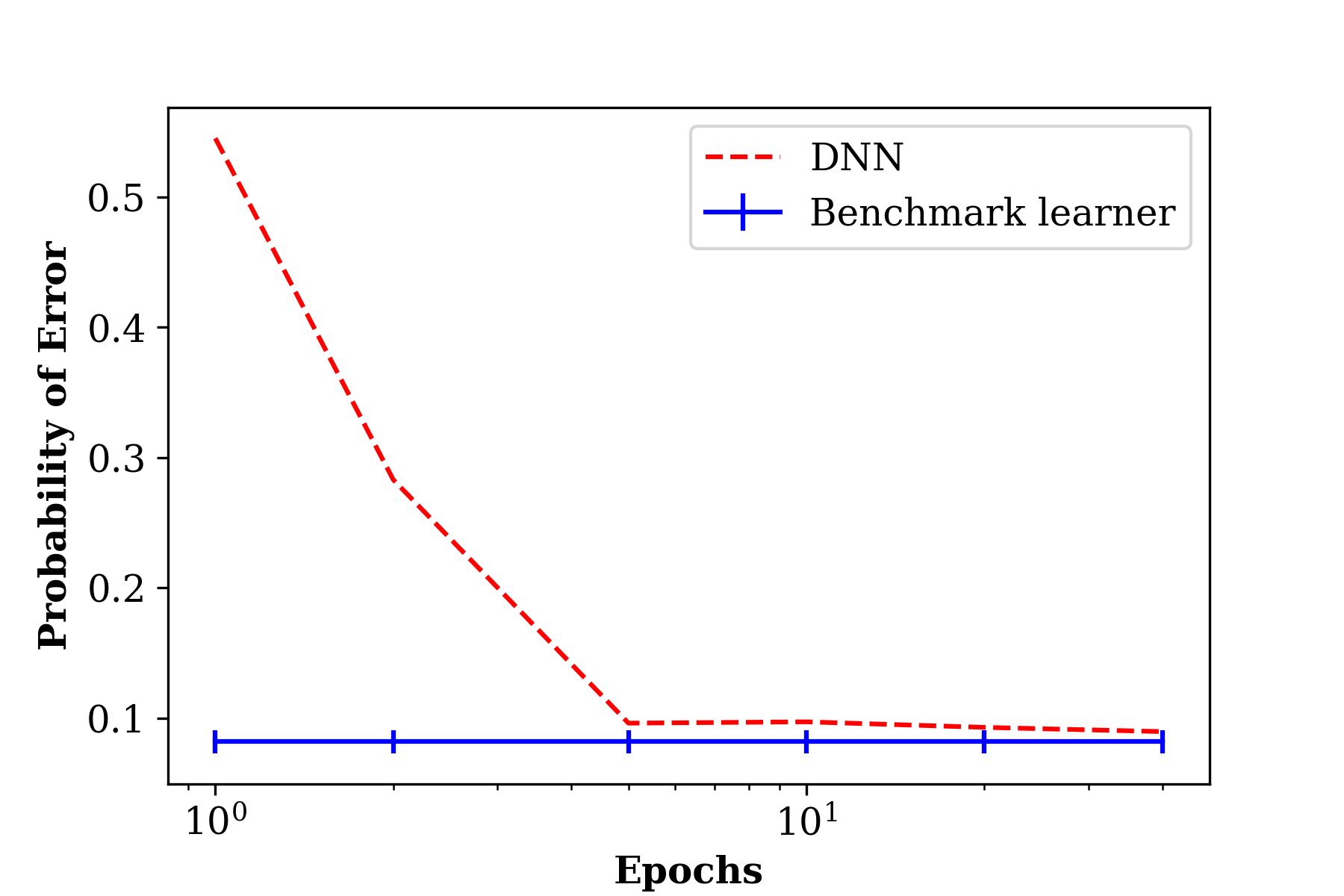}
	%\vspace{-0.4cm}
	\caption{Error rate of a DNN classifier compared to the benchmark learner for a $3$-class classification problem with $30$-dimensional Rayleigh distributions with parameters $a=0.7,1.0,1.3$. We train a DNN model consisting of 5 layers with $[30,100,64,10,3]$ neurons and RELU activations. Also in each layer a dropout with rate $0.1$ is applied to diminish the overfitting. We feed in $2000$ samples to the network and plot the error rate for different training epochs. At around $40$ epochs, the error rate of the trained DNN is within the confidence interval of the benchmark learner, and we can stop training the network since the error rate of the DNN is close enough to the Bayes error rate. 
	\label{fig:DNN_stop_vs_iter}}%%\vspace{-0.5cm}
\end{figure}
%%\vspace{-0.4cm}

%%%%%%%%%%%%%%%%%%%%

\section{Conclusion}\label{sec:conclusion}
In this paper,  a new framework, benchmark learning, was proposed that learns the Bayes error rate for classification problems. An ensemble of base learners was developed for binary classification and it was shown to converge to the exact Bayes error probability with optimal (parametric) MSE rate. An ensemble estimation technique based on Chebyshev polynomials was proposed that provides closed form expressions for the optimum weights of the ensemble estimator.
Finally, the framework was extended to multi-class classification and the proposed benchmark learner was shown to converge to the Bayes error probability with optimal MSE rates.

%\vspace{-0.2cm}

%%%%%%%%%% APPENDIX %%%%%%%%%%%%%
\newpage
\appendix

%\section{}

\section{Proof of Theorem \ref{binaryestimatortheorem}}\label{sec:app_binary_convergence}

Theorem \ref{binaryestimatortheorem} consists of two parts: bias and variance bounds. For the bias proof, from equation \eqref{BayesEstimator} we can write

\begin{align}\label{eq:bias_proof_1}
    \mathbb{E}\left[\Best\right]&= \mathbb{E}\left[\min(\hat{p}_1,\hat{p}_2)-\frac{1}{N_2}\sum_{i=1}^{N_2}\tilde{t}\left(\widehat{U_i}\right)\right]\nonumber\\
    & = \min(\hat{p}_1,\hat{p}_2)-\frac{1}{N_2}\sum_{i=1}^{N_2}\mathbb{E}\left[\tilde{t}\left(\widehat{U_i}\right)\right]\nonumber\\
    & = \min(\hat{p}_1,\hat{p}_2)-\mathbb{E}_{X_{2,1}\sim f_2}\mathbb{E}\left[\tilde{t}\left(\widehat{U}_1\right)\vert X_{2,1}\right]
\end{align}

Now according to equation (33) of \citep{noshad2018hash}, for any region for which its geometry is independent of the samples and the largest diameter within the region is equal to $c\eps$, where $c$ is a constant, then we have

\begin{align}\label{eq:hash_bias_result}
    \mathbb{E}\left[\tilde{t}\left(\widehat{U}_1\right)\vert X_{2,1}=x\right] = \tilde{t}\left(\frac{f_1(x)}{f_2(x)}\right) + O\left(\eps^\gamma\right) + O\left(\frac{1}{N\eps^d}\right). 
\end{align}

Thus, plugging \eqref{eq:hash_bias_result} in \eqref{eq:bias_proof_1} results in

\begin{align}\label{eq:bias_proof_2}
    \mathbb{E}\left[\Best\right]&=
    \min(\hat{p}_1,\hat{p}_2)-\mathbb{E}_{f_2}\left[\tilde{t}\bigg(\frac{f_1(X)}{f_2(X)}\bigg)\right] + O\left(\eps^\gamma\right) + O\left(\frac{1}{N\eps^d}\right),
\end{align}
which completes the bias proof.

\begin{remark}
It can easily be shown that if we use the NNR density ratio estimator (defined in \citep{noshad2017direct}) with parameter $k$, the Bayes error estimator defined in \eqref{BayesEstimator} achieves the bias rate of $O\left(\left(\frac{k}{N}\right)^{\gamma/d}\right) + O\left(\frac{1}{k}\right)$.
\end{remark}

%%%% Variance proof

The approach for the proof of the variance bound is similar to the Hash-based estimator \citep{noshad2018hash}. 
Consider the two sets of nodes $X_{1,i}$, $1\leq i \leq N_1$ and $X_{2,j}$, $1\leq j \leq N_2$. For simplicity we assume that $N_1 = N_2$, however, similar to the variance proofs in \citep{noshad2017direct,noshad2018hash}, by considering a number of virtual points one can easily extend the proof to general $N_1$ and $N_2$. Let $Z_i:=(X_{1,i},X_{2,i})$. For using the Efron-Stein inequality on $\bZ:=(Z_1,...,Z_{N_1})$, we consider another independent copy of $Z$ as  $\bZ':=(Z'_1,...,Z'_{N_1})$ and define $\bZ^{(i)}:=(Z_1,...,Z_{i-1},Z'_i,Z_{i+1},...,Z_{N_1})$. In the following we use the Efron-Stein inequality. Note that we use the shorthand $\mathcal{E}{(\bZ)}:=\Best$.

\begin{align}\label{variance_main_2}
\mathbb{V}\left[\mathcal{E}{(\bZ)}\right] &\leq \frac{1}{2} \sum_{i=1}^{N_1} \mathbb{E}\left[\left(\mathcal{E}{(\bZ)}-\mathcal{E}{(\bZ^{(i)})}\right)^2\right]\nonumber\\
%%% First
&= \frac{N_1}{2} \mathbb{E}\left[\left(\mathcal{E}{(\bZ)}-\mathcal{E}{(\bZ^{(1)})}\right)^2\right]  \nonumber\\
%%% Second Line %%%
&\leq \frac{N_1}{2}\mathbb{E}{\left(\frac{1}{N_1}\sum_{i=1}^{N_1} \widetilde{t}\left(\frac{\eta N_{i,1}}{N_{i,2}}\right)- \frac{1}{N_1}\sum_{i=1}^{N_1} \widetilde{t}\left(\frac{\eta N_{1,i}^{(1)}}{N_{2,i}^{(1)}}\right) \right)^2} \nonumber\\
%%% Third Line %%%
&=\frac{1}{2N_1}\mathbb{E}{\left( \widetilde{t}\left(\frac{\eta N_{1,1}}{N_{1,2}}\right)-  \widetilde{t}\left(\frac{\eta N_{1,1}^{(1)}}{N_{2,1}^{(1)}}\right) \right)^2}  \nonumber\\
%%% Forth Line %%%
&=\frac{1}{2N}O\left(1\right)= O(\frac{1}{N}).
\end{align}
Thus, the variance proof is complete.
%%%%%%%%%%%%%%%%%
%%%%%%%%%%%%%%%%%%%%
%Appendix: Proof of CLT

\section{Proof of Theorem \ref{thm:clt}}\label{sec:proof_CLT}

In this section we provide the proof of theorem \ref{thm:clt}. %Note that we only prove the convergence result for the base estimator of the binary classification case. 
%The convergence proofs for the multiclass and ensemble estimators follows with similar steps. In addition, 
For simplicity we assume that $N_1=N_2$ and we use the notation $N:=N_1$. Also note that for simplicity we use the notation $\widehat{U_i}:=\widehat{U_i}^{(\eps)}$ Using the definition of $\Best$ we have 
\begin{align}\label{eq:CLT_1}
    \sqrt{N}\left(\Best -  \mathbb{E}\left[\Best\right]\right) &= \sqrt{N}\left(\frac{1}{2} - \frac{1}{N}\sum_{i=1}^{N}\tilde{t}\left(\widehat{U_i}\right)- \mathbb{E}\left[\frac{1}{2} - \frac{1}{N}\sum_{i=1}^{N}\tilde{t}\left(\widehat{U_i}\right)\right] \right) \nonumber\\
    & = \frac{1}{\sqrt{N}}\sum_{i=1}^N \left(\tilde{t}\left(\widehat{U_i}\right) - \mathbb{E}\left[\tilde{t}\left(\widehat{U_i}\right)\right]\right)\nonumber\\
    & = \frac{1}{\sqrt{N}}\sum_{i=1}^N \left(\tilde{t}\left(\widehat{U_i}\right) - \mathbb{E}_{\bar{i}}\left[\tilde{t}\left(\widehat{U_i}\right)\right]\right)  \nonumber\\ 
    &\qquad + \frac{1}{\sqrt{N}}\sum_{i=1}^N \left(\mathbb{E}_{\bar{i}}\left[\tilde{t}\left(\widehat{U_i}\right)\right] - \mathbb{E}\left[\tilde{t}\left(\widehat{U_i}\right)\right]\right),
\end{align}
where $\mathbb{E}_{\bar{i}}$ denotes the expectation over all samples $\bX_1,\bX_2$ except $X_{2,i}$.
In the above equation, we denote the first and second terms respectively by $S_1(\bX)$ and $S_2(\bX)$, where $\bX:=(\bX_1,\bX_2)$. In the following we prove that $S_2(\bX)$ converges to a normal random variable, and $S_1(\bX)$ converges to zero in probability. Therefore, using the Slutsky's theorem, the left hand side of \eqref{eq:CLT_1} converges to a normal random variable. 

%%%%%%%%%%%%%%%%%%
\begin{lemma}
Let $N\to \infty$.  Then, $S_2(\bX)$ converges to a normal random variable.
\end{lemma}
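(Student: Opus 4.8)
The plan is to recognize $S_2(\bX)$ as a sum of the Hájek projections of the terms $\tilde t(\widehat U_i)$ onto the individual coordinates of the sample, and then apply a central limit theorem for such sums. Concretely, write $S_2(\bX) = \tfrac{1}{\sqrt N}\sum_{i=1}^N g_i(\bX)$, where $g_i(\bX) := \mathbb{E}_{\bar i}[\tilde t(\widehat U_i)] - \mathbb{E}[\tilde t(\widehat U_i)]$. The key observation is that, conditioned on all samples except $X_{2,i}$, the estimate $\widehat U_i$ depends on $X_{2,i}$ together with the (fixed) positions of the other points, and the map $X_{2,i}\mapsto \mathbb{E}_{\bar i}[\tilde t(\widehat U_i)\mid X_{2,i}]$ is a deterministic function, so $g_i$ is (approximately) a function of $X_{2,i}$ alone. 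More precisely I would decompose $g_i(\bX)$ into the part that depends only on $X_{2,i}$ plus a remainder capturing the influence of $X_{2,i}$ on the $\eps$-ball counts of \emph{other} query points; under $\eps\to 0$ and $1/(\eps^d N)\to 0$ the number of query points whose $\eps$-ball contains a given $X_{2,i}$ is $O(\eps^d N)=o(N)$ on average, so the remainder is asymptotically negligible in $L^2$.

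The main steps, in order, are: (i) condition on $X_{2,i}$ and show $\mathbb{E}_{\bar i}[\tilde t(\widehat U_i)\mid X_{2,i}=x] = \tilde t(f_1(x)/f_2(x)) + o(1)$ uniformly, using the bias estimate \eqref{eq:hash_bias_result} from the proof of Theorem~\ref{binaryestimatortheorem}; this identifies the leading term of $g_i$ as $h(X_{2,i}) := \tilde t(f_1(X_{2,i})/f_2(X_{2,i})) - \mathbb{E}[\tilde t(f_1(X)/f_2(X))]$, an i.i.d., zero-mean, bounded (hence finite-variance, by Assumption A.1 which bounds the density ratio in $[C_L/C_U, C_U/C_L]$ and makes $\tilde t$ bounded) sequence; (ii) apply the classical Lindeberg–Lévy CLT to $\tfrac{1}{\sqrt N}\sum_i h(X_{2,i})$, giving convergence to $\mathcal N(0,\sigma^2)$ with $\sigma^2 = \Var[h(X_{2,1})]$; (iii) control the cross terms: show that $\tfrac{1}{\sqrt N}\sum_i (g_i(\bX) - h(X_{2,i}))$ has vanishing second moment, by bounding the covariance structure — two query points interact only if their $\eps$-balls overlap a common data point, an event of probability $O(\eps^d)$, so the sum of cross-covariances is $O(N\cdot \eps^d) = o(N)$ after normalization; (iv) combine via Slutsky (or directly) to conclude $S_2(\bX)\overset{d}{\to}\mathcal N(0,\sigma^2)$, and note $\sigma^2>0$ unless the density ratio is a.s. constant, which is the nondegenerate case.

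I expect the main obstacle to be step (iii): making rigorous that the dependence introduced because a single point $X_{2,i}$ simultaneously appears in the $\eps$-balls of many query points does not accumulate. The clean way is to use the Efron–Stein inequality (as the paper's proof sketch advertises) to bound $\Var[S_1(\bX)]$ — that is the content of the companion lemma showing $S_1\to 0$ — and a parallel argument to bound $\mathbb{E}[(S_2 - \tfrac{1}{\sqrt N}\sum_i h(X_{2,i}))^2]$; both reduce to estimating, for a single resampled coordinate, how much $\mathcal E$ (resp.\ the conditional expectations) can change, which is $O(1/N)$ times the number of affected query points, and the condition $1/(\eps^d N)\to 0$ together with $\eps\to 0$ is exactly what forces this to be $o(1)$ after summing. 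Once the remainder is shown negligible, the CLT for the i.i.d.\ principal part is routine. A secondary technical point is justifying the uniform $o(1)$ bias approximation under the scaling $\eps\to0$, $1/(\eps^d N)\to 0$; this follows from \eqref{eq:hash_bias_result} and boundedness of $\tilde t$, with no new ideas needed.
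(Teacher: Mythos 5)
Your proposal is correct, but it is a more elaborate argument than the one the paper actually gives, and the comparison is instructive. The paper's proof of this lemma is a one-liner: since $\mathbb{E}_{\bar i}$ integrates out every sample except $X_{2,i}$, the quantity $A_i(\bX)=\mathbb{E}_{\bar i}[\tilde t(\widehat U_i)]-\mathbb{E}[\tilde t(\widehat U_i)]$ is an exact deterministic function of $X_{2,i}$ alone, so the $A_i$ are i.i.d.\ and the "standard CLT" is invoked. Strictly speaking that common distribution still depends on $N$ (through $\eps_N$ and the number of points integrated out), so the $A_i$ form a row-wise i.i.d.\ triangular array, and the paper's appeal to the classical CLT glosses over this. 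Your route repairs exactly that: you replace $A_i$ by its $N$-free limit $h(X_{2,i})=\tilde t\big(f_1(X_{2,i})/f_2(X_{2,i})\big)-\mathbb{E}[\tilde t(f_1/f_2)]$, apply Lindeberg--L\'evy to the i.i.d.\ sum, and show the substitution error vanishes in $L^2$ using the uniform bias bound \eqref{eq:hash_bias_result} under $\eps\to0$, $1/(\eps^d N)\to0$, finishing with Slutsky; this also identifies the limiting variance $\Var[\tilde t(f_1/f_2)]$. What the paper's argument buys is brevity; what yours buys is an actual handling of the triangular-array issue. One inaccuracy in your write-up: you describe $g_i$ as only "approximately" a function of $X_{2,i}$ and devote step (iii) to cross terms coming from $X_{2,i}$ entering the $\eps$-balls of other query points. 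But $g_i$ involves only $\widehat U_i$, and after $\mathbb{E}_{\bar i}$ it is \emph{exactly} a measurable function of $X_{2,i}$; hence the differences $g_i-h(X_{2,i})$ are independent and mean zero across $i$, the cross-covariances vanish identically, and the remainder's second moment is simply $\Var\big(g_1-h(X_{2,1})\big)\le\sup_x\big|\mathbb{E}_{\bar 1}[\tilde t(\widehat U_1)\mid X_{2,1}=x]-\tilde t(f_1(x)/f_2(x))\big|^2=O\big(\eps^\gamma+1/(N\eps^d)\big)^2\to0$, with no Efron--Stein or overlap-counting needed (those considerations belong to the companion lemma on $S_1$, where the dependence across points is genuine). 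So your plan goes through, and in fact more easily than you anticipated.
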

\begin{proof}

Let $A_i(\bX):=\mathbb{E}_{\bar{i}}\left[\tilde{t}\left(\widehat{U_i}\right)\right] - \mathbb{E}\left[\tilde{t}\left(\widehat{U_i}\right)\right]$. Since for all $i\in \{1,...,N\}$, $A_i(\bX)$ are i.i.d. random variables, using the standard central limit theorem \citep{durrett2019probability}, $S_2(\bX)$ converges to a normal random variable. 
\end{proof}
%%%%%%%%%%%%%%%%%%%%%%%

\begin{lemma}
Let $\eps\to 0$ and $\frac{1}{\eps^dN}\to 0$. Then, $S_1(\bX)$ converges to $0$ in mean square.
\end{lemma}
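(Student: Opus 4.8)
We need to show that $S_1(\bX) = \frac{1}{\sqrt{N}}\sum_{i=1}^N \left(\tilde t(\widehat{U_i}) - \mathbb{E}_{\bar i}[\tilde t(\widehat{U_i})]\right)$ converges to $0$ in mean square, i.e. $\mathbb{E}[S_1(\bX)^2] \to 0$. Let me define $B_i(\bX) := \tilde t(\widehat{U_i}) - \mathbb{E}_{\bar i}[\tilde t(\widehat{U_i})]$. Note each $B_i$ has conditional mean zero given all samples except $X_{2,i}$. The key observation: $\mathbb{E}[B_i B_j]$ is not zero in general because $\widehat{U_i}$ and $\widehat{U_j}$ both depend on the shared samples $\bX_1$ and on nearby points of $\bX_2$; but the dependence is *weak*. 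So this is a variance-of-a-sum-of-weakly-dependent-terms estimate.

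**Structure of the argument.**

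Expanding, $\mathbb{E}[S_1^2] = \frac{1}{N}\sum_{i,j}\mathbb{E}[B_iB_j]$. First, the diagonal terms: $\mathbb{E}[B_i^2] = \mathbb{V}_{\bar i}[\tilde t(\widehat{U_i})] \le \mathbb{E}[(\tilde t(\widehat{U_i}))^2] = O(1)$, since $\tilde t$ is bounded on the relevant range (the density ratio lies in $[C_L/C_U, C_U/C_L]$ up to estimation error, and $\tilde t$ is Lipschitz hence bounded there). So the diagonal contributes $\frac{1}{N}\cdot N\cdot O(1) = O(1)$ — which is *not* $o(1)$. Hence the real point must be that $\mathbb{E}[B_i^2]$ itself is small: I claim $\mathbb{E}[B_i^2] = O(1/(N\eps^d))$, because conditioning on everything except $X_{2,i}$ leaves only the randomness of a single point $X_{2,i}$, and $\tilde t(\widehat{U_i})$ as a function of $X_{2,i}$ has fluctuations controlled by the number of sample points in an $\eps$-ball, which is $\Theta(N\eps^d)$; a one-point change moves $\widehat{U_i}$ by $O(1/(N\eps^d))$ relative to its typical value, and $\tilde t$ is Lipschitz. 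More carefully, $\mathbb{V}_{\bar i}[\tilde t(\widehat U_i)]$ should be bounded using that, conditioned on $\bX_1$ and $\bX_2\setminus\{X_{2,i}\}$, the quantity $N_{1,i}^{(\eps)}$ is a sum of $N_1$ independent indicators (Bernoulli of parameter $\approx C\eps^d$), so its variance is $O(N\eps^d)$ while its mean is $\Theta(N\eps^d)$; propagating through the ratio and the Lipschitz $\tilde t$ gives $\mathbb{V}_{\bar i}[\tilde t(\widehat U_i)] = O(1/(N\eps^d))$. Then the diagonal contributes $\frac1N\cdot N\cdot O(1/(N\eps^d)) = O(1/(N\eps^d)) \to 0$ by hypothesis.

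**The off-diagonal terms.**

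For $i\ne j$, write $\mathbb{E}[B_iB_j]$. Here I would exploit that $\mathbb{E}_{\bar i}[B_i]=0$: conditioning on all samples except $X_{2,i}$, the factor $B_j$ is *almost* measurable with respect to that $\sigma$-algebra — it fails to be only because $\widehat{U_j}$ can depend on $X_{2,i}$, which happens only when $X_{2,i}$ lies within the $\eps$-ball of $X_{2,j}$, an event of probability $O(\eps^d)$. On the complement of that event $\mathbb{E}[B_iB_j \mid \bar i\text{-samples}] = B_j\,\mathbb{E}_{\bar i}[B_i] = 0$. On the event itself we bound $|B_iB_j|\le O(1)$ (or more sharply $O(1/(N\eps^d))$ per factor). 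So $|\mathbb{E}[B_iB_j]| = O(\eps^d)$ crudely, giving off-diagonal total $\frac1N\cdot N^2\cdot O(\eps^d) = O(N\eps^d)$, which goes the *wrong* way. I therefore need the sharper per-factor bound on the bad event: $|B_j|=O(1)$ always but I should instead argue that conditionally on $X_{2,i}$ being in $X_{2,j}$'s ball, the change is still $O(1/(N\eps^d))$ in each factor in the generic case, and also that each $i$ has only $O(N\eps^d)$ indices $j$ with overlapping balls, so $\sum_{j\ne i}|\mathbb{E}[B_iB_j]| = O(N\eps^d)\cdot O(1/(N\eps^d)^2) = O(1/(N\eps^d))$, and the full off-diagonal sum is again $\frac1N \cdot N \cdot O(1/(N\eps^d)) = O(1/(N\eps^d)) \to 0$. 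Combining diagonal and off-diagonal, $\mathbb{E}[S_1(\bX)^2] = O(1/(N\eps^d)) \to 0$.

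**Main obstacle.**

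The delicate point is the uniform control of the conditional fluctuation $\mathbb{V}_{\bar i}[\tilde t(\widehat U_i)] = O(1/(N\eps^d))$ and the analogous per-pair bound, because $\widehat U_i = \eta N_{1,i}^{(\eps)}/N_{2,i}^{(\eps)}$ is a ratio and the denominator can in principle be small. One handles this exactly as in the bias analysis: under Assumption A.1 the expected ball counts are $\Theta(N\eps^d)$, and standard concentration (Chernoff for the binomial ball-counts) shows the denominator stays within a constant factor of its mean except on an event of probability decaying faster than any polynomial in $N\eps^d$, whose contribution is negligible because $\tilde t$ is bounded by construction (via the $\max(\cdot,t(C_L/C_U))$ flooring). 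On the good event the Lipschitz bound on $\tilde t$ together with a first-order expansion of the ratio gives the stated $O(1/(N\eps^d))$ fluctuation. Once these two moment bounds are in hand, assembling $\mathbb{E}[S_1^2]=O(1/(N\eps^d))=o(1)$ is immediate, and combined with the previous lemma and Slutsky's theorem this completes the proof of Theorem~\ref{thm:clt}.
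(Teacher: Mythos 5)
Your decomposition $\mathbb{E}[S_1^2]=\frac1N\sum_{i,j}\mathbb{E}[B_iB_j]$ is a legitimate alternative to the paper's route (the paper notes $\mathbb{E}[S_1]=0$ and then bounds $\mathbb{V}[S_1]$ by Efron--Stein with single-point resampling), and your diagonal bound is essentially right: with the paper's convention that $\mathbb{E}_{\bar i}$ integrates out \emph{all samples except} $X_{2,i}$, the term $B_i=\tilde t(\widehat{U_i})-\mathbb{E}_{\bar i}[\tilde t(\widehat{U_i})]$ is centered conditionally on $X_{2,i}$, so $\mathbb{E}[B_i^2]=\mathbb{E}\bigl[\mathbb{V}[\tilde t(\widehat{U_i})\mid X_{2,i}]\bigr]=O\bigl(1/(N\eps^d)\bigr)$ by the binomial ball-count argument you give. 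But note your wording has the conditioning backwards in two places ("conditional mean zero given all samples except $X_{2,i}$", "conditioning on everything except $X_{2,i}$ leaves only the randomness of $X_{2,i}$"): if you literally freeze everything except $X_{2,i}$, then $N_{1,i}^{(\eps)}$ is not a sum of independent indicators and the fluctuation of $\tilde t(\widehat{U_i})$ over the random location $X_{2,i}$ is $O(1)$, not $O(1/(N\eps^d))$. The correct reading, which your "more careful" computation actually uses, is: condition on the center $X_{2,i}$ and let the other samples be random.

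The genuine gap is in the off-diagonal terms, and it comes from this same mix-up. Your key cancellation claims that, off the event $\{X_{2,i}\in B(X_{2,j},\eps)\}$, $\mathbb{E}[B_iB_j\mid \text{all except }X_{2,i}]=B_j\,\mathbb{E}[B_i\mid \text{all except }X_{2,i}]=0$. This is false: $\mathbb{E}[B_i\mid \text{all except }X_{2,i}]=\int\bigl(\tilde t(\widehat{U_i}(x;\text{rest}))-\mathbb{E}_{\bar i}[\tilde t(\widehat{U_i})]\big|_{X_{2,i}=x}\bigr)f_2(x)\,dx$, which is a mean-zero but not identically zero random variable (what vanishes identically is $\mathbb{E}[B_i\mid X_{2,i}]$, under which conditioning $B_j$ is not measurable, so no factorization is available). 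Consequently your accounting "only $O(N\eps^d)$ overlapping pairs, each contributing $O(1/(N\eps^d)^2)$, non-overlapping pairs contribute zero" is unsupported; the shared dependence of $\widehat{U_i}$ and $\widehat{U_j}$ on the common samples $\bX_1$ and $\bX_2$ produces nonzero cross-covariances even when the two $\eps$-balls are disjoint, and controlling $N\,\mathbb{E}[B_iB_j]$ is exactly the hard part of the lemma. The paper sidesteps this by applying Efron--Stein to $S_1$ directly: there the per-coordinate differences $\Delta_i$ (for $i$ not equal to the resampled index) are \emph{exactly} zero unless the swapped point enters or leaves the ball of $X_{2,i}$, which is the small-probability, small-magnitude structure your argument tries to invoke but does not actually have for the $B_iB_j$ products. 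To repair your route you would need a genuine bound on the cross term $\mathbb{E}[B_1B_2]$ of order $o(1/N)$ (e.g., via a conditional-covariance expansion given $X_{2,1},X_{2,2}$ together with the weak negative dependence of disjoint ball counts), or else switch to the Efron--Stein argument.
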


\begin{proof}
In order to prove that MSE converges to zero, we need to compute the bias and variance terms separately. The bias term is obviously equal to zero since 
\begin{align}
    \mathbb{E}[S_1(\bX)] &= \mathbb{E}\left[\frac{1}{\sqrt{N}}\sum_{i=1}^N \left(\tilde{t}\left(\widehat{U_i}\right) - \mathbb{E}_{\bar{i}}\left[\tilde{t}\left(\widehat{U_i}\right)\right]\right)\right]\nonumber\\
    & = \frac{1}{\sqrt{N}}\sum_{i=1}^N \left(\mathbb{E}\left[\tilde{t}\left(\widehat{U_i}\right)\right] - \mathbb{E}\left[\tilde{t}\left(\widehat{U_i}\right)\right]\right)\nonumber\\
    &=0.
\end{align}

Next, we find an upper bound on the variance of $S_1(\bX)$ using the Efron-Stein inequality. Let $\bX':=(\bX'_1,\bX'_2)$ denote another copy of $\bX=(\bX_1,\bX_2)$ with the same distribution. We define the resampled dataset as 
\begin{align}
    \bX^{(j)}:= \left\{
    \begin{array}{cc}
         (X_{1,1},...,X_{1,j-1},X'_{1,j},X_{1,j+1},...,X_{1,N},X_{2,1},...,X_{2,N}) &  \text{if } \quad N+1\leq j\leq 2N  \\ 
         (X_{1,1},...,X_{1,N},X_{2,1},...,X_{2,j-1},X'_{2,j},X_{2,j+1},...,X_{2,N}) & \text{if } \quad 1\leq j\leq N 
    \end{array}
    \right.
\end{align} 

Let $\Delta_i=:\tilde{t}\left(\widehat{U_i}\right) - \mathbb{E}_{\bar{i}}\left[\tilde{t}\left(\widehat{U_i}\right)\right] -  \tilde{t}\left(\widehat{U_i}^{(1)}\right) + \mathbb{E}_{\bar{i}}\left[\tilde{t}\left(\widehat{U_i}^{(1)}\right)\right]$. Using the Efron-Stein inequality we can write
\begin{align}\label{eq:CLT_2}
    \mathbb{V}\left[S_1(\bX_1,\bX_2)\right] &\leq \frac{1}{2}\sum_{j=1}^{2N} \mathbb{E}\left[\left(S_1(\bX) - S_1(\bX^{(j)})\right)^2\right]\nonumber\\
    % 2nd line
    & =  N \mathbb{E}\left[\left(S_1(\bX) - S_1(\bX^{(1)})\right)^2\right]\nonumber\\
    %3rd line
    & =  \mathbb{E}\left[\left(\sum_{i=1}^N \Delta_i\right)^2\right],\nonumber\\
    %4th line
    & =  \sum_{i=1}^N \mathbb{E}\left[\Delta_i^2\right]+\sum_{i\neq j} \mathbb{E}\left[\Delta_i\Delta_j\right].
\end{align}
We obtain bounds on the first and second terms in equation \eqref{eq:CLT_2}. 
First, we obtain separate bounds on $\mathbb{E}\left[\Delta_i^2\right]$ for $i=1$ and $i\neq 1$. We have
\begin{align}
    \mathbb{E}\left[\Delta_1^2\right] &= \mathbb{E}\left[\left(\tilde{t}\left(\widehat{U_i}\right) - \mathbb{E}_{\bar{i}}\left[\tilde{t}\left(\widehat{U_i}\right)\right] -  \tilde{t}\left(\widehat{U_i}^{(1)}\right) + \mathbb{E}_{\bar{i}}\left[\tilde{t}\left(\widehat{U_i}^{(1)}\right)\right]\right)^2\right]\nonumber\\
    % 2nd line
     &= \mathbb{E}\left[\left(\tilde{t}\left(\widehat{U_i}\right) - \mathbb{E}_{\bar{i}}\left[\tilde{t}\left(\widehat{U_i}\right)\right]\right)^2\right] +  \mathbb{E}\left[\left(\tilde{t}\left(\widehat{U_i}^{(1)}\right) - \mathbb{E}_{\bar{i}}\left[\tilde{t}\left(\widehat{U_i}^{(1)}\right)\right]\right)^2\right]\nonumber\\
    % 3rd line
    & \qquad - 2\mathbb{E}\left[\left(\tilde{t}\left(\widehat{U_i}\right) - \mathbb{E}_{\bar{i}}\left[\tilde{t}\left(\widehat{U_i}\right)\right] \right)\left(  \tilde{t}\left(\widehat{U_i}^{(1)}\right) - \mathbb{E}_{\bar{i}}\left[\tilde{t}\left(\widehat{U_i}^{(1)}\right)\right]\right)\right] \nonumber\\
    %4th line
    & \leq 4 \mathbb{E}\left[\left(\tilde{t}\left(\widehat{U_i}\right) - \mathbb{E}_{\bar{i}}\left[\tilde{t}\left(\widehat{U_i}\right)\right]\right)^2\right]\label{eq:CLT_3_1}\\
    %5th line
    & \leq 4 \mathbb{E}_{X_1}\left[\mathbb{E}_{X_{\bar{1}}}\left[\left(\tilde{t}\left(\widehat{U_i}\right) - \mathbb{E}_{\bar{i}}\left[\tilde{t}\left(\widehat{U_i}\right)\right]\right)^2 \Big\vert X_1=x \right]\right]\nonumber\\
    %5th line
    & \leq 4 \mathbb{E}_{X_1}\left[\mathbb{V}\left[\tilde{t}\left(\widehat{U_i}\right)  \right]\right]\label{eq:CLT_3_2}\\
    %6th line
    & \leq O(\frac{1}{N})\label{eq:CLT_3_3}.
\end{align}

%In \eqref{eq:CLT_3_1} we used the fact that 

Now for the case of $i\neq 1$ note that $\mathbb{E}_{\bar{i}}\left[\tilde{t}\left(\widehat{U_i}\right)\right] = \mathbb{E}_{\bar{i}}\left[\tilde{t}\left(\widehat{U_i}^{(1)}\right)\right]$. Thus, we can bound $\mathbb{E}\left[\Delta_i^2\right]$ as 
\begin{align}\label{eq:CLT_4}
    \mathbb{E}\left[\Delta_i^2\right] &= \mathbb{E}\left[\left(\tilde{t}\left(\widehat{U_i}\right) -  \tilde{t}\left(\widehat{U_i}^{(1)}\right) \right)^2\right]\nonumber\\
    & \leq O\left(\eps^d\right)\left(1-O\left(\eps^d\right)\right)O\left(\left(\frac{1}{\eps^dN}\right)^2\right) = \frac{1}{N}O\left(\frac{1}{\eps^dN}\right).
\end{align}

Hence, using \eqref{eq:CLT_3_3} and \eqref{eq:CLT_4} we get
\begin{align}
    \sum_{i=1}^N \mathbb{E}\left[\Delta_i^2\right] \leq O\left(\frac{1}{\eps^dN}\right).
\end{align}

Note that we can similarly prove that 
the bound $\sum_{i\neq j} \mathbb{E}\left[\Delta_i\Delta_j\right]\leq O\left(\frac{1}{\eps^dN}\right)$. Thus, from equation \eqref{eq:CLT_2} we have $\mathbb{V}\left[S_1(\bX_1,\bX_2)\right]\leq O\left(\frac{1}{\eps^dN}\right)$, which convergence to zero if the assumption $\frac{1}{\eps^dN}\to 0$ holds. 

%Let $B_i(\bX):= \tilde{t}\left(\widehat{U_i}\right) -\mathbb{E}_{\bar{i}}\left[\tilde{t}\left(\widehat{U_i}\right)\right] $
\end{proof}

%%%%%%%%%%%%%%%%%%

%%%%%%%%%%%%%%%
%\renewcommand{\thechapter}{A}
\section{Proof of Theorem~\ref{thm:ensemble_theorem}}\label{section:proofofensemble_theorem}

First note that since $N_{1,1}$ and $N_{2,1}$ are independent we can write 
\begin{align}\label{eq:E_ratio}
\mathbb{E}\left[\frac{N_{1,i}}{N_{2,i}}\middle\vert X_{2,i}\right] &= \mathbb{E}\left[N_{1,i}\middle\vert X_{2,i}\right] \mathbb{E}\left[{N^{-1}_{2,i}}\middle\vert X_{2,i}\right].
\end{align}
From (37) and (38) of \citep{noshad2018hash} we have

\begin{align}
    \mathbb{E}\left[N_{1,i}\right] &=N_1\epsilon^d\left[f_1(X_{2,i})+ \sum_{l=1}^q C_l(X_{2,i})\epsilon^{l}+O\left(C_q(X_{2,i})\epsilon^{q}\right)\right],\label{eq:ens_proof_1}\\
    % second line
    \mathbb{E}\left[(N_{2,i})^{-1}\right] &=N_{2}^{-1}\epsilon^{-d}\left[f_2(X_{2,i})+ \sum_{l=1}^q C_l(X_{2,i})\epsilon^{l}+O\left(C_q(X_{2,i})\epsilon^{q}\right)\right]^{-1}\left(1+O\left(\frac{1}{N_2\epsilon^df_2(X_{2,i})}\right)\right),\label{eq:ens_proof_2}
\end{align}
where $C_i(x)$ for $1\leq i\leq q$ are functions of $x$.
Plugging equations \eqref{eq:ens_proof_1} and \eqref{eq:ens_proof_2} into \eqref{eq:E_ratio} results in 

\begin{align}
    \mathbb{E}\left[\frac{\eta N_{1,i}}{N_{2,i}}\middle\vert X_{2,i}\right] &= \frac{f_1(X_{2,i})}{f_2(X_{2,i})} +\sum_{i=1}^q C''_i\epsilon^{i}+O\left(\frac{1}{N\epsilon^d}\right),
\end{align}
where $C''_1,...,C''_q$ are constants. 

Now apply the ensemble theorem (\citep{moon2018ensemble}, Theorem 4). Let $\mathcal{T}:=\{t_1,...,t_T\}$ be a set of index values with $t_i<c$, where $c>0$ is a constant. Define $\epsilon(t):=tN^{-1/2d}$. According to the ensemble theorem in (\citep{moon2018ensemble}, Theorem 4) if we choose the parameters $\psi_i(t)=t^{i/d}$ and $\phi'_{i,d}(N)=\phi_{i,\kappa}(N)/N^{i/d}$, the following weighted ensemble converges to the true value with the MSE rate of $O(1/N)$:
\begin{align}\label{def_Ui_w}
    %\vspace{-0.2cm}
    \widehat{U}_i^{\bw}:=\sum_{l=1}^{L} w_{l}\widehat{U}_i,
\end{align}
where the weights $w_l$ are the solutions of the optimization problem in equation \eqref{eq:optimization}. Thus, the bias of the ensemble estimator can be written as

%%\vspace{-0.15cm}
\begin{align}
\mathbb{E}_{\bar{X_{i}}}\left[\left.\widehat{U}_i^{\bw}\right|X_{2,i}\right]&=\frac{f_1(X_{2,i})}{f_2(X_{2,i})} + O(1/\sqrt{N_1}).\label{eq:weightedestimator}
\end{align}

By Lemma 4.4 in \citep{noshad2017direct} and the fact that function $t(x):=|p_1 x-p_2|-p_1x$ is Lipschitz continuous with constant $2p_1$,
\begin{align}
&\left|\mathbb{E}_{\bar{X_{i}}}[t(\widehat{U}_i^{\bw})|X_{2,i}]-t\left(\frac{f_1(X_{2,i})}{f_2(X_{2,i})}\right)\right|\le 2p_1\left( \sqrt{\mathbb{V}_{\bar{X_{i}}}[\widehat{U}_i^{\bw}|X_{2,i}]}+\left|\mathbb{B}_{\bar{X_{i}}}[\widehat{U}_i^{\bw}|X_{2,i}]\right|\right).
\end{align}
Here $\mathbb{B}$ and $\mathbb{V}$ represent bias and variance, respectively. By \eqref{eq:weightedestimator}, we have $\mathbb{B}_{\bar{X_{i}}}[\widehat{U}_i^{\bw}|X_{2,i}]=O(1/\sqrt{N_1})$; and by Theorem 2.2 in \citep{noshad2017direct},  $\mathbb{V}_{\bar{X_{i}}}[\widehat{U}_i^{\bw}|X_{2,i}]=O(1/N_1).$ Thus,
\begin{align}
\mathbb{E}_{\bar{X_{i}}}[t(\widehat{U}_i^{\bw})|X_{2,i}]-t\left(\frac{f_1(X_{2,i})}{f_2(X_{2,i})}\right)=O(1/\sqrt{N_1}).
\end{align}
So the bias of the estimator $\mathcal{F}(\bX_1,\bX_2)$ is given by
\begin{align}
\mathbb{B}(\mathcal{F}(\bX_1,\bX_2))&=\left|\mathbb{E}_{\bX_1,\bX_2}\left[ \frac{1}{2N_2}\sum_{i=1}^{N_2} t(\widehat{U}_i^{\bw})\right]-\frac{1}{2}\mathbb{E}_{X_{2,i}}\left[t\left(\frac{f_1(X_{2,i})}{f_2(X_{2,i})}\right)\right]\right|\nonumber\\
&=\frac{1}{2N_2}\sum_{i=1}^{N_2}\left|\mathbb{E}_{X_{2,i}}\left[\mathbb{E}_{\bar{X_{i}}}[t(\widehat{U}_i^{\bw})|X_{2,i}]-t\left(\frac{f_1(X_{2,i})}{f_2(X_{2,i})}\right)\right]\right|= O(1/\sqrt{N_1}).
\end{align}
Finally, since the variance of $\widehat{U}_i^{\bw}$ can easily be upper bounded by $O(1/N)$ using the Efron-Stein inequality using the same steps in Appendix. \ref{sec:app_binary_convergence}.

%\subsection{Proof of Theorem~\ref{Theorem:multi-classtheorem}}\label{section:proofofmulti-classtheorem}
%
%It follows from the following inequality. Assume that $a_1\ge a_2\ge \cdots \ge a_\lambda $. Then
%%%\vspace{-0.15cm}
%\begin{align}
%&\frac{1}{\lambda }\sum_{i=1}^\lambda  a_i+\frac{1}{\lambda (\lambda -1)}\sum_{1\le i<j\le \lambda }|a_i-a_j|\nonumber\\
%\le & \max(a_1,a_2,\ldots,a_\lambda ) \le \frac{1}{\lambda }\sum_{i=1}^\lambda  a_i+\frac{1}{\lambda }\sum_{1\le i<j\le \lambda }|a_i-a_j|,\label{eq:inequalitylemma}
%\end{align}
%where the first and second equalities respectively holds when $a_1\ge a_2=\cdots=a_\lambda $; and $a_1=\cdots=a_{\lambda -1}\ge a_\lambda $.
%\vspace{-0.2cm}

%%%%% Chebyshev Ensemble Estimation %%%%%

\section{Proof of Theorem \ref{thm:chebyshev_optimization}} \label{sec:chebyshevproof}

In order to prove the theorem we first prove that the solutions of the constraint in \eqref{eq:optimization} for $t_i=s_i$ can be written as a function of the shifted Chebyshev polynomials. Then we find the optimal solutions of $w_i$ which minimize $\|w\|_2^2$.

\begin{lemma}
All solutions of the constraint

\begin{align}\label{eq:orig_const}
    &\sum_{k=0}^{L-1} \omega _ { k } s _ { k } ^ { j } = 0 ,\quad \forall j \in \{1,...,d\}\nonumber\\
    &\sum _ {k=0}^{L-1} \omega _ { k } = 1,
\end{align}
have the following form
\begin{align}\label{eq:wi_solutions}
    w_i = \sum_{k=0}^{d}\frac{2T^\alpha_k(0)}{L} T^\alpha_k(s_i) + \sum_{k=d+1}^{L-1}c_k T^\alpha_k(s_i) - \frac{1}{L}  \qquad \forall i \in \{0,...,L-1\},
\end{align}
for some $c_k\in \mathbb{R}$, $ k\in \{d+1,...,L-1\}$, and for any $c_k\in \mathbb{R}$, $ k\in \{d+1,...,L-1\}$, $w_i$ given by \eqref{eq:wi_solutions} satisfy the equations in \eqref{eq:orig_const}.

\end{lemma}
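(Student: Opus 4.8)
The plan is to pass to ``evaluation coordinates'': identify a weight vector $\bw=(w_0,\dots,w_{L-1})$ with the sequence of its values on the node set, expand that sequence in the basis furnished by the shifted Chebyshev polynomials, and translate everything into linear algebra in $\mathbb{R}^L$. The single tool that makes this work is the discrete orthogonality of the $T^\alpha_k$ at the zeros $s_0,\dots,s_{L-1}$ of $T^\alpha_L$: since $T^\alpha_k(s_j)=T_k\!\big(\cos((j+\tfrac12)\pi/L)\big)$, the classical Chebyshev--Gauss identity gives, for $0\le m,n\le L-1$,
\begin{align*}
\sum_{k=0}^{L-1} T^\alpha_m(s_k)\,T^\alpha_n(s_k)=
\begin{cases}
0,& m\ne n,\\
L/2,& m=n\ne 0,\\
L,& m=n=0.
\end{cases}
\end{align*}
Consequently the vectors $\big(T^\alpha_k(s_0),\dots,T^\alpha_k(s_{L-1})\big)$, $k=0,\dots,L-1$, form an orthogonal basis of $\mathbb{R}^L$, and any $\bw$ has a unique expansion $w_i=\sum_{k=0}^{L-1}c_k T^\alpha_k(s_i)$; the asserted formula \eqref{eq:wi_solutions} will just be this expansion with a specific choice of the $c_k$.

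First I would rewrite the constraints \eqref{eq:orig_const} in the Chebyshev basis. Writing $T^\alpha_j(x)=\sum_{m=0}^{j}a_{j,m}x^m$, the two conditions $\sum_k w_k=1$ (i.e.\ $\sum_k w_k s_k^0=1$) and $\sum_k w_k s_k^m=0$ for $1\le m\le d$ give, for each $0\le j\le d$,
\begin{align*}
\sum_{k=0}^{L-1} w_k\,T^\alpha_j(s_k)=\sum_{m=0}^{j}a_{j,m}\sum_{k}w_k s_k^{m}=a_{j,0}=T^\alpha_j(0).
\end{align*}
Conversely, since $\{1,x,\dots,x^d\}$ and $\{T^\alpha_0,\dots,T^\alpha_d\}$ span the same space of polynomials of degree $\le d$, the system $\{\sum_k w_k T^\alpha_j(s_k)=T^\alpha_j(0)\}_{j=0}^{d}$ is equivalent to \eqref{eq:orig_const}. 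So it suffices to characterize all $\bw$ satisfying this equivalent system.

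Next I would substitute $w_i=\sum_{k=0}^{L-1}c_k T^\alpha_k(s_i)$ into $\sum_i w_i T^\alpha_j(s_i)=T^\alpha_j(0)$ and invoke the orthogonality relations: the left-hand side collapses to $c_0\cdot L$ when $j=0$ and to $c_j\cdot(L/2)$ when $1\le j\le d$. Using $T^\alpha_0\equiv 1$ this forces $c_0=1/L$ and $c_j=2T^\alpha_j(0)/L$ for $1\le j\le d$, while $c_{d+1},\dots,c_{L-1}$ are unconstrained. Since $\tfrac{2T^\alpha_0(0)}{L}-\tfrac1L=\tfrac1L=c_0$, I can merge the $k=0$ term into the first sum at the cost of the additive $-1/L$, obtaining exactly \eqref{eq:wi_solutions}. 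The converse claim — that for arbitrary $c_{d+1},\dots,c_{L-1}$ the $w_i$ of \eqref{eq:wi_solutions} solve \eqref{eq:orig_const} — follows by running the orthogonality computation in reverse: one checks $\sum_i w_i T^\alpha_j(s_i)=T^\alpha_j(0)$ for $j=0,\dots,d$, hence $\sum_i w_i p(s_i)=p(0)$ for every polynomial $p$ of degree $\le d$, and specializing to $p(x)=x^m$ recovers \eqref{eq:orig_const}.

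The only genuinely delicate point is the two different normalizing constants in the discrete orthogonality identity ($L$ for the zeroth mode versus $L/2$ for the others); keeping track of this asymmetry is precisely what produces the $-1/L$ correction term and the non-uniform coefficient at $k=0$ in the final formula. Beyond that, the argument is routine finite-dimensional linear algebra, so I do not anticipate further obstacles.
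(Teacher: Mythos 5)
Your proposal is correct and follows essentially the same route as the paper: translate the moment constraints into the equivalent system $\sum_k w_k T^\alpha_j(s_k)=T^\alpha_j(0)$ for $j=0,\dots,d$, expand $w$ in the shifted Chebyshev values at the roots of $T^\alpha_L$, and use discrete orthogonality to pin down the first $d+1$ coefficients while leaving $c_{d+1},\dots,c_{L-1}$ free (with the same reverse computation for the converse). The only difference is cosmetic: you invoke the node-evaluation vectors directly as an orthogonal basis of $\mathbb{R}^L$, whereas the paper reaches the same expansion through Chebyshev interpolation of a function $f$ with $f(s_i)=w_i$, the remainder term vanishing at the nodes.
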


\begin{proof}

We can rewrite \eqref{eq:orig_const} as

\begin{align}\label{eq:Chebyshev_1}
    &\sum _ { j=0 }^d \sum _ {k=0}^{L-1} \omega _ { k } x _ { j } s _ { k } ^ {j} = x _ { 0 } \quad \forall x _ { j } \in \mathbb { R }.
\end{align}

Note that setting $\forall i \in \{1,...,d\}, x_i=0$ in \eqref{eq:Chebyshev_1} yields the second constraint in \eqref{eq:optimization}, and $\forall i\neq j, x_i=0$ results in the first set of $d$ constraints in \eqref{eq:optimization}.
Using the fact that $\sum _ { j } \sum _ { k } \omega _ { k } x _ { j } s _ { k } ^ { j } = \sum _ { k } \omega _ { k } \sum _ { j }  x _ { j } s _ { k } ^ { j } $ we can equivalently write the constraint as

\begin{align}\label{eq:Chebyshev_2}
    \sum _ { k=0 }^{L-1} \omega _ { k } f \left( s _ { k } \right) = f ( o ) \quad \forall f \in P _ { d },
\end{align}
where $P_d$ is the family of the polynomials of degree $d$. One can expand the polynomial $f(x)\in P_d$ defined in $[0,\alpha]$ in the Chebyshev polynomial basis:

$$f(x)=\sum_{i=0}^dr_iT^\alpha_i(x).$$

Thus, we can write the constraint in \eqref{eq:Chebyshev_2} as 

\begin{align}
    &\sum _ { k=0 }^{L-1} \omega _ { k }\sum_{j=0}^d r_j T^\alpha _ { j } \left( s _ { k } \right) = \sum_{j=0}^d r_j T^\alpha _ { j } ( 0 ) \quad \forall r_j\in \mathbb{R},
\end{align}
which can be further formulated as
\begin{align}
    &\sum_{j=0}^d r_j \sum _ { k=0 }^{L-1} \omega _ { k } T^\alpha _ { j } \left( s _ { k } \right) = \sum_{j=0}^d r_j T^\alpha _ { j } ( 0 ) \quad \forall r_j\in \mathbb{R},
\end{align}
which is equivalent to the following constraint in the Chebyshev polynomials basis:

\begin{align}\label{eq:Chebyshev_3}
    &\sum _ { k=0 }^{L-1} \omega _ { k } T^\alpha _ { j } \left( s _ { k } \right) = T^\alpha _ { j } ( 0 ) \quad \forall j \in \{0,...,d\}.
\end{align}

Now we use the Chebyshev polynomial approximation method in order to simplify the optimization problem in equation \eqref{eq:optimization}. 
Define a function $f:[0,\alpha]\to\mathbb{R}$ such that $f(s_i)=w_i, i\in \{0,...,L-1\}$.
%, where $s_i$ are the roots of the Chebyshev polynomial $T^\alpha_L(x)$ given by

%\begin{align}
%    s_{k}=\frac{1}{2}\cos \left(\left(k+\frac{1}{2}\right) \frac{\pi}{L}\right)+\frac{1}{2}, \quad k=0, \ldots, L-1
%\end{align}
We can write $f(x)$ in terms of Chebyshev interpolation polynomials with the $L$ points $0<s_0,...,s_{L-1}<1$ as

\begin{align}\label{eq:plnm_approx}
    f(x) = \sum_{k=0}^{L-1}c_k T^\alpha_k(x) - \frac{c_0}{2}  + R(x),
\end{align}
where $R(x)$ is the error of approximation and is given by

\begin{align}
    R(x) = \frac{f^{(L)}(\xi)}{L!} \prod_{j=0}^{L-1}\left(x-s_{j}\right),
\end{align}
for some $\xi \in [0,\alpha]$. Thus we have

\begin{align}\label{eq:Chb_basis}
    w_i = f(s_i) = \sum_{k=0}^{L-1}c_k T^\alpha_k(s_i) - \frac{c_0}{2}  \qquad \forall i \in \{0,...,L-1\}.
\end{align}

The interpolation coefficients in \eqref{eq:plnm_approx} can be computed as follows

\begin{equation}\label{eq:Chb_coeff}
c_{k}=\frac{2}{L} \sum_{j=0}^{L-1} f\left(s_{j}\right) T^\alpha_{k}\left(s_{j}\right) \qquad \forall k \in \{0,...,L-1\}.
\end{equation}

Comparing the equation \eqref{eq:Chb_coeff} with the constraint in \eqref{eq:Chebyshev_3} we get

\begin{equation}\label{eq:coeff_equations}
    c_k = \frac{2T^\alpha_k(0)}{L} \quad \forall k\in \{0,...,d\}.
\end{equation}

Thus, we can write equation \eqref{eq:Chb_basis} as 

\begin{align}\label{eq:Chb_basis_2}
    w_i = f(s_i) = \sum_{k=0}^{d}\frac{2T^\alpha_k(0)}{L} T^\alpha_k(s_i) + \sum_{k=d+1}^{L-1}c_k T^\alpha_k(s_i) - \frac{1}{L}  \qquad \forall i \in \{0,...,L-1\}.
\end{align}

Next, for any $c_k\in \mathbb{R}$, $ k\in \{d+1,...,L-1\}$, $w_i$ given by \eqref{eq:wi_solutions} satisfy equation \eqref{eq:Chebyshev_3}, which is an equivalent form of the original constraints in equation \eqref{eq:orig_const}. Using \eqref{eq:Chb_basis_2} we can write:

\begin{align}\label{eq:const_equiv_check}
    \sum _ { i=0 }^{L-1} \omega_i T^\alpha _ { j } \left( s_i \right) &= \sum _ { i=0 }^{L-1} T^\alpha_{j}(s_i) \left[\sum_{k=0}^{d}\frac{2T^\alpha_k(0)}{L} T^\alpha_k(s_i) + \sum_{k=d+1}^{L-1}c_k T^\alpha_k(s_i) - \frac{c_0}{2} \right]\nonumber\\
    &=  \sum_{k=0}^{d}\frac{2T^\alpha_k(0)}{L} \sum_{i=0 }^{L-1} T^\alpha_{j}(s_i) T^\alpha_k(s_i) + \sum_{k=d+1}^{L-1}c_k \sum_{i=0 }^{L-1} T^\alpha_{j}(s_i)T^\alpha_k(s_i) - \sum_{i=0 }^{L-1} T^\alpha_{j}(s_i)\frac{T^\alpha_0(s_i)}{L},
\end{align}
where for the last term we have used the fact that $c_0=\frac{2T^\alpha_0(0)}{L}=\frac{2T^\alpha_0(s_i)}{L}=\frac{2}{L}$ from equation \eqref{eq:coeff_equations}. Now in order to simplify equation \eqref{eq:const_equiv_check}, we use the orthogonality property of the Chebyshev (and shifted Chebyshev) polynomials. That is, if $s_i$ are the zeros of $T_L^*$, then 
\begin{align}\label{eq:orthogonality}
    \sum_{i=0 }^{L-1} T^\alpha_{j}(s_i)T^\alpha_k(s_i) = K_j\delta_{kj},
\end{align}
where $K_j=L$ for $j=0$ and $K_j=L/2$ for $L-1\geq j>0$. Hence, \eqref{eq:const_equiv_check} simplifies to 
\begin{align}\label{eq:const_equiv_check_2}
    \sum _ { i=0 }^{L-1} \omega_i T^\alpha _ { j } \left( s_i \right) &= \sum_{k=0}^{d}\frac{2T^\alpha_k(0)}{L} K_j\delta_{kj} + \sum_{k=d+1}^{L-1}c_k K_j\delta_{kj} - K_0\delta_{0j}\frac{1}{L}.
\end{align}
Thus, for $j=0$ we get 
\begin{align}\label{eq:const_equiv_check_3}
    \sum _ { i=0 }^{L-1} \omega_i T^\alpha _ { j } \left( s_i \right) &= 2T^\alpha_0(0)-1 = T^\alpha_0(0),
\end{align}
and for $d\geq j>0$ we get 
\begin{align}\label{eq:const_equiv_check_4}
    \sum _ { i=0 }^{L-1} \omega_i T^\alpha _ { j } \left( s_i \right) = T^\alpha_j(0),
\end{align}
which shows that $w_i$ satisfy the constraint in equation \eqref{eq:Chebyshev_3}, which is an equivalent form of the original constraints in equation \eqref{eq:orig_const}. The proof of the lemma is complete. 
\end{proof}

%%%%%%%%%%%%%%%%%%%%%%%%%%%%%
\textbf{Proof of Theorem \ref{thm:chebyshev_optimization}: }
In \eqref{eq:wi_solutions}, $c_k$, $k\in \{d+1,...,L-1\}$ will be determined such that the term $\|w\|_2^2$ in the original optimization problem is minimized. 
Using \eqref{eq:wi_solutions}, the objective function of the optimization problem in \eqref{eq:optimization} can be simplified as

\begin{align}\label{eq:norm_expansion}
    \|w\|_2^2 &= \sum_{i=0}^{L-1}w_i^2\nonumber\\ 
     &= \sum_{i=0}^{L-1} f(s_i)^2\nonumber\\
    &=\sum_{i=0}^{L-1} A_{i}^2 +\sum_{i=0}^{L-1} 2A_i\sum_{k=d+1}^{L-1}c_kT^\alpha_k(s_i) + \sum_{i=0}^{L-1} \left(\sum_{k=d+1}^{L-1}c_k T^\alpha_k(s_i)\right)^2
\end{align}
where $A_{i}:= \sum_{k=0}^{d}\frac{2T_k^*(0)}{L} T^\alpha_k(s_i) - \frac{1}{L} $. Note that since the first term in \eqref{eq:norm_expansion} is constant, the minimization of $\|w\|_2^2$ is equivalent to minimization of the following quadratic expression in terms of the variables $\{c_{d+1},...,c_{L-1}\}$:

\begin{align} \label{eq:Chb_opt}
    G(c_{d+1},...,c_{L-1}) := \sum_{i=0}^{L-1} 2A_i\sum_{k=d+1}^{L-1}c_kT^\alpha_k(s_i) + \sum_{i=0}^{L-1} \left(\sum_{k=d+1}^{L-1}c_kT^\alpha_k(s_i)\right)^2.
\end{align}

We first show that the first term in \eqref{eq:Chb_opt} is equal to zero.

\begin{align}
    \sum_{i=0}^{L-1} 2A_i\sum_{k=d+1}^{L-1}c_kT^\alpha_k(s_i) &= \sum_{i=0}^{L-1} 2\left(\sum_{k=0}^{d}\frac{2T_k^*(0)}{L} T^\alpha_k(s_i) - \frac{1}{L} \right)\sum_{k=d+1}^{L-1}c_kT^\alpha_k(s_i)\nonumber\\ 
    &=\frac{2}{L}\sum_{i=0}^{L-1} \sum_{k=0}^{d}\sum_{j=d+1}^{L-1}2T_k^*(0) T^\alpha_k(s_i)c_jT^\alpha_j(s_i) - \sum_{i=0}^{L-1}\sum_{j=d+1}^{L-1}c_jT^\alpha_j(s_i) \nonumber\\ 
    &=\frac{2}{L} \sum_{k=0}^{d}\sum_{j=d+1}^{L-1}2T_k^*(0) c_j\sum_{i=0}^{L-1} T^\alpha_k(s_i)T^\alpha_j(s_i) - \sum_{j=d+1}^{L-1}c_j\sum_{i=0}^{L-1}T^\alpha_j(s_i)T^\alpha_0(s_i) \nonumber\\ 
    &= 0.
\end{align}

Note that in the third line, we have used the identity $T_0^*(s_i)=1$. In the fourth line we have used the orthogonality identity \eqref{eq:orthogonality}. 
Finally, setting $c_{d+1} =...= c_{L-1}=0$ minimizes the second term and as a result $G(c_{d+1},...,c_{L-1})$. Thus, the optimal solutions of $w_i$ are given as
\begin{align}
    w_i = \frac{2}{L}\sum_{k=0}^{d}T^\alpha_k(0) T^\alpha_k(s_i) - \frac{1}{L}  \qquad \forall i \in \{0,...,L-1\},
\end{align}
which completes the proof.
%%%%%%%%%%%%%%%%%%

\section{Proof of Theorem~\ref{multi-crlassestimatortheorem}}\label{section:proofofmulti-classestimatortheorem}
\textbf{Bias proof: }In the following we state a multivariate generalization of Lemma 3.2 in \citep{noshad2017direct}.

\begin{lemma}
Assume that $g(x_1,x_2,\ldots,x_k): \mathcal{X}\times \cdots \times \mathcal{X} \to \mathbb{R}$ is Lipschitz continuous with constant $H_g > 0$, with respect to $x_1,\ldots,x_k$. If $\widehat{T}_i$ where $0\leq i \leq k$ be random variables, each one with a variance $\mathbb{V}[\widehat{T}_i]$ and a bias with respect to given constant values $T_i$, defined as $\mathbb{B}[\widehat{T}_i]:=T_i - \mathbb{E}[\widehat{T}_i]$, then the bias of $g(\widehat{T}_1,\ldots,\widehat{T}_k)$ can be upper bounded by
%%\vspace{-0.15cm}
\begin{align}\label{multi_bias_bound}
&\left|\mathbb{E}\left[g(\widehat{T}_1,\ldots,\widehat{T}_k)-g(T_1,\ldots,T_k)\right]\right| \leq H_g \sum_{i=1}^k  \left(\sqrt{\mathbb{V}[\widehat{T}_i]} + \left|\mathbb{B}[\widehat{T}_i]\right|\right).
\end{align}
%%\vspace{-0.3cm}
{\bf Proof:}
\end{lemma}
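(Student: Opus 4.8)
The plan is to reduce the whole statement to the scalar Jensen inequality after a single use of the Lipschitz hypothesis. First I would use the coordinatewise Lipschitz property of $g$ together with a telescoping argument --- changing the arguments from $(T_1,\ldots,T_k)$ to $(\widehat{T}_1,\ldots,\widehat{T}_k)$ one coordinate at a time --- to obtain the pointwise bound
\begin{align*}
\bigl|g(\widehat{T}_1,\ldots,\widehat{T}_k)-g(T_1,\ldots,T_k)\bigr|\le H_g\sum_{i=1}^k\bigl|\widehat{T}_i-T_i\bigr|.
\end{align*}
Taking expectations and using $\bigl|\mathbb{E}[\,\cdot\,]\bigr|\le \mathbb{E}\bigl|\,\cdot\,\bigr|$ together with linearity of expectation then gives
\begin{align*}
\Bigl|\mathbb{E}\bigl[g(\widehat{T}_1,\ldots,\widehat{T}_k)-g(T_1,\ldots,T_k)\bigr]\Bigr|\le H_g\sum_{i=1}^k \mathbb{E}\bigl|\widehat{T}_i-T_i\bigr|.
\end{align*}

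Next I would bound each term $\mathbb{E}\bigl|\widehat{T}_i-T_i\bigr|$ by writing $\widehat{T}_i-T_i=\bigl(\widehat{T}_i-\mathbb{E}[\widehat{T}_i]\bigr)-\mathbb{B}[\widehat{T}_i]$, applying the triangle inequality, and then controlling the centered part by $\mathbb{E}\bigl|\widehat{T}_i-\mathbb{E}[\widehat{T}_i]\bigr|\le\sqrt{\mathbb{E}\bigl[(\widehat{T}_i-\mathbb{E}[\widehat{T}_i])^2\bigr]}=\sqrt{\mathbb{V}[\widehat{T}_i]}$ via Jensen's inequality (equivalently Cauchy--Schwarz). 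This yields $\mathbb{E}\bigl|\widehat{T}_i-T_i\bigr|\le\sqrt{\mathbb{V}[\widehat{T}_i]}+\bigl|\mathbb{B}[\widehat{T}_i]\bigr|$, and substituting into the previous display finishes the argument.

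As for difficulties: there is essentially nothing of substance here --- the only point requiring care is making precise the meaning of ``Lipschitz with constant $H_g$ with respect to $x_1,\ldots,x_k$'', namely the coordinatewise statement that $\bigl|g(\ldots,x_j,\ldots)-g(\ldots,x_j',\ldots)\bigr|\le H_g|x_j-x_j'|$ with the remaining coordinates held fixed, so that the telescoping sum produces exactly the factor $\sum_i|\widehat{T}_i-T_i|$ rather than an $\ell_2$ norm. (If one instead assumes the joint bound $\bigl|g(x)-g(y)\bigr|\le H_g\|x-y\|_2$, the same argument goes through using $\|x-y\|_2\le\sum_i|x_i-y_i|$.) Everything else is the standard bias--variance split followed by Jensen's inequality, applied termwise in $i$; in particular no independence among the $\widehat{T}_i$ is needed.
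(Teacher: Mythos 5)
Your proof is correct and follows essentially the same route as the paper: a coordinatewise telescoping of the Lipschitz hypothesis followed by the per-coordinate bound $\mathbb{E}\bigl|\widehat{T}_i-T_i\bigr|\le\sqrt{\mathbb{V}[\widehat{T}_i]}+\bigl|\mathbb{B}[\widehat{T}_i]\bigr|$ via the bias--variance split and Jensen's inequality. The only difference is that the paper delegates this last single-variable step to Lemma~3.2 of \citet{noshad2017direct}, whereas you prove it directly (and your pointwise bound before taking expectations is, if anything, slightly cleaner than the paper's displayed telescoping step).
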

\begin{align}
\left|\mathbb{E}\left[g(\widehat{T}_1,\ldots,\widehat{T}_\lambda)-g(T_1,\ldots,T_\lambda)\right]\right|  &\leq \sum_{i=1}^\lambda \left|\mathbb{E}\left[g(\widehat{T}_1,\ldots,\widehat{T}_{i},T_{i+1},\ldots,T_\lambda)-g(T_1,\ldots,T_\lambda)\right]\right| \nonumber\\
&\leq \sum_{i=1}^\lambda  H_g \left(\sqrt{\mathbb{V}[\widehat{T}_i]} + \left|\mathbb{B}[\widehat{T}_i]\right|\right),
\end{align}
where in the last inequality we have used Lemma 3.2 in \citep{noshad2017direct}, by assuming that $g$ is only a function of $\widehat{T}_i$.

Now, we plug $\widehat{U}^{\bw}_i$ defined in \eqref{def_Ui_w} into $\widehat{T}_{i}$ in \eqref{multi_bias_bound}. Using equation \eqref{eq:weightedestimator} and the fact that $\mathbb{V}_{\bar{X_{i}}}[\widehat{U}^{\bw}_i|X_{2,i}]=O(1/N_1)$ (as mentioned in Appendix \ref{section:proofofensemble_theorem}), concludes the bias proof.

\textbf{Variance proof:}
Without loss of generality, we assume that $N_\lambda =\max(N_1,N_2,\ldots,N_\lambda )$. We consider $(N_\lambda -N_l)$ virtual random nodes $X_{l,N_l+1},\ldots,X_{l,N_\lambda }$ for $1\le l\le \lambda -1$ which follow the same distribution as $X_{l,1},\ldots,X_{l,N_l}$. Let $Z_i:=(X_{1,i},X_{2,i},\ldots,X_{\lambda ,i})$. Now we consider $\bZ:=(Z_1,\ldots,Z_{N_\lambda })$ and another independent copy of $\bZ$ as  $\bZ':=(Z'_1,\ldots,Z'_{N_\lambda })$, where $Z_i:=(X_{1,i}',X_{2,i}',\ldots,X_{\lambda ,i}')$. Let $\bZ^{(i)}:=(Z_1,\ldots,Z_{i-1},Z'_i,Z_{i+1},\ldots,Z_{N_\lambda })$ and $\mathcal{E}_k(\bZ):=\mathcal{E}_k(\bX_1,\bX_2,\ldots,\bX_\lambda )$. %For $\bZ^{(1)}$, let $N_{j,i}^{(1)}$ be the number of points belonging to $\bX_j$ in the $k$-nearest neighbors of $X_{\lambda ,i}$.
Let
\begin{align}
 &B_{\alpha,i} :=
\tilde{t}\bigg(\widehat{U}^{\bw}_{(1/\lambda)}(X_{\lambda,i}),\widehat{U}^{\bw}_{(2/\lambda)}(X_{\lambda,i}),\ldots,\widehat{U}^{\bw}_{((\lambda-1)/\lambda))}(X_{\lambda,i})\bigg)\nonumber\\
&\qquad \qquad - \tilde{t}\bigg(\widehat{U}^{\bw}_{(1/\lambda)}(X_{\lambda,i}'),\widehat{U}^{\bw}_{(2/\lambda)}(X_{\lambda,i}'),\ldots,\widehat{U}^{\bw}_{((\lambda-1)/\lambda)}(X_{\lambda,i}')\bigg).
\end{align}
%\begin{align}
% B_{\alpha,i}&\, :=
% \tilde{t}\Bigg(\frac{ N_{1,i}\eta_{1,\lambda }}{N_{\lambda ,i}+1},\frac{N_{2,i}\eta_{2,\lambda }}{N_{\lambda ,i}+1},\ldots,
%\frac{N_{\lambda -1,i}\eta_{\lambda -1,\lambda }}{N_{\lambda ,i}+1}\Bigg)\nonumber\\
%&- \tilde{t}\Bigg(\frac{ N_{1,i}^{(1)}\eta_{1,\lambda }}{N_{\lambda ,i}^{(1)}+1},\frac{N_{2,i}^{(1)}\eta_{2,\lambda }}{N_{\lambda ,i}^{(1)}+1},\ldots,
%\frac{N_{\lambda -1,i}^{(1)}\eta_{\lambda -1,\lambda }}{N_{\lambda ,i}^{(1)}+1}\Bigg).
%\end{align}
%%\vspace{-0.1cm}
We have
\begin{align}\label{variance_main}
&\frac{1}{2} \sum_{i=1}^{N_\lambda } \mathbb{E}\left[\left(\mathcal{E}_k(\bZ)-\mathcal{E}_k(\bZ^{(i)})\right)^2\right]
=\frac{1}{2N_\lambda }\mathbb{E}{\left[\sum_{i=1}^{N_\lambda } B_{\alpha,i} \right]^2} \nonumber\\
&\qquad \qquad =\frac{1}{2N_\lambda }\sum_{i=1}^{N_\lambda } \mathbb{E}[B_{\alpha,i}^2]+ \frac{1}{2N_\lambda }\sum_{i\neq j} \mathbb{E}[B_{\alpha,i}B_{\alpha,j}]=\frac{1}{2} \mathbb{E}[B_{\alpha,2}^2]+ \frac{N_\lambda }{2} \mathbb{E}[B_{\alpha,2}]^2.
\end{align}
The last equality follows from $\mathbb{E}[B_{\alpha,i}B_{\alpha,j}]=\mathbb{E}[B_{\alpha,i}]\mathbb{E}[B_{\alpha,j}]=\mathbb{E}[B_{\alpha,i}]^2$ for $i\neq j$. With a parallel argument in the proof of Lemma 4.10 in \citep{noshad2017direct}, we have
\begin{equation}
\mathbb{E}[B_{\alpha,2}]=O\left(\frac{\lambda }{N_\lambda }\right)\textrm{ and }\mathbb{E}[B_{\alpha,2}^2]=O\left(\frac{\lambda ^2}{N_\lambda }\right).
\end{equation}
Then applying Efron-Stein inequality, we obtain
\begin{align} \label{ESTa_Var_proof_1}
\mathbb{V}[\mathcal{E}_k(\bZ)] &\leq \frac{1}{2} \sum_{i=1}^M \mathbb{E}\left[\left(\mathcal{E}_k(\bZ)-\mathcal{E}_k(\bZ^{(i)})\right)^2\right]=O\left(\frac{\lambda ^2}{N_\lambda }\right).
\end{align}
Since the ensemble estimator is a convex combination of some single estimators, the proof is complete.

%%%%%%%%%%%%%%%%%%%%%%%%%%%%%%%%%%%%
%%%%%%%%%%%%%%%%%%%%%%%%%%%%%%%%%%%
\section{Supplementary Numerical Results}\label{sec:supp_num_results}

In this section we perform extended experiments on the proposed benchmark learner. We perform experiments on different simulated datasets with Gaussian, beta, Rayleigh and concentric distributions of various dimensions of up to $d=100$. 
%We compare the ensemble benchmark learner with different weight assigning methods such as Chebyshev nodes, arithmetic (or integer) nodes and a non-optimal method of uniform weights with arithmetic nodes. We also compare the benchmark learner to previous lower and upper bounds on the Bayes error based on HP-divergence \eqref{HP_bound}, as well as to a few powerful classifiers on different classification problem. The proposed benchmark learner is applied to the MNIST dataset with $70k$ samples and $784$ features, learning theoretically the best achievable classification error rate. This is compared to reported performances of state of the art deep learning models applied on this dataset.

%%%%%%%%%%%%%%
Figure \ref{fig:optimal_weights} represents the scaled coefficients of the base estimators and their corresponding weights in the ensemble estimator using the arithmetic and Chebyshev nodes for (a) $d=10$ ($L=11$) and (b) $d=100$ ($L=101$). The optimal weights for the arithmetic nodes decreases monotonically. However, the optimal weights for the Chebyshev nodes has an oscillating pattern. 

\begin{figure}
  \centering
  \subfigure[$d=10$]{\centering\qquad\qquad\quad\includegraphics[width=0.78\textwidth]{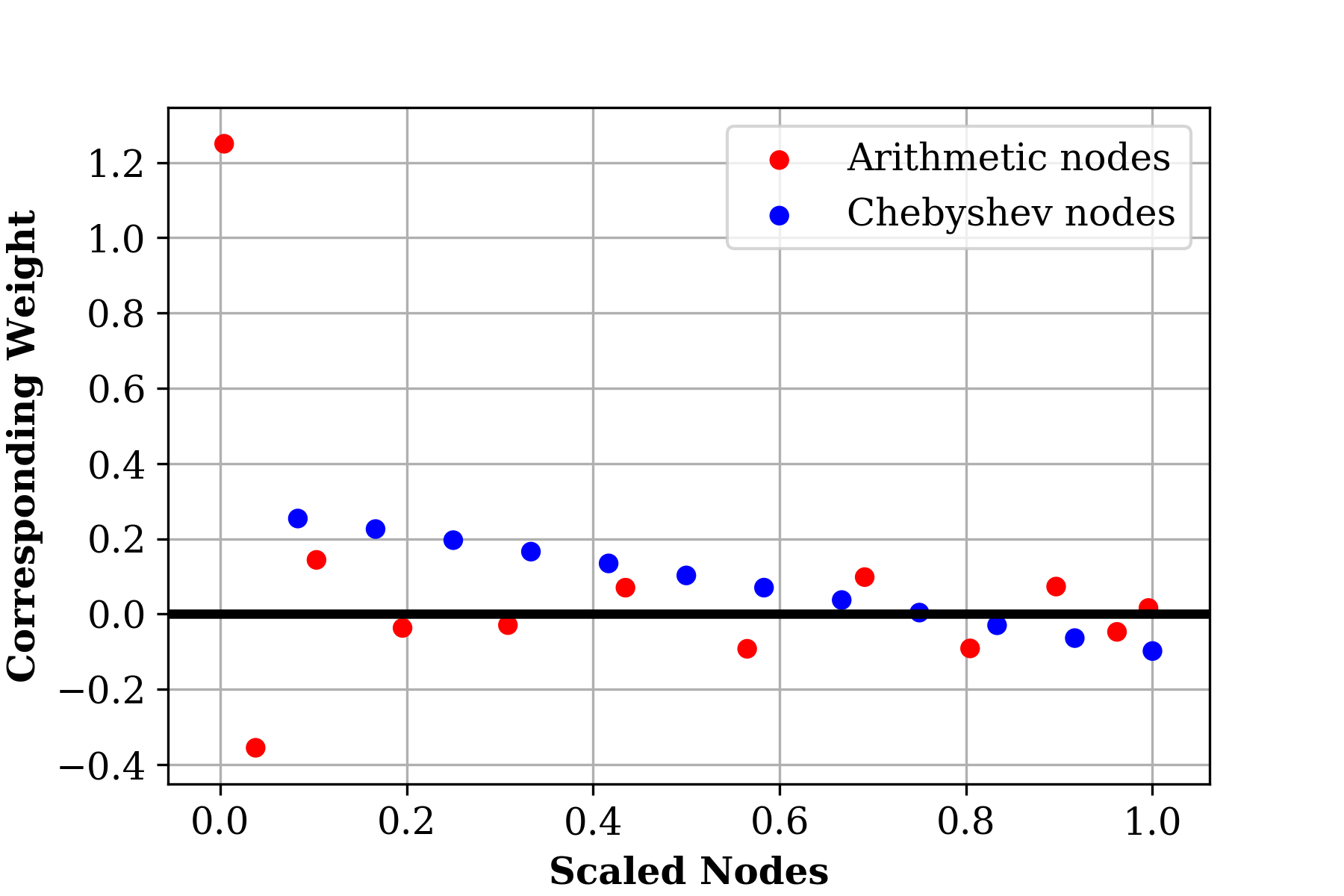}}\newline
  \subfigure[$d=100$]{\centering\includegraphics[width=0.7\textwidth]{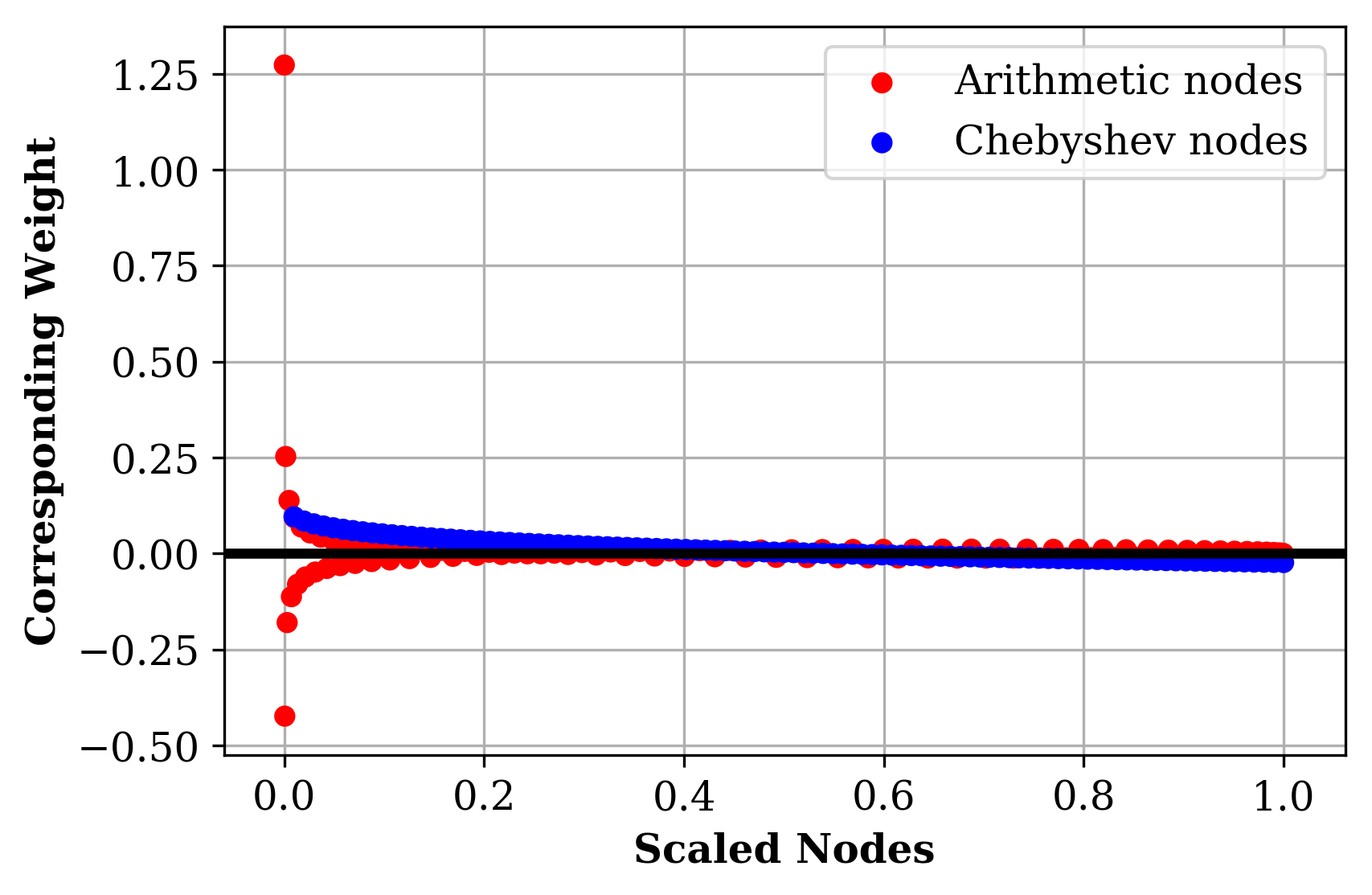}}
\caption{The scaled coefficients of the base estimators and their corresponding optimal weights in the ensemble estimator using the arithmetic and Chebyshev nodes for (a) $d=10$ and (b) $d=100$. The optimal weights for the arithmetic nodes decreases
monotonically. However, the optimal weights for the Chebyshev nodes has an oscillating
pattern.}
\label{fig:optimal_weights}
\end{figure}
%%%%%%%%%%%%%
In Figures \ref{fig:cmp_estimator_d10} and \ref{fig:cmp_estimator_d100} we consider binary classification problems respectively with $4$-dimensional and $100$-dimensional isotropic normal distributions with covariance matrix $\sigma \mathbf{I}$, where the means are separated by 2 units in the first dimension.
We plot the Bayes error estimates for different methods of Chebyshev, arithmetic and uniform weight assigning methods for different sample sizes, in terms of (a) MSE rate and (b) mean estimates with $\%95$ confidence intervals. Although both the Chebyshev and arithmetic weight assigning methods are asymptotically optimal, in our experiments the benchmark learner with Chebyshev nodes has a better convergence rate for finite number of samples. For example in Figures \ref{fig:cmp_estimator_d10} and \ref{fig:cmp_estimator_d100}, for $1600$ samples, MSE of the Chebyshev method is respectively $\%10$ and $\%92$ less than MSE of the arithmetic method.
%For example, for $1600$ samples, Chebyshev and arithmetic methods respectively result the MSE of $0.010$ and $0.011$.
\begin{figure}
  \centering
  \subfigure[Mean square error]{\centering\qquad\quad\includegraphics[width=0.7\textwidth]{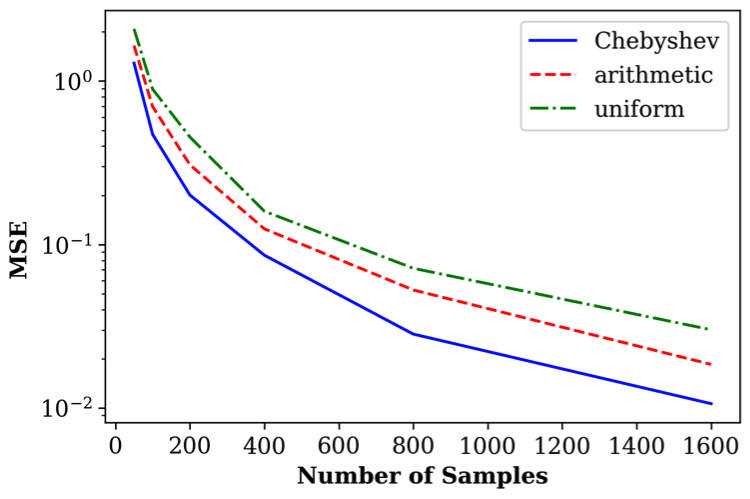}}\newline
  \subfigure[Mean estimates with $\%95$ confidence intervals]{\centering\includegraphics[width=0.7\textwidth]{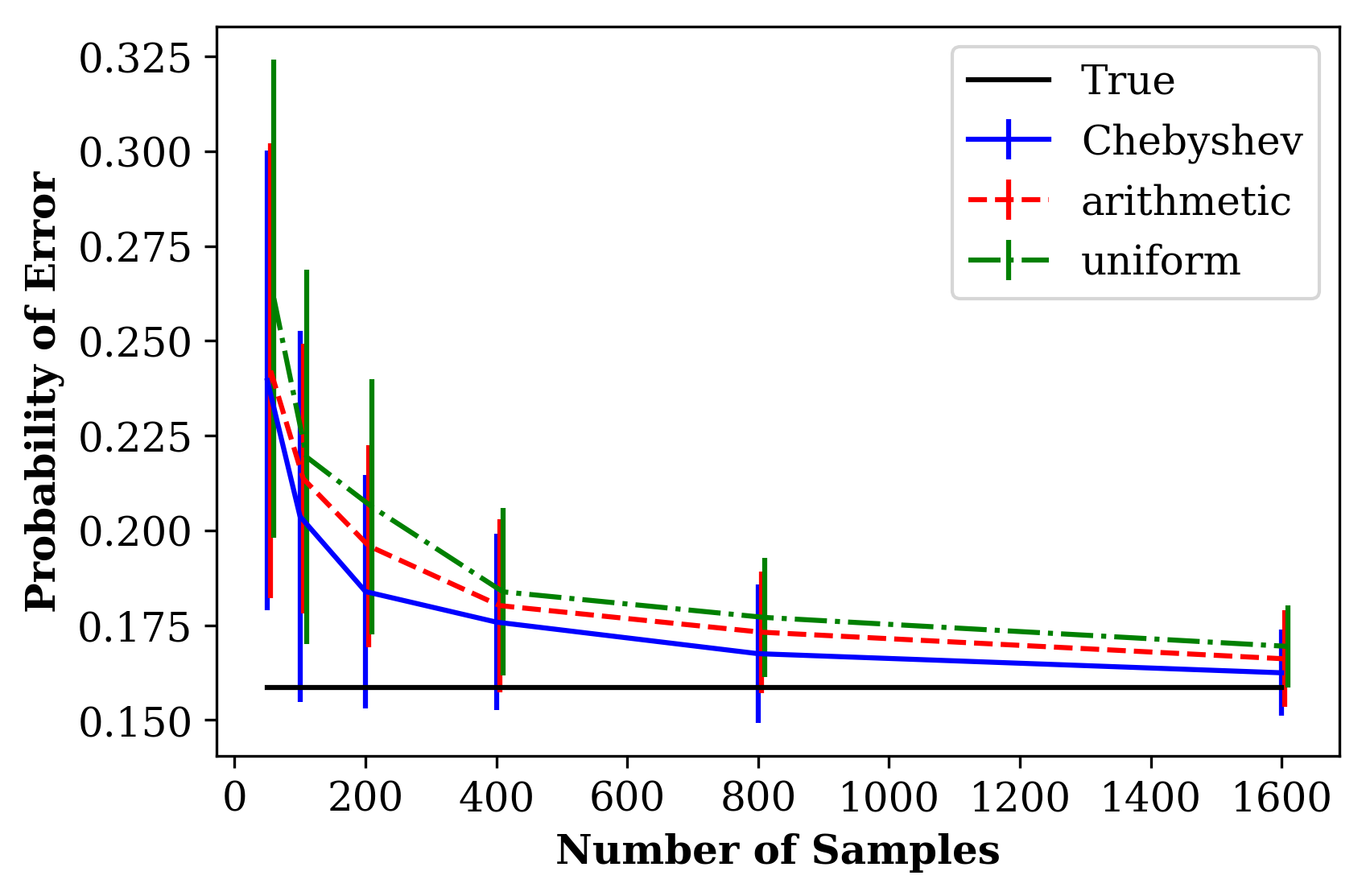}}
\caption{Comparison of the Bayes error estimates for different methods of Chebyshev, arithmetic and uniform weight assigning methods for a binary classification problem with 4-dimensional isotropic normal distributions. The Chebyshev method provides a better convergence rate.}
%with covariance matrix $\sigma \mathbf{I}$, where the means are shifted by 2 units in the first dimension.}
\label{fig:cmp_estimator_d10}
\end{figure}
%%%%%%%%%%%%%%%%%%%%%%%%%%%%%
\begin{figure}
  \centering
  \subfigure[Mean square error]{\centering\qquad\quad\includegraphics[width=0.63\textwidth]{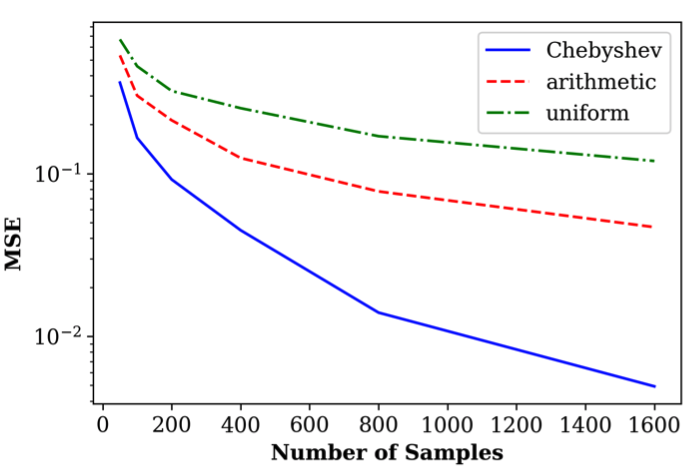}}\newline
  \subfigure[Mean estimates with $\%95$ confidence intervals]{\centering\includegraphics[width=0.7\textwidth]{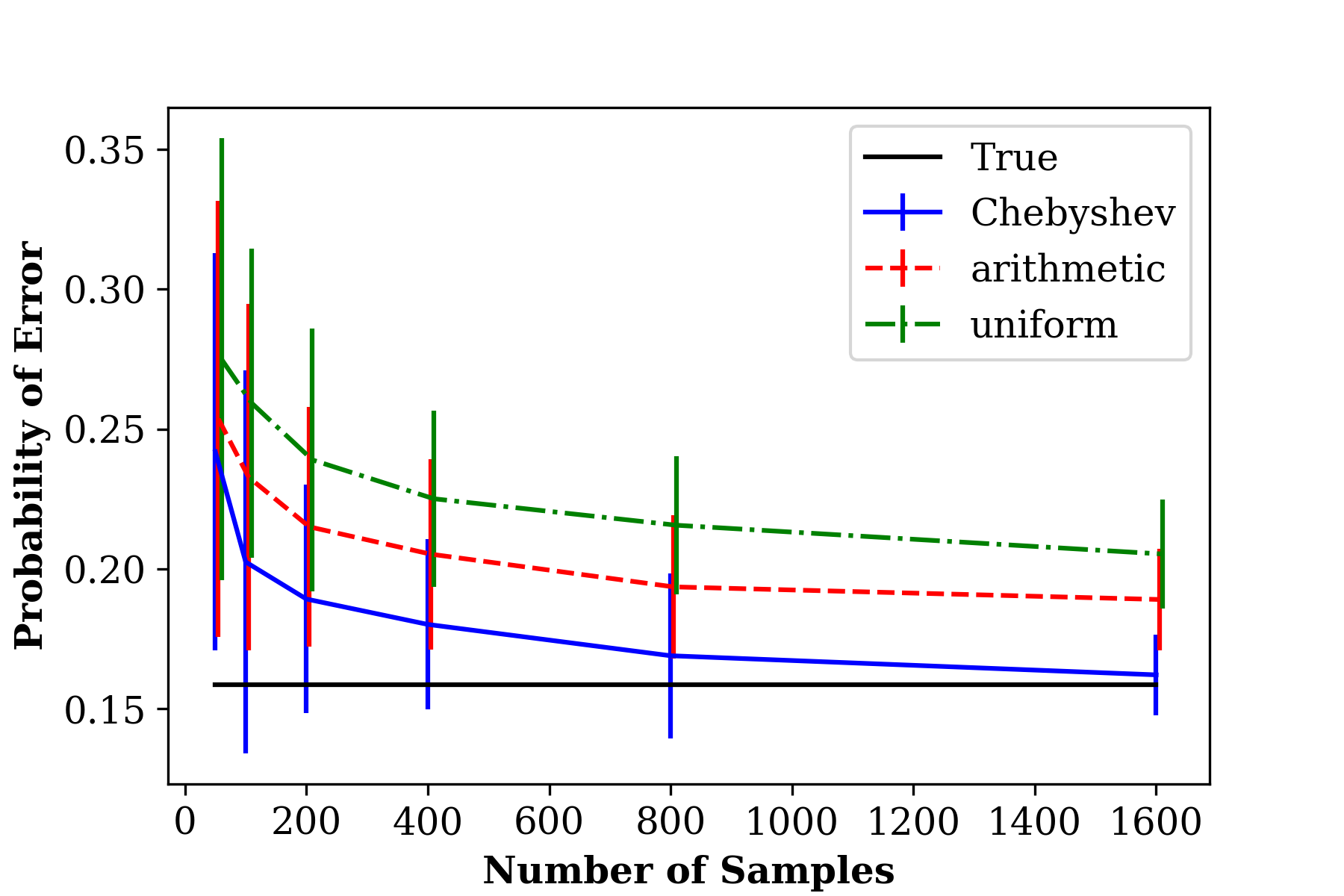}}
\caption{Comparison of the Bayes error estimates for different methods of Chebyshev, arithmetic and uniform weight assigning methods for a binary classification problem with $100$-dimensional isotropic normal distributions. The Chebyshev method provides a better convergence rate compared to the arithmetic and uniform methods.}
%with covariance matrix $ \mathbf{I}$, where the means are shifted by 2 units in the first dimension.}
\label{fig:cmp_estimator_d100}
\end{figure}
%%%%%%%%%%%%%%%%%%%%%%%%%%%%%

In Figures \ref{fig:com_alpha_normal} (a) and (b) we compare the Bayes error estimates for ensemble estimator with Chebyshev nodes with different scaling coefficients $\alpha = 0.1,0.3,0.5,1.0$ for binary classification problems respectively with $10$-dimensional and $50$-dimensional isotropic normal distributions with covariance matrix $2 \mathbf{I}$, where the means are separated by 5 units in the first dimension.

\begin{figure}
  \centering
  \subfigure[Mean square error]{\centering\qquad\quad\includegraphics[width=0.63\textwidth]{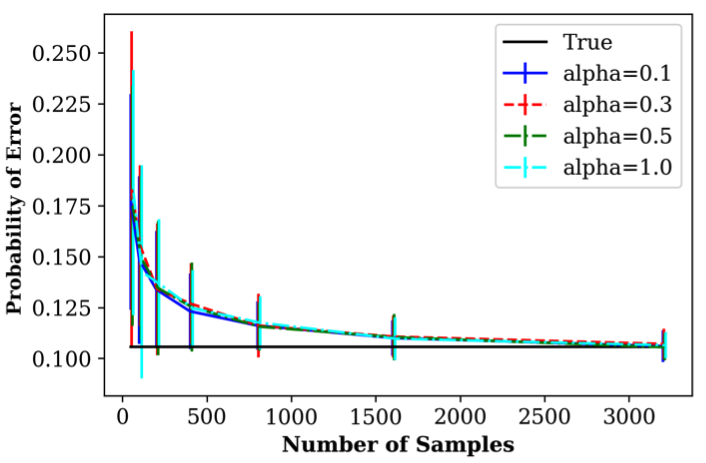}}\newline
  \subfigure[Mean estimates with $\%95$ confidence intervals]{\centering\includegraphics[width=0.7\textwidth]{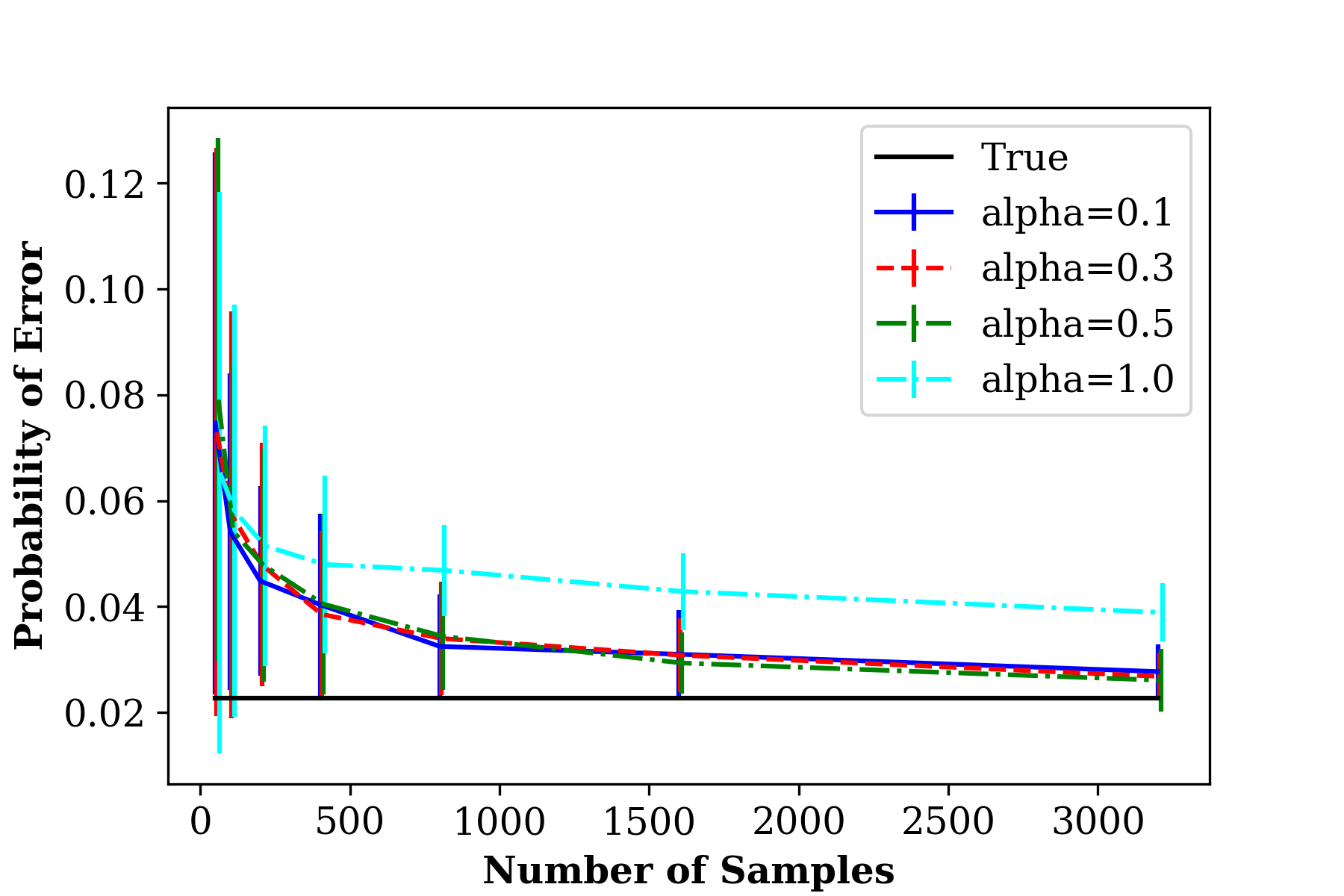}}
\caption{Comparison of the Bayes error estimates for ensemble estimator with Chebyshev nodes with different scaling coefficients $\alpha = 0.1,0.3,0.5,1.0$ for binary classification problems with (a) $10$-dimensional and (b) $100$-dimensional isotropic normal distributions with covariance matrix $ 2\mathbf{I}$, where the means are shifted by 5 units in the first dimension.}
\label{fig:com_alpha_normal}
\end{figure}

%%%%%%%%%%%%
Figure~\ref{fig:comp_alpha_beta} compares of the Bayes error estimates for ensemble estimator with Chebyshev nodes with different scaling coefficients $\alpha = 0.1,0.3,0.5,1.0$ for a $3$-class classification problems, where the distributions of each class are $50$-dimensional beta distributions with parameters $(3,1)$, $(3,1.5)$ and $(3,2)$. All of the experiments in Figures \ref{fig:com_alpha_normal} and \ref{fig:comp_alpha_beta} show that the performance of the estimator does not significantly vary for the scaling factor in the range $\alpha\in [0.3,0.5]$ and a good performance can be achieved for the scaling factor $\alpha\in [0.3,0.5]$.

\begin{figure}[!h]
	\centering
	\includegraphics[width=0.65\textwidth]{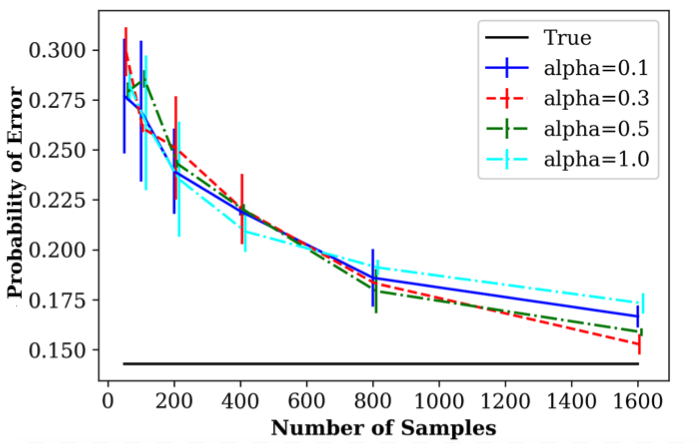}
	%\vspace{-0.5cm}
	\caption{Comparison of the Bayes error estimates for ensemble estimator with Chebyshev nodes with different scaling coefficients $\alpha = 0.1,0.3,0.5,1.0$ for a $3$-class classification problems, where the distributions of each class are $50$-dimensional beta distributions with parameters $(3,1)$, $(3,1.5)$ and $(3,2)$.
}\label{fig:comp_alpha_beta}
\end{figure}

%%%%%%%%%%%%%%%%%%%%%%%
Figure~\ref{fig:cmp_bound_2} compares the optimal benchmark learner with the Bayes error lower and upper bounds using HP-divergence, for
a $3$-class classification problem with $10$-dimensional Rayleigh distributions with parameters $a=2,4,6$.
While the HP-divergence bounds have a large bias, the proposed benchmark learner converges to the true value by increasing sample size.

%, and (b) a $3$-class classification problem with $10$-dimensional Rayleigh distributions with parameters $a=2,4,6$. Both of these experiments express the advantage of the proposed benchmark learner in convergence to the true value compared to the HP lower and upper bounds stated in \eqref{HP_bound}.

\begin{figure}
  \centering
  %\subfigure[Mean square error]{\centering\qquad\q uad\includegraphics[width=0.7\textwidth]{fig/cmp_bound_normal_d10.png}}\newline
  %\subfigure[Mean estimates with $\%95$ confidence intervals]{\centering
  \includegraphics[width=0.7\textwidth]{}
\caption{Comparison of the optimal benchmark learner (Chebyshev method) with the Bayes error lower and upper bounds using HP-divergence, for a $3$-class classification problem with $10$-dimensional Rayleigh distributions with parameters $a=2,4,6$. While the HP-divergence bounds have a large bias, the proposed benchmark learner converges to the true value by increasing sample size.}
\label{fig:cmp_bound_2}
\end{figure}
%%%%%%%%%%%%

%%%%%%%%%%%%
In Figure \ref{fig:cmp_clsfr_normal} we compare the optimal benchmark learner (Chebyshev method) with XGBoost and Random Forest classifiers, for a 
$4$-class classification problem $100$-dimensional isotropic mean-shifted Gaussian distributions with identity covariance matrix, where the means are shifted by $5$ units in the first dimension. The benchmark learner predicts the error rate bound better than XGBoost and Random Forest classifiers. 

%(a) $4$-class classification problem $10$-dimensional isotropic mean-shifted Gaussian distributions with identity covariance matrix, where the means are shifted by $1$ units in the first dimension, and (b) 

\begin{figure}
  \centering
  %\subfigure[$4$-class classification problem $10$-dimensional isotropic mean-shifted Gaussian distributions]{\centering\qquad\quad\includegraphics[width=0.7\textwidth]{}}\newline
  %\subfigure[$4$-class classification problem $100$-dimensional isotropic mean-shifted Gaussian distributions]
  {\centering\includegraphics[width=0.7\textwidth]{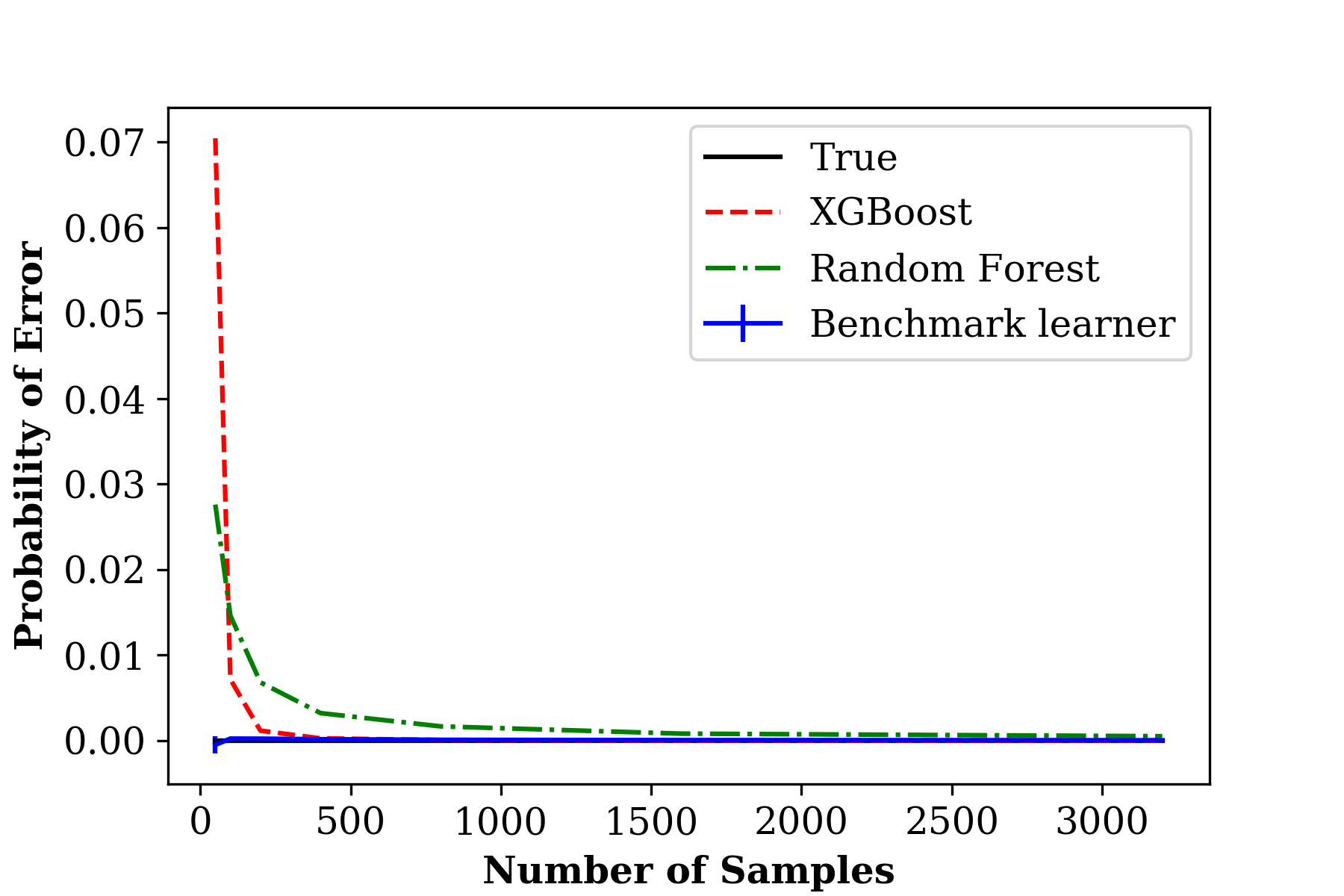}}
\caption{Comparison of the optimal benchmark learner (Chebyshev method) with XGBoost and Random Forest classifiers, for a $4$-class classification problem $100$-dimensional isotropic mean-shifted Gaussian distributions with identity covariance matrix, where the means are shifted by $5$ units in the first dimension. The benchmark learner predicts the Bayes error rate better than XGBoost and Random Forest classifiers. }
%(a) $4$-class classification problem $10$-dimensional isotropic mean-shifted Gaussian distributions with identity covariance matrix, where the means are shifted by $1$ units in the first dimension, and (b)
\label{fig:cmp_clsfr_normal}
\end{figure}

%%%%%%%%%%%%%%%%%%%%%%%%%
\bibliography{references}

\end{document}